\documentclass{article}

\usepackage[preprint]{neurips_2021}
\usepackage{microtype}
\usepackage{graphicx}
\usepackage{mathtools}
\usepackage{amssymb, multirow, paralist, color,amsmath,amsthm}
\usepackage{subfigure}
\usepackage{booktabs} 
\usepackage{algorithm,algorithmic}
\usepackage{amssymb, multirow, paralist, color}
\newtheorem{thm}{Theorem}

\newtheorem{lemma}{Lemma}
\newtheorem{cor}{Corollary}

\newtheorem{ass}{Assumption}
\usepackage{enumitem}
\usepackage{textcase}
\usepackage[T1]{fontenc}
\usepackage{makecell}
\usepackage{fancybox}
\usepackage{listings}


\usepackage{hyperref}



\def \S {\mathbf{S}}
\def \A {\mathcal{A}}
\def \B {\mathcal{B}}

\def \R {\mathbb{R}}

\def \w {\mathbf{w}}
\def \v {\mathbf{v}}
\def \m {\mathbf{m}}
\def \t {\mathbf{t}}
\def \x {\mathbf{x}}

\def \E {\mathbb{E}}

\def \x {\mathbf{x}}

\def \a {\mathbf{a}}

\def \b {\mathbf{b}}

\def \1 {\mathbf{1}}

\newcommand{\Norm}[1]{\left\|#1\right\|}

\def \z {\mathbf{z}}

\def \u {\mathbf{u}}

\def \P {\mathcal{P}}

\def \bg {\mathbf{g}}

\def \B {\mathcalB}

\def \x {\mathbf{x}}

\def \D {\mathcal{D}}
\def \z {\mathbf{z}}
\def \u {\mathbf{u}}

\def \w {\mathbf{w}}

\def \R {\mathbb{R}}
\def \S {\mathcal{S}}

\def \A {\mathcal{A}}

\def \v {\mathbf{v}}

\def \a {\mathbf{a}}
\def \b {\mathbf{b}}

\def \B {\mathcal{B}}

\def \t {\mathbf{t}}

\def\inner#1#2{\left\langle #1, #2 \right\rangle}

\newcommand\myfontsize{\fontsize{16pt}{18pt}\selectfont}

\definecolor{commentcolor}{RGB}{110,154,155}   
\newlength\myindent
\setlength\myindent{2em}

\definecolor{codegreen}{rgb}{0.313, 0.498, 0.498} 
\definecolor{codegray}{rgb}{0.5,0.5,0.5}
\definecolor{codepurple}{rgb}{0.58,0,0.82}
\definecolor{backcolour}{rgb}{0.95,0.95,0.92}

\lstdefinestyle{mystyle}{
    commentstyle=\color{codegreen},
    keywordstyle=\color{magenta},
    numberstyle=\tiny\color{codegray},
    stringstyle=\color{codepurple},
    basicstyle=\ttfamily\footnotesize,
    breakatwhitespace=false,         
    breaklines=true,                 
    captionpos=b,                    
    keepspaces=true,                 
    numbers=left,                    
    numbersep=5pt,                  
    showspaces=false,                
    showstringspaces=false,
    showtabs=false,                  
    tabsize=2
}

\lstset{style=mystyle}

\begin{document}
\title{\myfontsize{Provable Stochastic Optimization for Global Contrastive Learning: Small Batch Does Not Harm Performance}}

\author{%
 Zhuoning Yuan$^1$, Yuexin Wu$^2$, Zi-Hao Qiu$^3$, Xianzhi Du$^2$, \\ \textbf{Lijun Zhang$^3$, Denny Zhou$^2$, Tianbao Yang$^1$\thanks{Correspondence to zhuoning-yuan@uiowa.edu,  dennyzhou@google.com, tianbao-yang@uiowa.edu. The code for experiments is available at \url{https://github.com/Optimization-AI/sogclr}.}} \\
  $^1$Department of Computer Science, the University of Iowa \\
  $^2$Google Research \\
  $^3$National Key Laboratory for Novel Software Technology, Nanjing University \\
}

\maketitle

\begin{abstract}
In this paper, we study contrastive learning from an optimization perspective, aiming to analyze and address a fundamental issue of existing contrastive learning methods that either rely on a large batch size or a large dictionary of feature vectors. We consider a global objective for contrastive learning, which contrasts each positive pair with all negative pairs for an anchor point. From the optimization perspective, we explain why existing methods such as SimCLR require a large batch size in order to achieve a satisfactory result. In order to remove such requirement, we propose a memory-efficient \textbf{S}tochastic \textbf{O}ptimization algorithm  for solving the \textbf{G}lobal objective of \textbf{C}ontrastive \textbf{L}earning of \textbf{R}epresentations, named SogCLR. We show that its optimization error is negligible  under a reasonable condition  after a sufficient number of iterations  or is diminishing for a slightly different global contrastive objective. Empirically, we demonstrate that SogCLR with small batch size (e.g., 256) can achieve similar performance as SimCLR with large batch size (e.g., 8192) on self-supervised learning task on ImageNet-1K. We also attempt to show that the proposed optimization technique is generic and can be applied to solving other contrastive losses, e.g., two-way contrastive losses for bimodal contrastive learning. The proposed method is implemented in our open-sourced library LibAUC (\url{www.libauc.org}).
\end{abstract}

\section{Introduction}
\label{introduction}
Recently, self-supervised learning (SSL) for pre-training deep neural networks, which springs from natural language processing~\cite{mikolov2013efficient,devlin2018bert,lan2019albert},  has emerged to be a popular  paradigm in computer vision for learning visual representations~\cite{dosovitskiy2020image,zhu2020deformable,liu2021swin}. A simple yet effective framework of SSL for learning visual representations is contrastive learning~\cite{chopra2005learning, simclrv1}, which uses the gradient of a contrastive loss to update model, aiming to push the similarity scores between positive pairs (augmented data from the same image) to be higher than that between negative pairs (augmented data from different images). 

While the great performance of contrastive learning methods and their alternatives have been demonstrated on popular benchmarks (e.g., ImageNet), some fundamental problems of contrastive learning remain unresolved. One such problem is the requirement for large batch size. Unlike supervised learning methods, the performance of
SimCLR~\cite{simclrv1} decreases as the batch size decreases, and a satisfactory performance can be only achieved with a large batch size on natural image  datasets (e.g., 8192 for ImageNet). However, in practice, training models with such a large batch size can be memory-intensive and requires more computational resources, especially when adopting large-scale backbones (e.g., Vision Transformers~\cite{Dosovitskiy2021AnII, Zhai2021ScalingVT}) or taking video sequences as input~\cite{Qian2021SpatiotemporalCV}.

To address this issue, some ad-hoc approaches have been investigated. For example, the MoCo method~\cite{mocov1} uses a large dictionary to maintain a set of feature vectors for constructing negative pairs with data in the mini-batch. Other approaches choose to get around such issue by optimizing pairwise loss~\cite{byol,pmlr-v139-zbontar21a,DBLP:conf/cvpr/ChenH21} or other losses~\cite{swav}. Nevertheless, the fundamental issue of optimizing a contrastive loss with a large batch size requirement still exists. This also occurs in other tasks with a similar contrastive loss, e.g., bimodal SSL tasks by optimizing a two-way contrastive loss (e.g., CLIP~\cite{clip}).

In this paper, we aim to address this fundamental problem from the optimization perspective by considering a global objective for contrastive learning, providing a rigorous analysis to explain why SimCLR requires a large mini-batch size, and designing a memory-efficient stochastic algorithm for optimizing the global contrastive objective with provable convergence guarantee under a reasonable condition.  Our major contributions are summarized below: 
\begin{itemize}
      \item We propose a global objective for contrastive learning, in which the similarity score between a random positive pair of an anchor point is contrasted with that between the anchor point and all other images and their augmented data. We cast the problem as a special case of  coupled compositional stochastic optimization  by highlighting the challenges in designing stochastic algorithms. 
    \item We analyze SimCLR from the perspective of optimizing the global contrastive objective, and show that it suffers from 
    an optimization error of SimCLR in the order of $O(1/\sqrt{B})$ for the objective's gradient norm even with the number of iterations approaching infinity, which explains the phenomena that SimCLR's performance degrades as the mini-batch size decreases.
    \item We propose a memory-efficient stochastic algorithm named SogCLR without relying on a large batch size. We establish the convergence of the proposed algorithm SogCLR and show that its optimization error for the aforementioned global contrastive objective is negligible under a mild condition. Moreover, we show that SogCLR converges to a stationary solution to a slightly different global contrastive objective with a diminishing optimization error as the number of iterations increases. 
    \item We demonstrate the empirical success of SogCLR on ImageNet-1K. With a standard mini-batch size 256 and the same other settings as SimCLR, by running 800 epochs, SogCLR achieves a performance of 69.4\% for top 1  linear evaluation accuracy, which is better than 69.3\% of SimCLR using a large batch size 8,192. The comparison between SimCLR and SogCLR by varying different batch sizes is shown in Figure~\ref{fig:impact_of_batch_size}. 
    
    \item We further incorporate other useful techniques into SogCLR, e.g., multi-crop augmentation and multiple MLP projection heads, and we are able to achieve 72.5\% top-1 linear evaluation accuracy on ImageNet-1K, which is competitive with existing listwise contrastive loss based SSL methods using a large dictionary (e.g., MoCo-v2). We further demonstrate the usefulness of the proposed technique for bimodal contrastive learning, e.g., CLIP. 
\end{itemize}
Finally, we would like to emphasize that to the best of our knowledge, this is the first work that analyzes SimCLR and a stochastic algorithm for contrastive learning from an optimization perspective. We expect that this paper would inspire new  studies by proposing better algorithms for optimizing the global contrastive objective. 
\section{Related Work}
We would like to point out that SSL is an emerging field and there are tremendous studies proposing different methodologies.  Nevertheless, we focus our attention on different methodologies for contrastive SSL.  

{\bf Contrastive Losses.} There are multiple definitions of contrastive loss, including pairwise losses, and listwise losses. The notation of contrastive loss dates back to 15 years ago for dimensionality reduction~\cite{10.1109/CVPR.2006.100}, which uses pairwise contrastive losses that simply push the similarity scores between positive pairs to be high and that between negative pairs to be low. Listwise contrastive losses have been proposed in the context of distance metric learning~\cite{sohn2016improved}, which contrasts a similarity score between an anchor point and a positive sample with  a number of similarity scores between the anchor point and multiple  negative samples. \cite{oord2018representation} is a pioneering work that uses a contrastive loss for unsupervised representation learning. They propose a contrastive loss based on noise contrastive estimation (NCE), which is called InfoNCE. It was used to learn representations by predicting the future in the latent space by using autoregressive model. However, a fundamental issue regarding how to select the negative samples and how it affects the learning performance was not studied in~\cite{oord2018representation}. 

\textbf{Contrastive SSL.} The InfoNCE loss  was later adopted in the momentum contrast (MoCo) method~\cite{mocov1} for SSL of visual representations. MoCo tackles the question of how to construct negative samples in the latent space.  It introduces two techniques (i) a momentum encoder network, which is used to generate representations of images for contrast with that generated by the target network on the anchor points, and is updated by a momentum step; (ii)  a large dictionary that stores a number of feature representations for constructing negative pairs that are generated by the momentum encoder network, and is updated by a queue structure in a FIFO fashion.  Later, the large dictionary and momentum contrast was abandoned in SimCLR~\cite{simclrv1}, which uses a large batch to sample data for constructing positive and negative pairs within the batch. SimCLR makes several contributions for improving the performance, in particular using strong data augmentations and MLP projection layers. SimCLR conducted extensive experiments by studying how the batch size and other factors (e.g. number of epochs) affect the performance and a key observation is that the performance degrades as the mini-batch size decreases.  Although a large-batch size is preferred or not an issue in industrial setting, the fundamental issue of requiring large batch size is still not well addressed. 

\begin{figure}[t]
\centering
\includegraphics[scale=0.5]{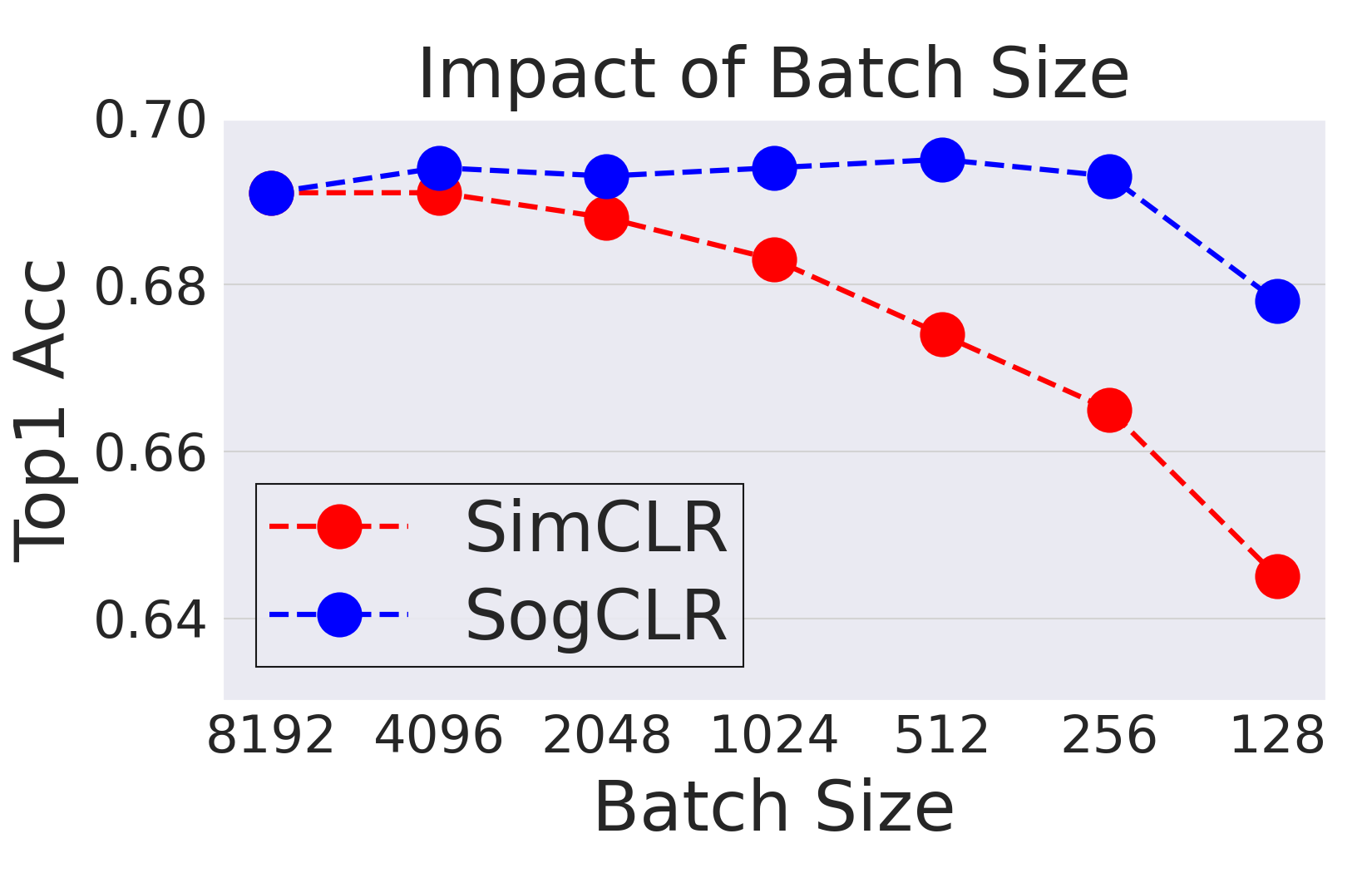}
\vspace{-0.1in}
\caption{Impact of batch size. X-axis is batch size. Y-axis is linear evaluation accuracy with 800-epoch pretraining with ResNet-50 on ImageNet-1K.}
\label{fig:impact_of_batch_size} 
\vspace{-0.3in}
\end{figure} 


\textbf{Improvements on top of MoCo or SimCLR}. Recently, there have been some efforts made for improving MoCo and SimCLR. MoCo-v2~\cite{mocov2} is an improved version of MoCo by adopting strong data augmentations and multiple MLP projection heads as in SimCLR. MoCo-v2 (with ResNet-50 as backbone) achieves 71.1\% top-1 linear evaluation accuracy on ImageNet-1K with small batch size of 256, which suppresses the SimCLR’s 69.3\% with larger batch size of 8192. 

Several works have tried to approach SSL from the InfoMax principle, i.e., maximizing mutual information between different views and the shifted negative InfoNCE loss is a lower bound of  mutual information~\cite{tian2020makes,chen2021simpler,zhu2020eqco}.  \cite{tian2020makes} studies the effects of data augmentations from the perspective of mutual information between different views, and uses more and stronger data augmentations  on the top of MoCo-v2 framework, which achieves 73\% top-1 linear evaluation accuracy on Imagenet-1K with ResNet-50. \cite{chen2021simpler} clarifies why InfoNCE loss fails under small batch-size settings by showing that the negative InfoNCE loss approaches saturation (e.g., $\log(B)$ where $B$ is batch size) after a few epochs.  To address this issue, they propose a self-normalized version of InfoNCE loss named FlatNCE, in which larger weights will prioritize harder negative samplers in a mini-batch to facilitate the models learning better representations. \cite{zhu2020eqco} proposes a similar technique from the viewpoint of mutual information lower bound to address the small batch collapse issue. In detail, they add a constant margin to offset the similarity score between positive pairs, which could help increase the weight of learning from hard negative samples.

\cite{li2020prototypical,dwibedi2021little, chuang2020debiased} improve InfoNCE in a different way. For example, \cite{li2020prototypical} proposes ProtoNCE loss that uses prototypical representations in place of the second view in InfoNCE loss, which are learned by clustering the data into a large number of clusters. \cite{dwibedi2021little} uses similar samples for a given image extracted from a support set maintained by a queue to improve the data diversity for training. \cite{chuang2020debiased} proposes a debiased contrastive loss to tackle the sampling bias (false negative samples) and they observe sampling negative samples from the truly different labels improves the performance. We compare our method with several InfoNCE-loss based SSL methods in Table~\ref{tab:ssl_all} with top-1 linear evaluation accuracy reported on ImageNet-1K. 

{\bf {Global Contrastive Loss}.} The global contrastive loss that contrasts a positive pair with all possible negative pairs has been explored in the literature. For example, \cite{wang2020understanding} formulates the InfoNCE loss with the number of negative samples approaching infinity into two parts and explain them from the perspectives of alignment and uniformity, where alignment aims to keep similar positive pairs closer and uniformity aims to perverse the maximal information among all pairs by pushing them evenly distributed on the hypersphere. 
However, The challenge of handling a large number of components in the normalization term in the InfoNCE loss is not well addressed. One of the most well-known techniques to address this challenge is to use NCE~\cite{gutmann2010noise}, which reduces the problem to how to sample negative data and how many negative data are sufficient for obtaining satisfactory performance. Although the original paper of NCE shows that the approximation error of NCE decreases as the number of samples increases, it is unclear how it affects the performance of SSL for visual representations. In contrast, this paper provides an arguable better approach for tackling the global contrastive loss in the sense that (i) the performance does not hinge on how to sample negative samples and how many to sample; (ii) there is a stronger convergence guarantee for optimizing the global contrastive loss. 

\textbf{Evolution of State-of-the-art.} \cite{swav} proposes SwAV that solves a swapped prediction problem wherein the prototypical codes obtained from one data augmented view are predicted using the similarity scores between the other view's representations and the prototypical codes.  They also propose a multi-crop augmentation strategy, which significantly boosts the performance under small batch setting. In particular, SwAV achieves a top-1 linear evaluation accuracy of 75.3\% using a batch size of 256 on ImageNet-1K. \cite{byol} proposes a method named BYOL which minimizes a pairwise loss based on feature representations of two data augmented views from two neural networks, referred to as online and target
networks, respectively.  From an augmented view of an image, an online network is learned to predict the target network representation of the same image under a different
augmented view. The target network is updated by a momentum step as in MoCo. BYOL achieves  74.3\% top-1  linear evaluation  accuracy on ImageNet-1K with a ResNet-50.  Recently, a concurrent work \cite{ReLiCv2} attains state-of-the-art performance of top-1 linear evaluation accuracy (77.1\%) by using a combination of techniques, e.g., the InfoNCE loss as in SimCLR/MoCo, the online-target setup as in MoCo/BYOL, the multi-crop augmentations as in SwAV, and the invariance loss as in~\cite{ReLiCv1}. We also notice that larger backbones lead to better performance, e.g, ResNet-200~\cite{byol,ReLiCv2} and Vision Transformer~\cite{mocov3,li2021efficient}. We would like to point out that it is not our focus to leverage all of these different techniques to achieve state-of-the-art performance. But instead we focus on understanding the fundamental limits of optimizing the InfoNCE loss and providing an alternative yet effective strategy to make contrastive learning possible without using a large batch size, which is potentially useful for different methods, e.g., supervised contrastive learning~\cite{khosla2020supervised}, and bimodal contrastive learning mentioned below~\cite{clip}. 

Recently, a two-way contrastive loss has been used in bimodal contrastive learning~\cite{zhang2021contrastive,clip}, which takes paired image and text as input and aims to map them into a Euclidean space that are closer than non-observed image/text pairs.  The CLIP model~\cite{clip} uses this idea to train a model on a large image-text dataset and observe promising zero-shot prediction performance for downstream tasks. However, their approach also uses a very large batch size equal to 32,768.

\section{Optimizing Global Contrastive Objective}
{\bf Notations.} Let $\D=\{\x_1, \ldots, \x_n\}$ denote the set of training images, let $\mathcal P$ denote a set of data augmentation operators that can be applied to each image to generate a copy. Let $\A(\cdot)\in\mathcal P$ denote a random data augmentation operator, and let  $\x\in\D$ denote a random example from $\D$. 
Let $\S_i=\{\A(\x): \forall\A\in\P, \forall\x\in\D\setminus \{\x_i\}\}$ denote all training images including their augmented versions but excluding that of $\x_i$. Let $E(\cdot)$ denote the encoder network parameterized by $\w\in\R^d$ that outputs a normalized feature representation of an input image.  Below, $\A(\x_i)$ and $\A(\x_j)$ denote two independent random data augmentations applied to $\x_i$ and $\x_j$ independently. 

The SimCLR method is to update the model according to the gradient of the local contrastive loss that is defined over sampled mini-batch data. To this end, a random mini-batch of $B$ images $\B=\{\x_1, \cdots, \x_B\}$ are first sampled.  Then for each image $\x_i\in\B$, two random augmented data $\A(\x_i), \A'(\x_i)$ are generated by two randomly sampled data augmentations $\A, \A'\in\P$. Then the gradient is computed based on the following local contrastive loss for each data $\x_i$ and its symmetric one by switching $\A$ and $\A'$: 
\begin{align}\label{eqn:con0} 
L_{\B}(\w; \x_i, \A, \A') = -\ln\frac{\exp(E(\A(\x_i))^{\top}E(\A'(\x_i))/\tau)}{g(\w; \x_i, \A, \B)}, 
\end{align}
where $\tau$ is known as the temperature parameter, and 
\begin{align}\label{eqn:sampleg}
  g(\w; \x_i, \A, \B_i) & = \sum_{\z_j\in\B_i}(\exp(E(\A(\x_i))^{\top}E(\z_j)/\tau)  
\end{align}
and $\B_i = \{\A(\x_j), \A'(\x_j): \x_j\in \B\setminus\{\x_i\}\}$ denote the set of images that are generated by applying independent two random data augmentations to each image in $\B$ independently  excluding $\x_i$. 

\subsection{A Global Contrastive Objective: V1}
The local contrastive loss defined over the mini-batch samples hides the complexity for contrastive learning, which renders the SimCLR method sensitive to the mini-batch size. To address this issue, we propose a global contrastive objective.  To this end, we define the following global contrastive loss for each augmented data pair $(\A(\x_i), \A'(\x_i))$:
\begin{align}\label{eqn:con}
L(\w; \x_i, \A, \A') = -\ln\frac{\exp(E(\A(\x_i))^{\top}E(\A'(\x_i))/\tau)}{\varepsilon' + g(\w; \x_i, \A, \S_i)},
\end{align}
where $\varepsilon'>0$ is a small constant, which is introduced simply for the purpose of analysis to ensure the denominator that involves $g(\w; \x_i, \A, \S_i)$ is lower bounded~\footnote{We can also modify the definition of $\S_i$ to include $\A(\x_i)$ for ensuring $g(\w; \x_i, \A, \S_i)$ is lower bounded without adding $\varepsilon'$.}, and 
\begin{align}\label{eqn:g}
  g(\w; \x_i, \A, \S_i) & = \sum_{\z\in\S_i}(\exp(E(\A(\x_i))^{\top}E(\z)/\tau),
\end{align}
which contrasts the similarity score between each positive pair $E(\A(\x_i))^{\top}E(\A'(\x_i)$ with the similarity scores of negative pairs $E(\A(\x_i))^{\top}E(\z)$ for all $\z\in\S_i$. Based on the individual contrastive loss, we define the following {\bf global contrastive objective (GCO)} for minimization: 
\begin{align}\label{eqn:gcl}
   \min_{\w} F(\w) = \E_{\x_i\sim\D, \A, \A'\sim \P} [\tau L(\w; \x_i, \A, \A')]
\end{align}
where $\sim$ denotes a random sample, $L$ is multiplied by $\tau$ to ensure the gradient is not illy scaled. In contrast to another variant proposed in section~\ref{sec:4}, we refer to the above objective as the V1 GCO. 

To highlight the challenge for optimizing the global contrastive objective, we consider the calculation of the gradient of $\tau L(\w; \x_i, \A, \A')$ in terms of the parameters $\w$ of the encoder network $E$. 
\begin{align*}
    &\tau\nabla L(\w; \x_i, \A, \A')  =  - \nabla (E(\A(\x_i))^{\top}E(\A'(\x_i)))\\
    & \quad\quad+ \frac{\tau}{\varepsilon'  + g(\w; \x_i, \A, \S_i)}\nabla g(\w; \x_i, \A,  \S_i).
\end{align*}
It is notable that the first term can be easily computed by back-propogation. The challenge lies at computing the second term, where $g(\w; \x_i, \A, \S_i)$ involves a large number of examples in $\S_i$ that includes all images and their augmented data excluding that of $\x_i$. Due to the finite-sum structure of $g$ in~(\ref{eqn:g}), we can compute an unbiased estimator by sampling data from $\S_i$. Indeed, we can show that $\frac{1}{|\B_i|}g(\w; \x_i, \A,  \B_i)$ is an unbiased estimator of $\frac{1}{|\S_i|}g(\w; \x_i, \A, \S_i)$. SimCLR directly uses this mini-batch estimator to estimate $g(\w; \x_i, \A, S_i)$ and $\nabla g(\w; \x_i, \A, \S_i)$ in the above equation, yielding the following approximated gradient of $\tau L(\w; \x_i, \A, \A')$:
\begin{align}\label{eqn:g2}
    \tau\widehat\nabla L(\x_i, \A, \A')  &=  - \nabla (E(\A(\x_i))^{\top}E(\A'(\x_i)))\\
    & + \frac{\tau}{\varepsilon + g(\w; \x_i, \A,  \B_i)}\nabla g(\w; \x_i, \A,  \B_i),\notag
\end{align}
where $\varepsilon=\frac{|\B_i|\varepsilon'}{|\S_i|}$.  However, this quantity is a biased estimator of $\tau\nabla L(\w; \x_i, \A, \A')$ due to the non-linear function $ \frac{1}{\varepsilon+g(\w; \x_i, \A,  \B_i)}$.  

\subsection{SimCLR and its Convergence for V1 GCO}
The SimCLR method can be viewed as a mini-batch based stochastic method, which uses a gradient estimator that is the average of the estimator in~(\ref{eqn:g2}) for $\x_i$ in the sampled mini-batch. To analyze the optimization error of SimCLR, we first consider the following simplest update~\footnote{For simplicity, we do not include another similar term in the gradient estimator by switching $\A$ and $\A'$, which will not affect the analysis.}: 
\begin{align}\label{eqn:SimCLR}
    \w_{t+1} = \w_t - \eta\frac{1}{B}\sum_{\x_i\in\B}\widehat\nabla L(\x_i, \A, \A').
\end{align}
We establish the optimization error of the above update for $T$ iterations for optimizing the V1 GCO.
\begin{thm}\label{thm:1}
Assume $F$ is smooth, $g$ is smooth and Lipchitiz continuous, SimCLR with the update~(\ref{eqn:SimCLR}) ensures that $\E[\|\nabla F(\w_{t'})\|^2]\leq O(\frac{1}{\eta T} + \eta + \frac{1}{B})$ for a random $t'\in\{1,\ldots, T\}$.  
\end{thm}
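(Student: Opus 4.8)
The plan is to treat the update~(\ref{eqn:SimCLR}) as a biased stochastic gradient method on the nonconvex smooth objective $F$, and to control the bias carefully through the structure of the compositional loss. Writing $\g_t = \frac{1}{B}\sum_{\x_i\in\B}\widehat\nabla L(\x_i, \A, \A')$ for the stochastic direction actually used, I would first decompose $\g_t = \nabla F(\w_t) + \b_t + \n_t$, where $\b_t := \E_t[\g_t] - \nabla F(\w_t)$ is the bias and $\n_t := \g_t - \E_t[\g_t]$ is the zero-mean noise, the conditional expectation being over the mini-batch $\B$ and the augmentations. Then the standard descent lemma for an $L_F$-smooth function gives
\begin{align*}
F(\w_{t+1}) \le F(\w_t) - \eta \inner{\nabla F(\w_t)}{\g_t} + \frac{L_F\eta^2}{2}\|\g_t\|^2,
\end{align*}
and taking conditional expectation, $\E_t[\inner{\nabla F(\w_t)}{\g_t}] = \|\nabla F(\w_t)\|^2 + \inner{\nabla F(\w_t)}{\b_t} \ge \frac12\|\nabla F(\w_t)\|^2 - \frac12\|\b_t\|^2$. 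Combining and telescoping over $t=1,\dots,T$, dividing by $\eta T$, yields
\begin{align*}
\frac{1}{T}\sum_{t=1}^T \E\|\nabla F(\w_t)\|^2 \le O\!\left(\frac{F(\w_1)-\min F}{\eta T} + \eta\, \sup_t\E\|\g_t\|^2 + \sup_t\E\|\b_t\|^2\right).
\end{align*}

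The two quantities left to bound are therefore the second moment $\E\|\g_t\|^2$ and, crucially, the squared bias $\E\|\b_t\|^2$. For the second moment, I would use that $\nabla(E(\A(\x_i))^\top E(\A'(\x_i)))$ is bounded (encoder outputs are normalized, $g$ is Lipschitz), and that the ratio $\frac{\nabla g(\w;\x_i,\A,\B_i)}{\varepsilon + g(\w;\x_i,\A,\B_i)}$ is bounded because the denominator is bounded below by $\varepsilon>0$ (this is exactly why $\varepsilon'$ was introduced) and the numerator is bounded by Lipschitzness of $g$; hence $\E\|\g_t\|^2 \le O(1)$, contributing the $O(\eta)$ term. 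The heart of the argument is the bias. For a fixed $\x_i,\A$, write $G := \frac{1}{|\S_i|} g(\w;\x_i,\A,\S_i)$ (the true normalized denominator, shifted by $\varepsilon'/|\S_i|$) and $\widehat G := \frac{1}{|\B_i|} g(\w;\x_i,\A,\B_i)$ (its mini-batch estimate, shifted by $\varepsilon/|\B_i|$), so $\E[\widehat G] = G$ and similarly for the gradient $\nabla g$. The approximation gap in~(\ref{eqn:g2}) comes from replacing $\frac{\nabla G}{G}$ (up to the $\tau$ factor) by $\E\big[\frac{\widehat{\nabla g}}{\widehat G}\big]$. I would expand
\begin{align*}
\E\!\left[\frac{\widehat{\nabla g}}{\widehat G}\right] - \frac{\nabla G}{G} = \E\!\left[\frac{\widehat{\nabla g} - \nabla G}{\widehat G}\right] + \nabla G\,\E\!\left[\frac{1}{\widehat G} - \frac{1}{G}\right],
\end{align*}
and for the second piece use $\frac{1}{\widehat G} - \frac{1}{G} = \frac{(G-\widehat G)}{G\widehat G} = -\frac{(\widehat G - G)}{G^2} + \frac{(\widehat G - G)^2}{G^2 \widehat G}$; taking expectations, the linear term vanishes (unbiasedness), and what remains is controlled by $\E(\widehat G - G)^2 = \mathrm{Var}(\widehat G) = O(1/|\B_i|) = O(1/B)$, since $\widehat G$ is an average of $|\B_i| = \Theta(B)$ i.i.d.\ bounded terms and $G, \widehat G$ are bounded below by a positive constant. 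A symmetric treatment handles the first piece (covariance between $\widehat{\nabla g}$ and $\frac{1}{\widehat G}$ is $O(1/B)$). Hence $\|\b_t\| = O(1/\sqrt B)$ for the per-sample bias, and averaging over the mini-batch and over $\x_i\sim\D$ preserves $\E\|\b_t\|^2 = O(1/B)$. Substituting into the telescoped inequality gives $\frac1T\sum_t\E\|\nabla F(\w_t)\|^2 \le O(\frac{1}{\eta T} + \eta + \frac1B)$, and picking $t'$ uniformly at random from $\{1,\dots,T\}$ gives the claimed bound on $\E\|\nabla F(\w_{t'})\|^2$.

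The main obstacle I anticipate is making the bias bound fully rigorous rather than heuristic: the Taylor-type expansion of $1/\widehat G$ around $1/G$ needs a uniform lower bound on $\widehat G$ to control the remainder term $\frac{(\widehat G-G)^2}{G^2\widehat G}$, and this is precisely where the constant $\varepsilon$ (equivalently $\varepsilon'$) is indispensable --- without it $\widehat G$ could be arbitrarily small and the bias uncontrolled. One must also be careful that the "bias" here is a genuine, non-vanishing quantity: it does not go to zero as $T\to\infty$ or $\eta\to0$, which is the whole point of the theorem and the reason SimCLR's guarantee floors out at $O(1/B)$. A secondary technical point is bookkeeping the difference between the shift constants $\varepsilon'/|\S_i|$ in $G$ and $\varepsilon/|\B_i|$ in $\widehat G$; by the definition $\varepsilon = |\B_i|\varepsilon'/|\S_i|$ these are equal, so $\widehat G$ is exactly unbiased for $G$ and no extra deterministic error is introduced. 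Everything else --- smoothness of $F$, boundedness of $g$ and its gradient, the descent lemma, the telescoping --- is routine, so I would state the needed boundedness and lower-bound facts as consequences of the stated assumptions and spend the bulk of the write-up on the variance/bias estimate.
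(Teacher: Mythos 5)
Your proposal is correct and follows essentially the same route as the paper's proof: a descent-lemma analysis of a biased stochastic gradient method, in which the bias of the compositional estimator is traced to the mini-batch estimate of the denominator and bounded by its variance $O(1/B)$, while the $O(\eta)$ term comes from the bounded second moment of the update direction. The only real difference is cosmetic and lies in how the bias is bounded: the paper invokes Lipschitz continuity of $\nabla f(\cdot)=1/(\varepsilon_0+\cdot)$ (which encodes the same lower bound on the denominator that you obtain from $\varepsilon$) together with Young's inequality against $\|\nabla F(\w_t)\|$, so that only $\E|\widehat{G}-G|^2=O(1/B)$ is needed, whereas you expand $1/\widehat{G}$ to second order and cancel the linear term --- both arguments land on the same $O(1/B)$ floor.
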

{\bf Remark:} The above theorem implies that SimCLR suffers an optimization error at least in the order of $O(1/\sqrt{B})$ for the objective's gradient norm. Even with $T\rightarrow\infty$, its optimization error is always dominated by $O(1/\sqrt{B})$. This explains the phenomenon that the performance of SimLCR degrades as the mini-batch size decreases. The above theorem also implies in order to find an $\epsilon$-level stationary solution, i.e., $\E[\|\nabla F(\w_t')\|\leq \epsilon]$, we can set $\eta=O(\epsilon^2)$ and $T=O(1/\epsilon^4)$ and $B=O(1/\epsilon^2)$. All missing proofs can be found in the supplement. 


\subsection{SogCLR and its Convergence for V1 GCO}
To address the issue of SimCLR, in this section we propose a memory-efficient stochastic algorithm for solving~(\ref{eqn:gcl}) without suffering from a large optimization error depending on the batch size. To this end, we decompose the objective function into three terms: 
\begin{align}\label{eqn:egcl}
    &F(\w)  = \E_{\x_i\sim\D, \A, \A'\sim\P}(E(\A(\x_i))^{\top}E(\A'(\x_i)))\\
    &+\frac{\tau}{n}\sum_{\x_i\in\D}\E_{\A}\ln \left(\frac{\varepsilon'}{|\S_i|} + \frac{1}{|\S_i|}g(\w; \x_i, \A, \S_i)\right) + \text{Const},\nonumber
\end{align}
where $\text{Const}$ is a constant that is independent of the model parameters. Below, we let $f(\cdot)=\tau\ln(\varepsilon'/|\S_i|+\cdot)$.

Our algorithm is motivated by the coupled compositional stochastic optimization studied in~\cite{qi2021stochastic} for maximizing Average Precision, whose objective has a form of $\frac{1}{n}\sum_i f(g_i(\w))$ that is similar to the second component in our objective $F(\w)$.  The key idea of the proposed algorithm is to keep track of $\frac{1}{|\S_i|}g(\w; \x_i, \A, \S_i)$ by a scalar, whose averaged error in the long run is diminishing.  However, different from the problem studied in~\cite{qi2021stochastic}, there could be many data augmentations in $\P$. As a result, by maintaining a scalar for each $\x_i\in\D, \A\in\P$, the memory cost is $O(n|\P|)$ which increases as we increase the number of data augmentations and could be very large if $|\P|$ is large. By noting that $\A(\x_i),\forall\A\in\P$ is an augmented data from the same image for different $\A$, we expect that their embedded feature vectors are close in the sense that $\E_{\A,\A',\z}|E(\A(\x_i))^{\top}E(\z) - E(\A'(\x_i))^{\top}E(\z)]|^2\leq \epsilon^2$ for any $\A, \A', \x_i$ and a small value $\epsilon$.  By leveraging this property, we maintain and update a scalar $\u_i$ for each image to track $\frac{1}{|\S_i|}g(\w; \x_i, \A, \S_i)$. 
\begin{algorithm}[t]
    \centering
    \caption{SogCLR}\label{alg:sigclr}
    \begin{algorithmic}[1]
        \STATE \textbf{Input:} $\w_0\in \R^d$, Initialize $\u_0\in \R^n$
        \STATE Draw a batch of $B$ samples denoted by $\mathcal{B}=\{\x_i\}_{i=1}^B$.
        \FOR {$t= 1,\ldots, T$}
          \FOR {$\x_i \in \mathcal B$}
            \STATE Compute $g(\w_t; \x_i, \A,  \B_i)$ and $g(\w_t; \x_i, \A',  \B_i)$ according to~(\ref{eqn:sampleg}) 
            \STATE Update $\u_{i, t}$ according to~(\ref{eqn:u}) 
            \ENDFOR
            \STATE Compute the gradient estimator $\mathbf m_t$ by~(\ref{eqn:m})
             \STATE $\v_{t} = (1-\beta) \v_{t-1} + \beta \mathbf m_t$
            \STATE $\w_{t+1} = \w_t - \eta \v_{t}$ (or use Adam-style update)
        \ENDFOR
    \end{algorithmic}
\end{algorithm}
At the $t$-th iteration, we update $\u_i$ for $\x_i\in\B$ by moving average
\begin{equation}
\label{eqn:u}
    \begin{aligned}
     & \u_{i, t} = (1-\gamma) \u_{i, t-1} \\
     &+ \gamma \frac{1}{2|\B_i|}(g(\w_t; \x_i, \A,  \B_i)+g(\w_t; \x_i, \A',  \B_i)),
           \end{aligned}
\end{equation}
where $\gamma\in(0,1)$.
Then we can compute a stochastic gradient estimator by 
\begin{align}\label{eqn:m}
    \mathbf m_{t}  &=  - \frac{1}{B}\sum_{\x_i\in\B}\nabla (E(\A(\x_i))^{\top}E(\A'(\x_i)))\\
    & + \frac{p_{i,t}}{2|\B_i|}(\nabla g(\w_t; \x_i, \A,  \B_i)+\nabla g(\w_t; \x_i, \A',  \B_i)).\notag
\end{align}
where $p_{i,t}=\frac{\tau}{\varepsilon'/|\S_i| + u_{i,t-1}}=\nabla f(u_{i,t-1})$. 
Finally, we can update the model parameter $\w_{t+1}$ by using a momentum-style update or an Adam-style update. The detailed steps are summarized in Algorithm~\ref{alg:sigclr}, which is referred as SogCLR to emphasize that we aim to optimize the global contrastive objective. 

We note that the memory cost of SogCLR is $O(n+d)$, which is $O(d)$ for over-parameterized deep neural networks with $d\gg n$. The per-iteration complexity of SogCLR is the same as SimCLR.  

Next, we provide a convergence result for SogCLR. 
\begin{thm}\label{thm:2}
Assume that $\E_{\A, \A'}\E_{\z\sim\S_i}|E(\A(\x_i))^{\top}E(\z) - E(\A'(\x_i))^{\top}E(\z)|^2\leq \epsilon^2$ for any $\x_i\in\D$ and  the same conditions as in Theorem~\ref{thm:1} hold, then with  $\gamma\leq  \frac{n}{B}$, and $\eta = O(\min\left\{\beta, \frac{\gamma B}{n}, \frac{1}{L_F}\right\})$, after $T$ iterations, SogCLR ensures that $\E[\|\nabla F(\w_{t'})\|^2]\leq O(\frac{1}{\eta T} + \frac{\beta+\gamma}{B}+ \epsilon^2)$ for a random $t'\in\{1,\ldots, T\}$.  
\end{thm}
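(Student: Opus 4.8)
The plan is to adapt the analysis of coupled compositional stochastic optimization from~\cite{qi2021stochastic} to the present setting, tracking two sources of error: the estimation error of the inner function value $g$ by the moving-average scalars $\u_{i,t}$, and the bias introduced by sharing a single scalar $\u_i$ across all augmentations $\A\in\P$ of $\x_i$. First I would set up notation: write $g_i(\w) = \frac{1}{|\S_i|}g(\w;\x_i,\A,\S_i)$ (treating the augmentation dependence carefully) and recall $F(\w)$ has the form of a smooth outer expectation plus $\frac{1}{n}\sum_i \E_\A f(g_i(\w))$ with $f(\cdot)=\tau\ln(\varepsilon'/|\S_i|+\cdot)$, which is Lipschitz, smooth, and has bounded derivative $p = \nabla f$ precisely because $\varepsilon'>0$ lower-bounds the argument. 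The workhorse is a standard descent-lemma expansion along the iterate update $\w_{t+1}=\w_t-\eta\v_t$ with the momentum recursion $\v_t=(1-\beta)\v_{t-1}+\beta\m_t$: by $L_F$-smoothness of $F$,
\begin{align*}
F(\w_{t+1}) \le F(\w_t) - \eta\inner{\nabla F(\w_t)}{\v_t} + \frac{\eta^2 L_F}{2}\sqn{\v_t},
\end{align*}
and then I would control $\inner{\nabla F(\w_t)}{\v_t}$ by relating $\v_t$ to the true gradient $\nabla F(\w_t)$ through the error decomposition $\m_t - \nabla F(\w_t)$.

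The key quantity to bound is the error $\Delta_{i,t} := u_{i,t-1} - g_i(\w_t)$ between the tracked scalar and the true normalized inner sum. I would establish a recursion of the form $\E[\Delta_{i,t}^2] \le (1-\Theta(\gamma B/n))\E[\Delta_{i,t-1}^2] + O(\gamma^2/B)\cdot(\text{variance of the mini-batch estimate of }g_i) + O(\eta^2/\gamma)\cdot(\text{drift of }g_i\text{ between consecutive }\w) + O(\epsilon^2)\cdot(\text{augmentation-mismatch term})$, where the factor $B/n$ accounts for the probability that a given $\x_i$ is sampled in the mini-batch (so its scalar gets updated). The augmentation assumption $\E_{\A,\A'}\E_\z|E(\A(\x_i))^\top E(\z)-E(\A'(\x_i))^\top E(\z)|^2\le\epsilon^2$ is exactly what is needed to show that using the scalar $u_i$ (which tracks an average over augmentations) in place of the augmentation-specific $g(\w;\x_i,\A,\S_i)$ costs only $O(\epsilon^2)$ in squared error, via Jensen and Lipschitzness of $\exp$ on the bounded domain. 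Summing this recursion over $t$, using $\gamma\le n/B$ and the step-size constraint $\eta = O(\min\{\beta,\gamma B/n,1/L_F\})$ to make the geometric series converge, yields $\frac{1}{T}\sum_t\E[\Delta_{i,t}^2] \le O(\frac{n}{\gamma B T} + \frac{\gamma}{B} + \frac{\eta^2 n^2}{\gamma^2 B^2}\cdot(\cdots) + \epsilon^2)$, and the step-size choice folds the drift term into the leading terms.

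Next I would handle the momentum error: a parallel recursion for $\E[\sqn{\v_t - \nabla F(\w_t)}]$ of the form $(1-\Theta(\beta))\E[\sqn{\v_{t-1}-\nabla F(\w_{t-1})}] + O(\beta)\cdot\E[\sqn{\m_t - \nabla F(\w_t)}] + O(\eta^2/\beta)\cdot\E[\sqn{\v_{t-1}}]$, where the one-step gradient-estimation error $\E[\sqn{\m_t-\nabla F(\w_t)}]$ splits into the genuine mini-batch variance $O(1/B)$ and a bias term controlled by $\E[\Delta_{i,t}^2]$ times the Lipschitz constants of $f'$ and $g$ (using that $\m_t$ differs from an unbiased estimate of $\nabla F(\w_t)$ only through $p_{i,t}=\nabla f(u_{i,t-1})$ versus $\nabla f(g_i(\w_t))$). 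Combining the descent lemma with both recursions — telescoping $F(\w_{t+1})-F(\w_t)$, plugging in the aggregated bounds on $\E[\Delta_{i,t}^2]$ and $\E[\sqn{\v_t-\nabla F(\w_t)}]$, and using Young's inequality to absorb cross terms — gives $\frac{1}{T}\sum_t \E[\sqn{\nabla F(\w_t)}] \le O\!\left(\frac{1}{\eta T} + \frac{\beta+\gamma}{B} + \epsilon^2\right)$ after the stated choice of $\eta$, which is the claim for a uniformly random $t'\in\{1,\dots,T\}$.

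The main obstacle I anticipate is the coupled interaction between the two error recursions: the iterate moves by $\eta\v_t$, which perturbs all the $g_i(\w_t)$ and hence feeds drift into the $\Delta_{i,t}$ recursion, while the quality of $\m_t$ (and thus of $\v_t$) depends on the $\Delta_{i,t}$; closing this loop requires the step-size to be small relative to both $\beta$ and $\gamma B/n$ simultaneously, and getting the constants to line up so that the drift terms are genuinely lower-order (rather than merely comparable) is the delicate part. The secondary subtlety is bookkeeping the augmentation dependence cleanly — the scalar $u_i$ is shared, the sampled $\B_i$ uses fresh augmentations each iteration, and the objective $F$ involves an expectation over $\A$ — so one must be careful that the $O(\epsilon^2)$ term is the only residual cost of this mismatch and that it does not get amplified by the recursion's accumulation.
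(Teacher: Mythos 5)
Your proposal follows essentially the same route as the paper's proof: a descent lemma for the momentum update, a contraction recursion for the moving-average tracking error $\|\u_t-\bg(\w_t)\|^2$ at rate $\Theta(\gamma B/n)$, a contraction recursion for the gradient-estimator error $\|\v_t-\nabla F(\w_t)\|^2$ at rate $\Theta(\beta)$, and a Lyapunov-style combination under the stated step-size conditions, with the augmentation assumption supplying the residual $O(\epsilon^2)$. The only cosmetic difference is bookkeeping: the paper charges the $O(\epsilon^2)$ augmentation mismatch to the gradient-estimator recursion (its term $A_3$, comparing $\nabla f(g_i(\w_t))$ with $\nabla f(g_i(\w_t;\A,\S_i))$) rather than to the $\u$-tracking recursion, but this does not change the argument.
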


{\bf Remark:} The above theorem implies that by setting $\beta = \sqrt{B/T}<1$ and $\gamma=\sqrt{n/T}<1$, then SogCLR's optimization error will converge to the level of $\epsilon$ when $T=O(\max(\frac{n}{B^2\epsilon^4},\frac{1}{B\epsilon^4}))$, i.e., $\E[\|\nabla F(\w_{t'})\|^2]\leq O(\frac{1}{\sqrt{BT}} + \frac{\sqrt{n}}{B\sqrt{T}}+ \epsilon^2)\leq O(\epsilon^2)$. When $\epsilon$ is small enough, the optimization error of SogCLR is negligible. In addition, the analysis also implies that SogCLR enjoys a parallel speed-up, i.e., with a larger mini-batch size $B$ it needs a less number of iterations to converge to a small error. 

One might notice that there are two differences between SogCLR and the update~(\ref{eqn:SimCLR}) for SimCLR. One difference is that SogCLR maintains and updates the $\u$ sequence. The second difference is that SogCLR uses a momentum-style update. We would like to emphasize that the moving average update for $\u_{i,t+1}$ is the key to prove the above result. With this technique, SogCLR is able to leverage the momentum-style update or the Adam-style update to enjoy a small optimization error. Without using the scalars $\u_{i,t+1}$ in computing the gradient estimator, even we use the momentum-style update or the Adam-style update for SimCLR, it still suffers from an optimization error in the order of $O(1/\sqrt{B})$. In particular, we have the following corollary for the optimization error of SimCLR with the momentum-style update. 
\begin{cor}\label{thm:3}
Let us consider the following momentum-style update for SimCLR. 
\begin{align}\label{eqn:SimCLR2}
    &\v_{t}=(1-\beta)\v_{t-1} + \beta \frac{\tau}{B}\sum_{\x_i\in\B}\widehat\nabla L(\x_i, \A, \A')\\
    &\w_{t+1} = \w_t - \eta \v_t.
\end{align}
Assume $F$ is smooth, $g$ is smooth and Lipchitiz continuous, with $\eta\leq O(\beta)$ SimCLR ensures that $\E[\|\nabla F(\w_{t'})\|^2]\leq O(\frac{1}{\eta T} + \frac{1}{\beta T} + \frac{\beta}{B} + \frac{1}{B})$ for a random $t'\in\{1,\ldots, T\}$.  
\end{cor}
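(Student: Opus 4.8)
The plan is to analyze the momentum update~(\ref{eqn:SimCLR2}) as heavy-ball SGD driven by a \emph{biased} stochastic gradient, and to run the standard potential-function argument for smooth nonconvex optimization while carrying the bias through. Write $\m_t:=\frac{\tau}{B}\sum_{\x_i\in\B}\widehat\nabla L(\x_i,\A,\A')$ for the stochastic direction averaged in~(\ref{eqn:SimCLR2}), let $\E_t[\cdot]$ and $\bar\m_t:=\E_t[\m_t]$ denote conditioning on the history through $\w_t$, and set $\b_t:=\bar\m_t-\nabla F(\w_t)$ (the bias) and $\epsilon_t:=\v_t-\nabla F(\w_t)$ (the momentum error). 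The first ingredient is exactly the estimate that already underlies Theorem~\ref{thm:1}: since $g(\w;\x_i,\A,\B_i)/|\B_i|$ is an unbiased estimator of $g(\w;\x_i,\A,\S_i)/|\S_i|$ with variance $O(1/B)$, and $f(\cdot)=\tau\ln(\varepsilon'/|\S_i|+\cdot)$ has a $1/\varepsilon^2$-Lipschitz derivative on $[0,\infty)$ while $\nabla g$ and $\nabla(E(\A(\x_i))^{\top}E(\A'(\x_i)))$ are bounded (by the Lipschitzness assumptions), the estimator~(\ref{eqn:g2}) has squared bias $\E\|\b_t\|^2=O(1/B)$ and mini-batch-averaged conditional variance $\E_t\|\m_t-\bar\m_t\|^2=O(1/B)$; it is also uniformly bounded (as $1/(\varepsilon+g(\w_t;\x_i,\A,\B_i))\le 1/\varepsilon$), so with $\v_0=\mathbf{0}$ we get $\|\epsilon_0\|^2=O(1)$.

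Second, write the descent inequality. By $L_F$-smoothness of $F$ and $\w_{t+1}=\w_t-\eta\v_t$,
\begin{align*}
F(\w_{t+1})\le F(\w_t)-\eta\inner{\nabla F(\w_t)}{\v_t}+\frac{L_F\eta^2}{2}\|\v_t\|^2 .
\end{align*}
Substituting $\v_t=\nabla F(\w_t)+\epsilon_t$, using $\|\v_t\|^2\le 2\|\nabla F(\w_t)\|^2+2\|\epsilon_t\|^2$, and taking $\eta\le 1/(4L_F)$ gives $F(\w_{t+1})\le F(\w_t)-\tfrac{\eta}{4}\|\nabla F(\w_t)\|^2+\eta\|\epsilon_t\|^2$; so after summing over $t$ it only remains to bound $\sum_{t=1}^T\E\|\epsilon_t\|^2$.

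Third — the heart of the argument — establish a contraction for $\epsilon_t$. From $\v_t=(1-\beta)\v_{t-1}+\beta\m_t$,
\begin{align*}
\epsilon_t=(1-\beta)\epsilon_{t-1}+(1-\beta)\bigl(\nabla F(\w_{t-1})-\nabla F(\w_t)\bigr)+\beta(\m_t-\bar\m_t)+\beta\b_t .
\end{align*}
Since $\m_t-\bar\m_t$ is conditionally mean-zero it contributes only its variance, and Young's inequality applied to the remaining history-measurable terms (splitting off $(1-\beta)\epsilon_{t-1}$) gives, for an absolute constant $c$,
\begin{align*}
\E\|\epsilon_t\|^2\le\Bigl(1-\tfrac{\beta}{2}\Bigr)\E\|\epsilon_{t-1}\|^2+\frac{cL_F^2\eta^2}{\beta}\E\|\v_{t-1}\|^2+c\beta\,\E\|\b_t\|^2+\beta^2\sigma^2 ,
\end{align*}
with $\sigma^2=O(1/B)$. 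Telescoping this geometric recursion, substituting $\|\v_{t-1}\|^2\le 2\|\nabla F(\w_{t-1})\|^2+2\|\epsilon_{t-1}\|^2$, and using $\eta\le O(\beta/L_F)$ to absorb the resulting $\E\|\epsilon\|^2$ feedback into the left-hand side yields
\begin{align*}
\sum_{t=1}^T\E\|\epsilon_t\|^2\le O\!\Bigl(\tfrac{\|\epsilon_0\|^2}{\beta}\Bigr)+O\!\Bigl(\tfrac{L_F^2\eta^2}{\beta^2}\Bigr)\sum_{t=1}^T\E\|\nabla F(\w_t)\|^2+O\!\Bigl(\sum_{t=1}^T\E\|\b_t\|^2\Bigr)+O(\beta\sigma^2T).
\end{align*}

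Finally, plug this into the summed descent inequality $\tfrac{\eta}{4}\sum_t\E\|\nabla F(\w_t)\|^2\le F(\w_0)-\min_{\w}F(\w)+\eta\sum_t\E\|\epsilon_t\|^2$: the $\sum_t\E\|\nabla F(\w_t)\|^2$ feedback enters with coefficient $O(L_F^2\eta^3/\beta^2)$, so choosing $\eta\le O(\beta/L_F)$ (i.e. $\eta\le O(\beta)$) small enough absorbs it into $\tfrac{\eta}{4}\sum_t\E\|\nabla F(\w_t)\|^2$; dividing by $\eta T$, using $F(\w_0)-\min_{\w}F(\w)=O(1)$, $\|\epsilon_0\|^2=O(1)$, $\E\|\b_t\|^2=O(1/B)$ and $\sigma^2=O(1/B)$, and taking $t'$ uniform on $\{1,\dots,T\}$, gives $\E\|\nabla F(\w_{t'})\|^2\le O\bigl(\tfrac{1}{\eta T}+\tfrac{1}{\beta T}+\tfrac{1}{B}+\tfrac{\beta}{B}\bigr)$. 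The main obstacle is the third step: the $\epsilon_t$-recursion, the descent inequality, and the bound $\|\v_t\|^2\le 2\|\nabla F\|^2+2\|\epsilon_t\|^2$ are mutually coupled, so $\eta$ must be tuned carefully against $\beta$ and $L_F$ to close the loop; in parallel one must keep the \emph{bias} contribution (an $\Theta(1/B)$ floor, independent of $\eta$ and $T$) separate from the \emph{variance} contribution, since it is exactly this irreducible $O(1/B)$ term that shows momentum alone — unlike the $\u$-tracking in SogCLR — cannot remove SimCLR's batch-size dependence.
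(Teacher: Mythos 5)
Your proposal is correct and follows essentially the same route as the paper: both decompose the momentum error $\v_t-\nabla F(\w_t)$ into the inherited error, the gradient drift $\nabla F(\w_{t-1})-\nabla F(\w_t)$, a non-mean-zero bias term of squared size $O(\beta^2/B)$ (weighted by $O(1/\beta)$ via Young's inequality, yielding the $\beta/B$ floor... i.e.\ the irreducible $O(1/B)$ after telescoping) and a conditionally mean-zero variance term, then telescope the $(1-\Theta(\beta))$-contraction and combine with the smoothness descent inequality under $\eta\le O(\beta)$. The only cosmetic difference is that you absorb the $\|\v_{t-1}\|^2$ feedback by expanding it into $\|\nabla F\|^2+\|\epsilon\|^2$, whereas the paper absorbs it through the explicit $-\tfrac{\eta}{4}\|\v_t\|^2$ term of its Lemma~\ref{lem:nonconvex_starter}; both close the loop identically.
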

{\bf Remark:} The dominating term in the upper bound is still $O(1/B)$ when $T\rightarrow \infty$ and $\beta=O(1/\sqrt{T})$. 

\begin{algorithm}[t]
\caption{PyTorch-style pseudocode for SogCLR }\label{alg:pytorch_sigclr}
\begin{algorithmic}[2]
\vspace{-0.1in}
\begin{lstlisting}[language=Python]
# Note: This is a simplified version of Algorithm 1, we use local u 
# from each augmentation to compute the dynamic contrastive loss 
# instead of aggregated u from all augmentations.
# model: encoder + mlp projectors
# aug: a set of augmentation functions
# tau: temperature
# N: data size 
# ind: indices for images in mini-batch
# u: 1d tensor with shape (N,1) by zero initialization
# g: parameter for maintaining moving averages of u

for ind, img in dataloader:
    x1, x2 = aug(img), aug(img)    # augmentations
    h1, h2 = model(x1), model(x2)  # forward pass
    h1, h2 = h1.norm(dim=1, p=2), h2.norm(dim=1, p=2)
    loss1, u1 = dcl(h1, h2, ind)   # dcl for h1, h2
    loss2, u2 = dcl(h2, h1, ind)   # dcl for h2, h1
    u[ind] = (u1 + u2)/2           # update u
    loss = (loss1 + loss2).mean()  # symmetrized
    loss.backward()             
    update(model.params)           # momentum or adam-style

# dynamic contrastive loss (mini-batch)   
def dcl(h1, h2, ind):
    B = h1.shape[0]  
    labels = cat([one_hot(range(B)), one_hot(range(B))], dim=1)
    logits = cat([dot(h1, h2.T), dot(h1, h1.T)], dim=1)
    neg_logits = exp(logits/tau)*(1-labels) 
    u1 = (1-g) * u[ind] + g*sum(neg_logits, dim=1)/(2(B-1))
    p = (neg_logits/u1).detach() 
    sum_neg_logits = sum(p*logits, dim=1)/(2(B-1))
    normalized_logits = logits - sum_neg_logits
    loss = -sum(labels * normalized_logits, dim=1) 
    return loss, u
\end{lstlisting}
\vspace{-0.2in}
\STATE
\end{algorithmic}
\end{algorithm}

\subsection{SogCLR optimizes V2 Global Contrastive Objective}\label{sec:4}
In this section, we propose another version of the global contrasive objective (V2) and show that SogCLR optimizes the V2 global contrastive objective, which further justifies the proposed algorithm SogCLR. In particular, let us consider the following objective. 
\begin{align}\label{eqn:gcl2}
    &F_{v2}(\w)  = \E_{\x_i\sim\D, \A, \A'\sim\P}(E(\A(\x_i))^{\top}E(\A'(\x_i)))\\
    &+\frac{1}{n}\sum_{\x_i\in\D}\ln \left(\frac{\varepsilon'}{|\S_i|} + \frac{\tau}{|\S_i|}\E_{\A}g(\w; \x_i, \A, \S_i)\right) + \text{Const}.\nonumber
\end{align}
The difference between V2 GCO~(\ref{eqn:gcl2}) and V1 GCO~(\ref{eqn:gcl}) is that the expectation over $\A$ in the second component is moved from the outside of the logarithmic function to the inside. The above objective function can be also explained from the average of individual contrastive loss. To this end, we define the following contrastive loss for each augmented pair $(\A(\x_i), \A'(\x_i))$:
\begin{align}\label{eqn:con2}
L_2(\w; \x_i, \A, \A') = -\ln\frac{\exp(E(\A(\x_i))^{\top}E(\A'(\x_i))/\tau)}{\varepsilon' + \E_{\A}g(\w; \x_i, \A, \S_i)}.
\end{align}
\noindent
Then we have
\begin{align*}
    F_{v2}(\w) = \E_{\x\sim\D, \A, \A'}[\tau L_2(\w; \x_i, \A, \A')].
\end{align*}
Different from $L(\w; \x_i, \A, \A')$, in the definition of $L_2(\w; \x_i, \A, \A')$ the similarity score of a positive pair $(\A(\x_i), \A'(\x_i))$ is contrasted with all possible negative pairs between $\x_i$ and other images. 

We prove that SogCLR indeed converges to a stationary solution to the V2 GCO $F_{v2}(\w)$. Different from $F(\w)$ defined in~(\ref{eqn:gcl}), the update of $u$ of SogCLR can be considered directly as an moving average estimator of $\E_{\A}g(\w; \x_i, \A, \S_i)$ in $F_{v2}(\w)$, which does not involve the error caused by difference between different augmented data. We state the convergence below. 
\begin{thm}\label{thm:4}
Assume the same conditions as in Theorem~\ref{thm:1} hold, then with  $\gamma\leq \frac{n}{B}$, and $\eta = O(\min\left\{\beta, \frac{\gamma B}{n}, \frac{1}{L_F}\right\})$, after $T$ iterations, SogCLR ensures that $\E[\|\nabla F_{v2}(\w_{t'})\|^2]\leq O(\frac{1}{\eta T} + \frac{\beta+\gamma}{B})$ for a random $t'\in\{1,\ldots, T\}$.  
\end{thm}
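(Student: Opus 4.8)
The plan is to recognize $F_{v2}$ as a finite-sum \emph{coupled compositional} objective of the form $h(\w) + \frac1n\sum_{i=1}^n f_i(g_i(\w)) + \text{Const}$ and to run essentially the Lyapunov argument behind Theorem~\ref{thm:2}, the one essential change being that the scalars $\u_{i,t}$ now track the \emph{augmentation-averaged} inner function, which is exactly what removes the $\epsilon^2$ term. Here $h(\w) = \E_{\x_i,\A,\A'}[E(\A(\x_i))^{\top}E(\A'(\x_i))]$ is the easily-differentiable alignment term, $f_i(s) = \tau\ln(\varepsilon'/|\S_i| + s)$ (so that $\nabla f_i(u_{i,t-1})$ equals the weight $p_{i,t}$ used in~(\ref{eqn:m})), and $g_i(\w) = \E_{\A}[\frac1{|\S_i|}g(\w;\x_i,\A,\S_i)]$ is precisely the quantity of which~(\ref{eqn:u}) maintains a moving average. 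Two preliminaries are needed. (a) \emph{Regularity}: since the features are $\ell_2$-normalized we have $g_i(\w)\ge e^{-1/\tau}>0$, so each $f_i$ has first and second derivatives bounded by constants depending only on $\tau$; together with the smoothness and Lipschitzness of $g$ this yields $L_F$-smoothness of $F_{v2}$ and a uniform bound $\|\nabla g_i(\w)\|\le C_g$. (b) \emph{Unbiasedness}: conditioned on $\mathcal F_{t-1}$ and on $\x_i\in\B$, the quantity $\widehat g_{i,t} := \frac1{2|\B_i|}(g(\w_t;\x_i,\A,\B_i)+g(\w_t;\x_i,\A',\B_i))$ is an unbiased estimator of $g_i(\w_t)$ with variance $O(1/B)$ (expectation over the fresh augmentations and the random rest of the mini-batch), and likewise its gradient counterpart is unbiased for $\nabla g_i(\w_t)$; hence $\m_t$ is, conditioned on $\mathcal F_{t-1}$, unbiased for $\nabla F_{v2}(\w_t)$ up to a bias of order $\frac1n\sum_i|\u_{i,t-1}-g_i(\w_t)|$ arising from using $p_{i,t}=\nabla f_i(\u_{i,t-1})$ instead of $\nabla f_i(g_i(\w_t))$. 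It is exactly here that V2 is cleaner than V1: in Theorem~\ref{thm:2} the scalar had to match $\frac1{|\S_i|}g(\w;\x_i,\A,\S_i)$ for a \emph{fixed} $\A$, which created the $\epsilon^2$ mismatch, whereas for $F_{v2}$ the mini-batch update is exactly unbiased for the augmentation-averaged target $g_i(\w)$.

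Next, with $\Delta_{i,t} := \u_{i,t} - g_i(\w_t)$ (tracking error) and $\delta_t := \v_t - \nabla F_{v2}(\w_t)$ (momentum error), the heart of the argument is a three-part recursion. (i) \emph{Descent}: by $L_F$-smoothness along $\w_{t+1}=\w_t-\eta\v_t$, writing $\v_t=\nabla F_{v2}(\w_t)+\delta_t$, one gets $\E[F_{v2}(\w_{t+1})]\le \E[F_{v2}(\w_t)] - \frac{\eta}{2}\E\|\nabla F_{v2}(\w_t)\|^2 + O(\eta)\E\|\delta_t\|^2 + O(\eta)\,\frac1n\sum_i\E|\Delta_{i,t-1}|^2 + O(L_F\eta^2)\E\|\v_t\|^2$. (ii) \emph{Momentum}: the standard momentum computation, after folding in the conditional bias of $\m_t$, gives $\E\|\delta_t\|^2\le (1-\beta)\E\|\delta_{t-1}\|^2 + O(\beta^2/B) + O(\beta)\,\frac1n\sum_i\E\Delta_{i,t-1}^2 + O(\eta^2/\beta)\E\|\v_{t-1}\|^2$. (iii) \emph{Tracking}: write $\u_{i,t} = \u_{i,t-1} + \gamma\,\I[\x_i\in\B]\,(\widehat g_{i,t}-\u_{i,t-1})$, and let $q=\Theta(B/n)$ be the probability of the event $\{\x_i\in\B\}$; using that the inner-function drift $g_i(\w_{t-1})-g_i(\w_t)$ is $\mathcal F_{t-1}$-measurable, so that its cross term with the fresh mini-batch noise vanishes, one obtains $\E[\Delta_{i,t}^2\mid\mathcal F_{t-1}]\le (1-\tfrac{q\gamma}{2})\Delta_{i,t-1}^2 + O(\tfrac1{q\gamma})\,|g_i(\w_{t-1})-g_i(\w_t)|^2 + O(q\gamma^2/B)$; averaging over $i$ and using $|g_i(\w_{t-1})-g_i(\w_t)|\le C_g\eta\|\v_{t-1}\|$ yields $\frac1n\sum_i\E\Delta_{i,t}^2\le (1-\tfrac{q\gamma}{2})\frac1n\sum_i\E\Delta_{i,t-1}^2 + O(\tfrac{n\eta^2}{B\gamma})\E\|\v_{t-1}\|^2 + O(\gamma^2/n)$, so the effective contraction rate is $\Theta(\gamma B/n)$.

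Finally I would combine (i)--(iii) through the potential $\Phi_t = F_{v2}(\w_t) + c_1\|\delta_{t-1}\|^2 + c_2\,\frac1n\sum_i\|\Delta_{i,t-1}\|^2$ with $c_1 = \Theta(\eta/\beta)$ and $c_2 = \Theta(\eta n/(\gamma B))$, chosen so that the $\|\delta\|^2$ and $\frac1n\sum_i\|\Delta\|^2$ terms generated by (i) (together with the $\frac1n\sum_i\|\Delta\|^2$ term in (ii)) are cancelled by the contractions in (ii) and (iii). The residual coefficient multiplying $\E\|\v_t\|^2$ is $O\!\left(L_F\eta^2 + \eta^3/\beta^2 + n^2\eta^3/(\gamma^2 B^2)\right)$, and since $\|\v_t\|^2\le 2\|\nabla F_{v2}(\w_t)\|^2 + 2\|\delta_t\|^2$, the choice $\eta = O(\min\{\beta,\gamma B/n,1/L_F\})$ makes this coefficient small enough to be absorbed by the $-\frac\eta2\|\nabla F_{v2}\|^2$ term and a fraction of the $\delta$-contraction, while $\gamma\le n/B$ keeps $q\gamma=\Theta(\gamma B/n)$ of constant order so that the contraction factor in (iii) is meaningful. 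Telescoping the resulting inequality $\E[\Phi_{t+1}]\le\E[\Phi_t] - \Omega(\eta)\,\E\|\nabla F_{v2}(\w_t)\|^2 + O(\tfrac{\eta\beta}{B}) + O(\tfrac{\eta\gamma}{B})$ over $t=1,\dots,T$, using that $F_{v2}$ is bounded below, and dividing by $\eta T$ gives $\frac1T\sum_{t=1}^T\E\|\nabla F_{v2}(\w_t)\|^2 \le O(\tfrac1{\eta T} + \tfrac{\beta+\gamma}{B})$, which is the claimed bound for a uniformly random $t'\in\{1,\dots,T\}$.

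The main obstacle is step (iii): handling the block-coordinate nature of the update of the vector $\u$. The crucial point is to exploit that the inner-function drift $g_i(\w_{t-1})-g_i(\w_t)$ is $\mathcal F_{t-1}$-measurable, which annihilates its cross term with the fresh mini-batch noise and thereby avoids a spurious $(1+\Theta(\gamma))$ expansion on the majority of iterations (those with $\x_i\notin\B$); this is what extracts the genuine contraction rate $\Theta(\gamma B/n)$ and hence pins down both $\eta = O(\gamma B/n)$ and $\gamma\le n/B$. Everything else parallels the proof of Theorem~\ref{thm:2}, the only simplification being that the augmentation-discrepancy term $\epsilon^2$ is absent because $\u_{i,t}$ now tracks the augmentation-averaged $g_i(\w_t)$ exactly rather than an augmentation-specific surrogate.
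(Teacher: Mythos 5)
Your proposal follows essentially the same route as the paper: the paper's proof of Theorem~\ref{thm:4} simply reruns the Lyapunov argument of Theorem~\ref{thm:2} (descent lemma, momentum-error recursion with contraction $\beta$, tracking-error recursion with contraction $\gamma B/(4n)$, combined via $\Phi_t = (F-F^*) + \Theta(\eta/\beta)\Norm{\Delta_t}^2 + \Theta(\eta n/(\gamma B))\,\Xi_t$), and observes that the term $A_3$ — the augmentation-discrepancy term that produced $\epsilon^2$ in Theorem~\ref{thm:2} — vanishes because $\u_{i,t}$ now tracks the augmentation-averaged $g_i(\w)=\E_{\A}[g(\w;\x_i,\A,\S_i)]$ exactly, which is precisely your key point. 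Your accounting of the contraction rates, the potential-function coefficients, and the resulting $O(1/(\eta T)+(\beta+\gamma)/B)$ bound all match the paper's.
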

{\bf Remark:} The above theorem implies that by setting $\beta = \sqrt{B/T}<1$ and $\gamma=\sqrt{n/T}<1$, then SogCLR converges to a stationary solution of $F_{v2}(\w)$ when $T\rightarrow\infty$. 
\section{Extensions}
In this section, we propose the extension of the proposed technique for optimizing other contrastive losses. We note that the large batch size requirement also exists in other contrastive learning methods. Below we consider one task, namely a self-supervised bimodal contrastive learning task.

{\bf Optimizing Two-way Contrastive Objective}. A recent paper~\cite{clip} proposes a bimodal contrastive learning method named CLIP, which uses a two-way contrastive loss to learn both the encoder network for the image and the encoder network for the text. \cite{clip} uses a very large batch size $32,768$ on a self-collected large-scale dataset with 400 million image and text pairs. Inspired by the SogCLR method for optimizing one-way contrastive loss and its promising performance, below we present a similar solution to alleviate the requirement of large batch size for optimizing two-way contrastive loss.   Given a set of image-text pairs $\D=\{(\x_1, \t_1), \ldots, (\x_n, \t_n)\}$. We denote by $E_I$ and $E_T$ the encoder network for the image data and the text data, respectively.  We can consider optimizing a global two-way contrastive loss:
\begin{align*}
F(\w)& = -\frac{\tau}{n}\sum_{i=1}^n\log\frac{\exp(E_I(\x_i)^{\top}E_T(\t_i)/\tau)}{\sum_{\t\in\D}\exp(E_I(\x_i)^{\top}E_T(\t)/\tau)}\\
&-\frac{\tau}{n}\sum_{i=1}^n\log\frac{\exp(E_I(\x_i)^{\top}E_T(\t_i)/\tau)}{\sum_{\x\in\D}\exp(E_I(\x)^{\top}E_T(\t_i)/\tau)}.
\end{align*}
Due to the large size of $\D$, the challenge lies that handling $g(\w; \x_i) = \E_{\t\sim \D}\exp(E_I(\x_i)^{\top}E_T(\t)/\tau))$ and $g(\w; \t_i)=\E_{\x\sim\D}\exp(E_I(\x)^{\top}E_T(\t_i)/\tau))$.  
We propose to compute a stochastic gradient estimator by 
\begin{align*}
    \m_t& =- \frac{1}{B}\sum_{i\in\B}E_I(\x_i)^{\top}E_T(\t_i)+\\
    &\frac{1}{B}\sum_{i\in\B}\left(\frac{\tau}{u_{i,t}^I}\nabla g(\w_t; \x_i, \B)+\frac{\tau}{u_{i,t}^T}\nabla g(\w_t; \t_i, \B)\right)
\end{align*}
where $g(\w; \x_i, \B)$ and $g(\w; \t_i, \B)$ are the mini-batch estimators of $g(\w; \x_i)$ and $g(\w; \t_i)$ respectively. The scalar $u_{i,t+1}^I$ and $u_{i,t+1}^T$ are updated for the sampled data according to 
\begin{align*}
    u_{i,t+1}^I &= (1-\gamma) u_{i,t}^I + \gamma g(\w_t; \x_i, \B)\\
    u_{i,t+1}^T &= (1-\gamma) u_{i,t}^T + \gamma g(\w_t; \t_i, \B).
\end{align*}
Then we can update the model by Adam-style update or momentum-style update. 

\begin{figure*}[h]
\centering
\includegraphics[scale=0.26]{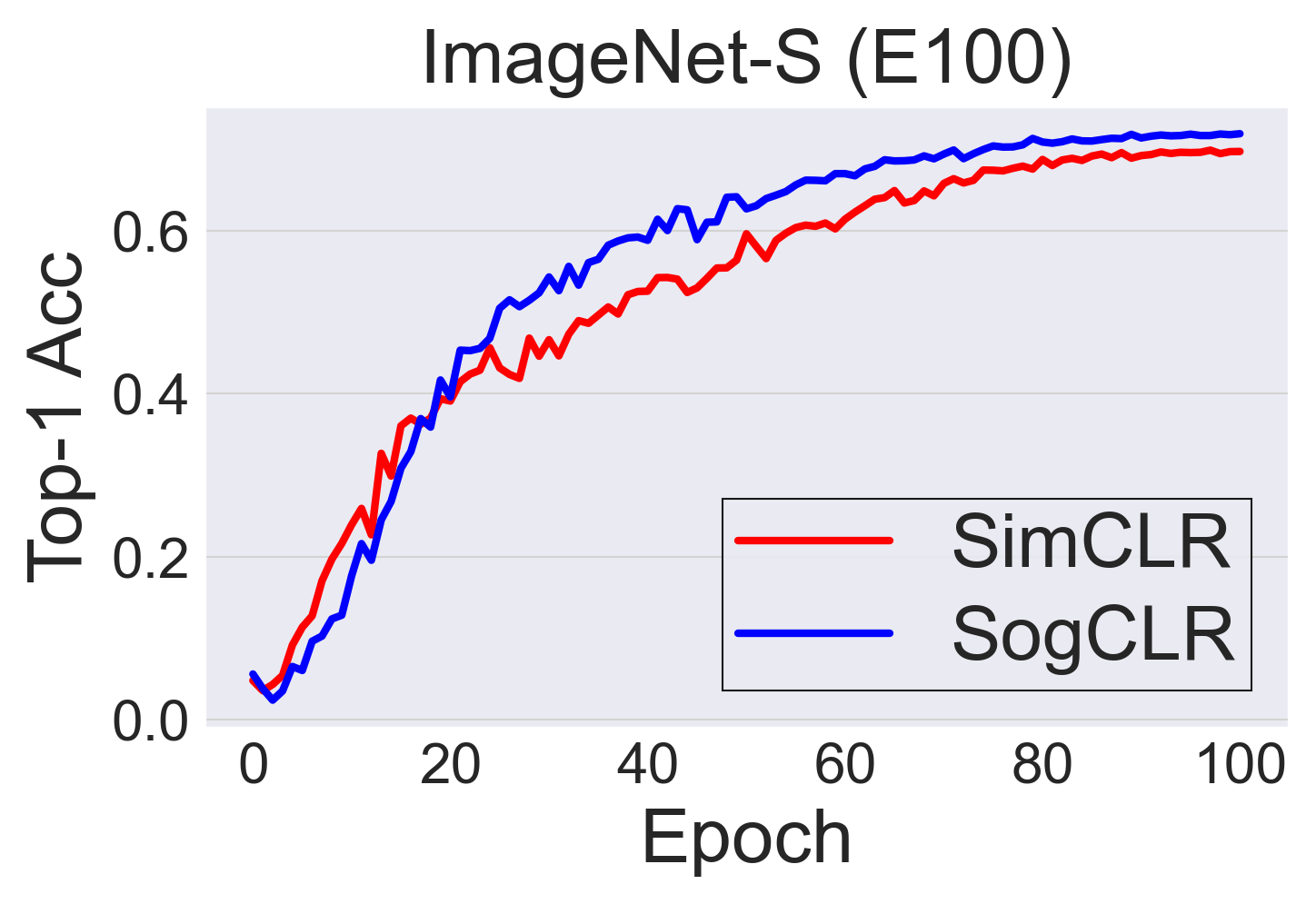}
\includegraphics[scale=0.26]{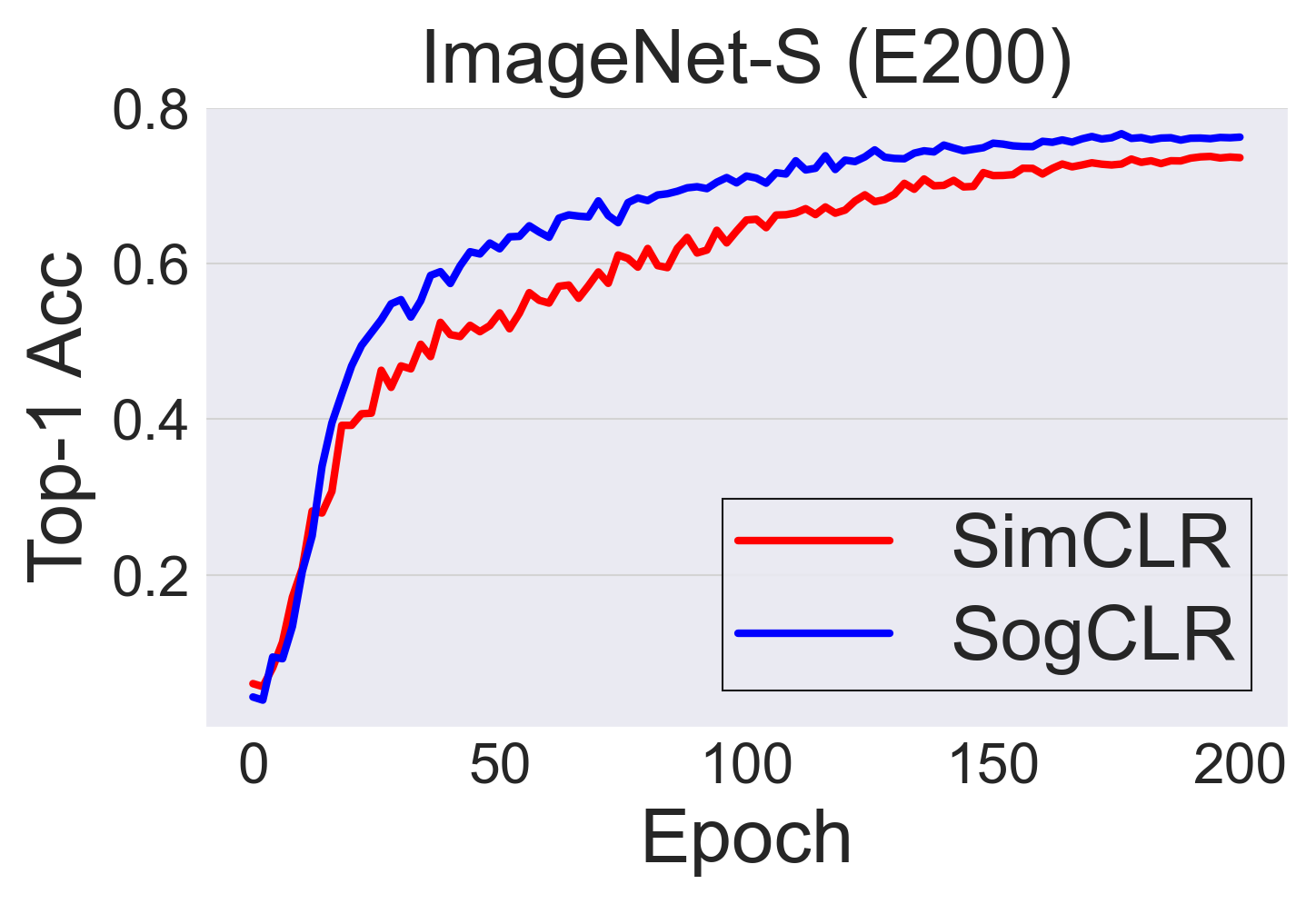}
\includegraphics[scale=0.26]{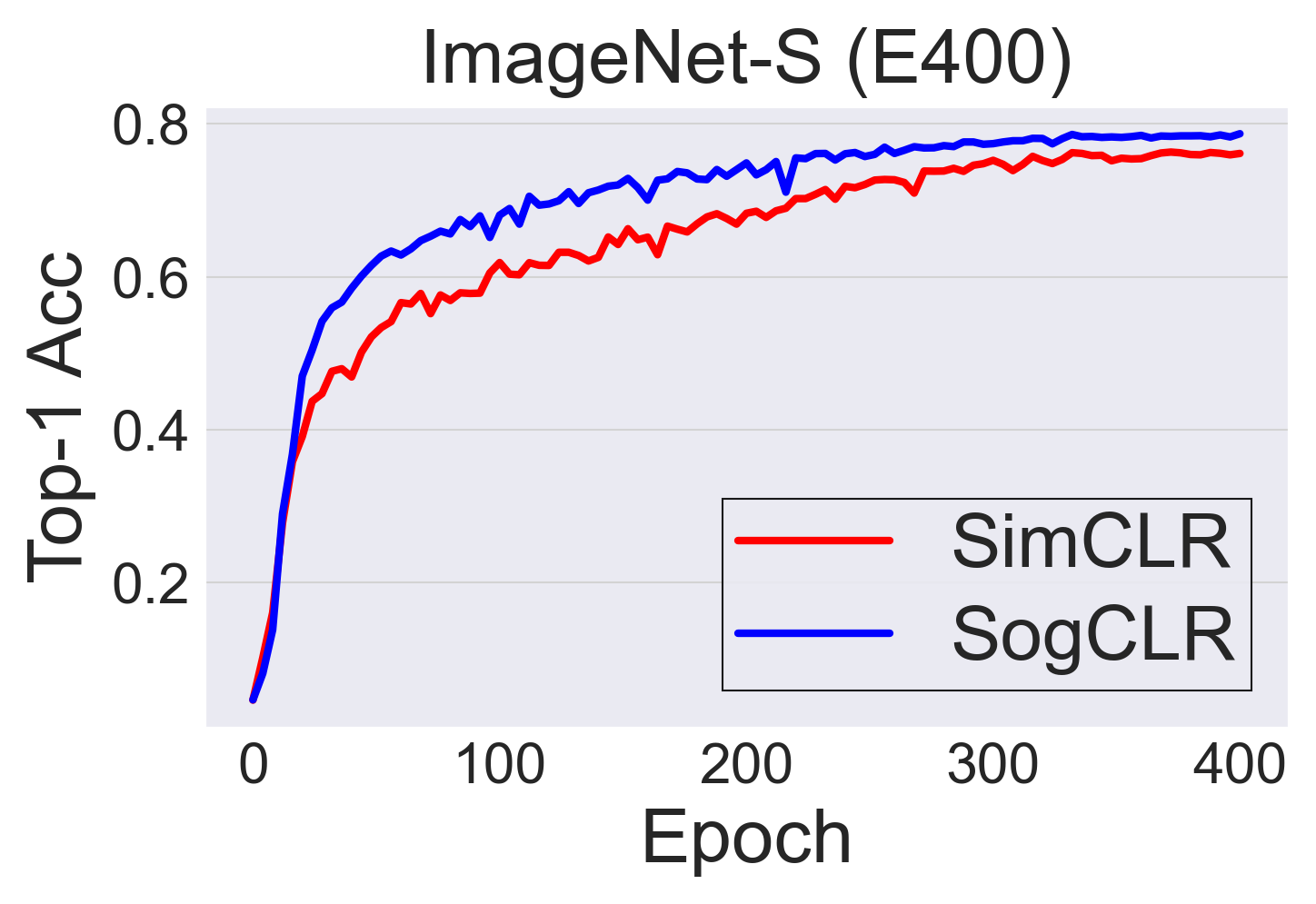}
\includegraphics[scale=0.26]{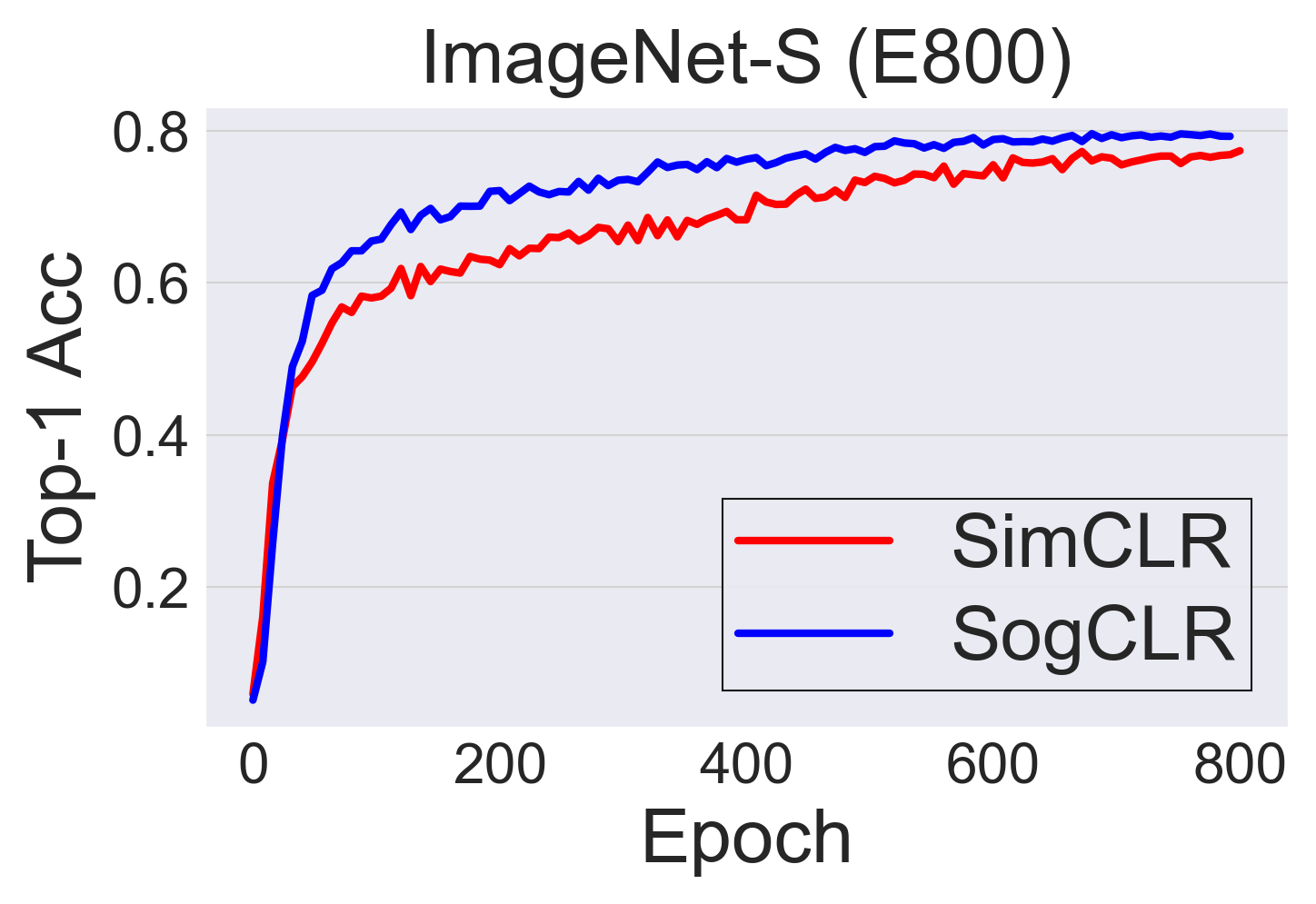}
\includegraphics[scale=0.26]{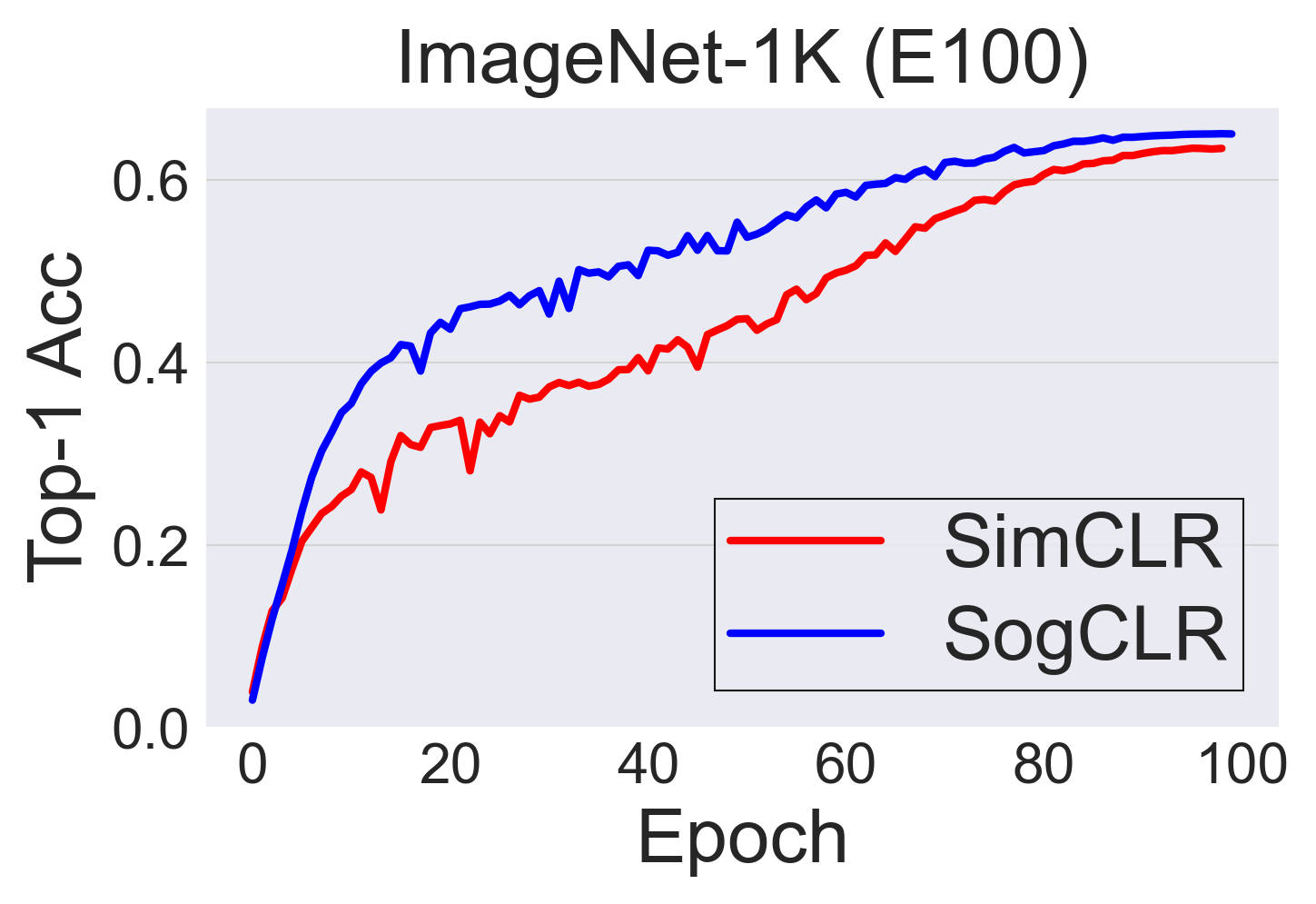}
\includegraphics[scale=0.26]{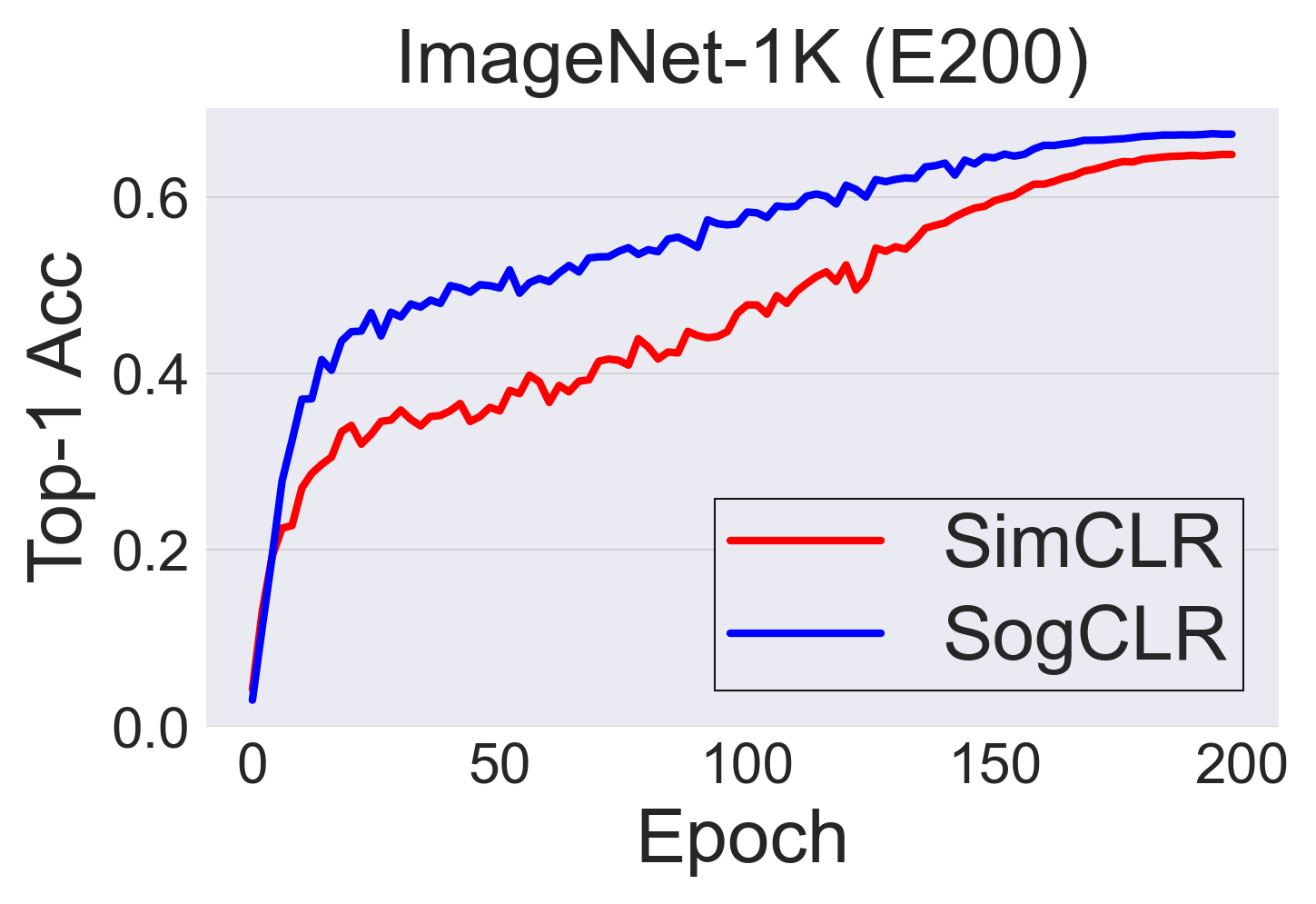}
\includegraphics[scale=0.26]{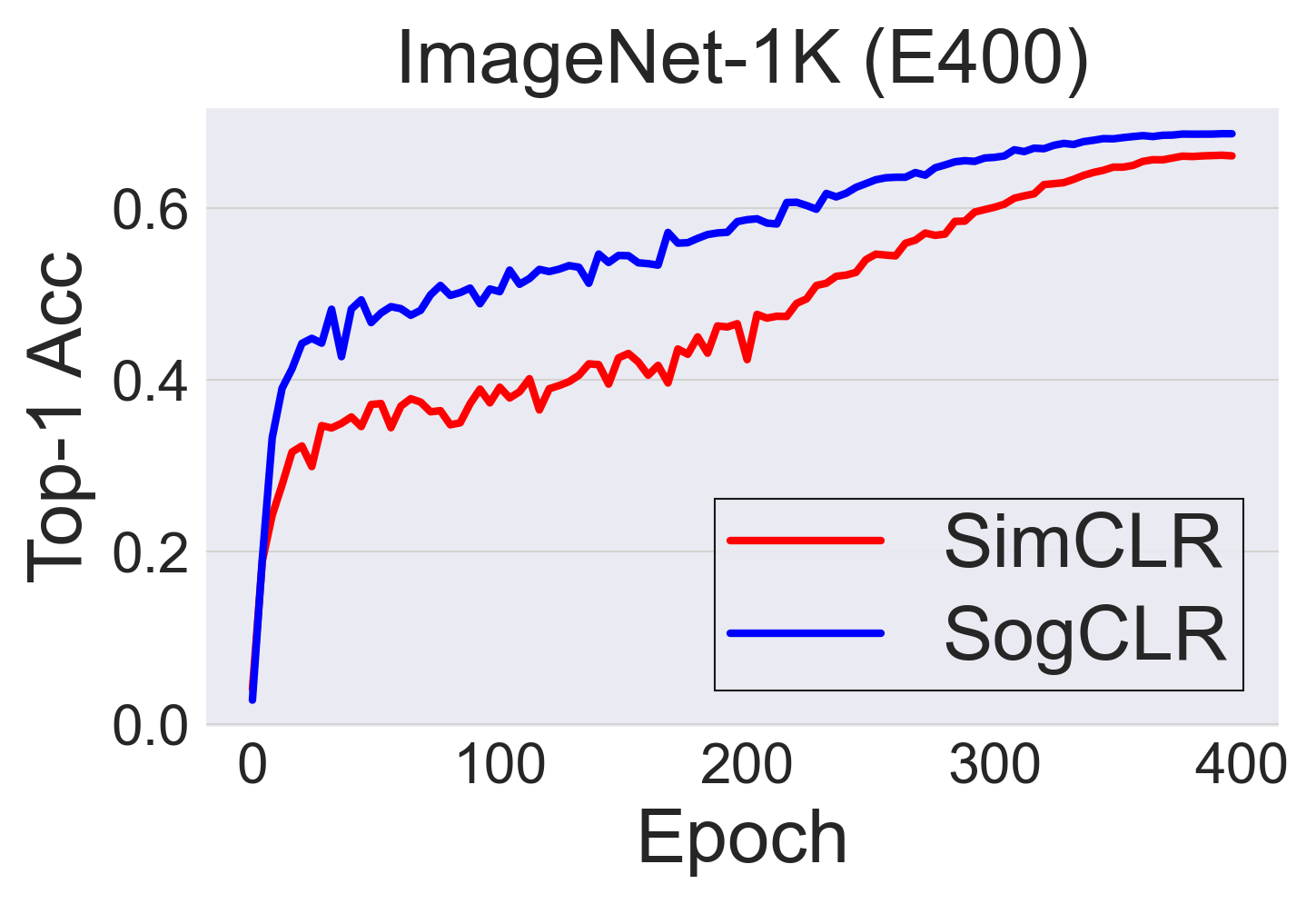}
\includegraphics[scale=0.26]{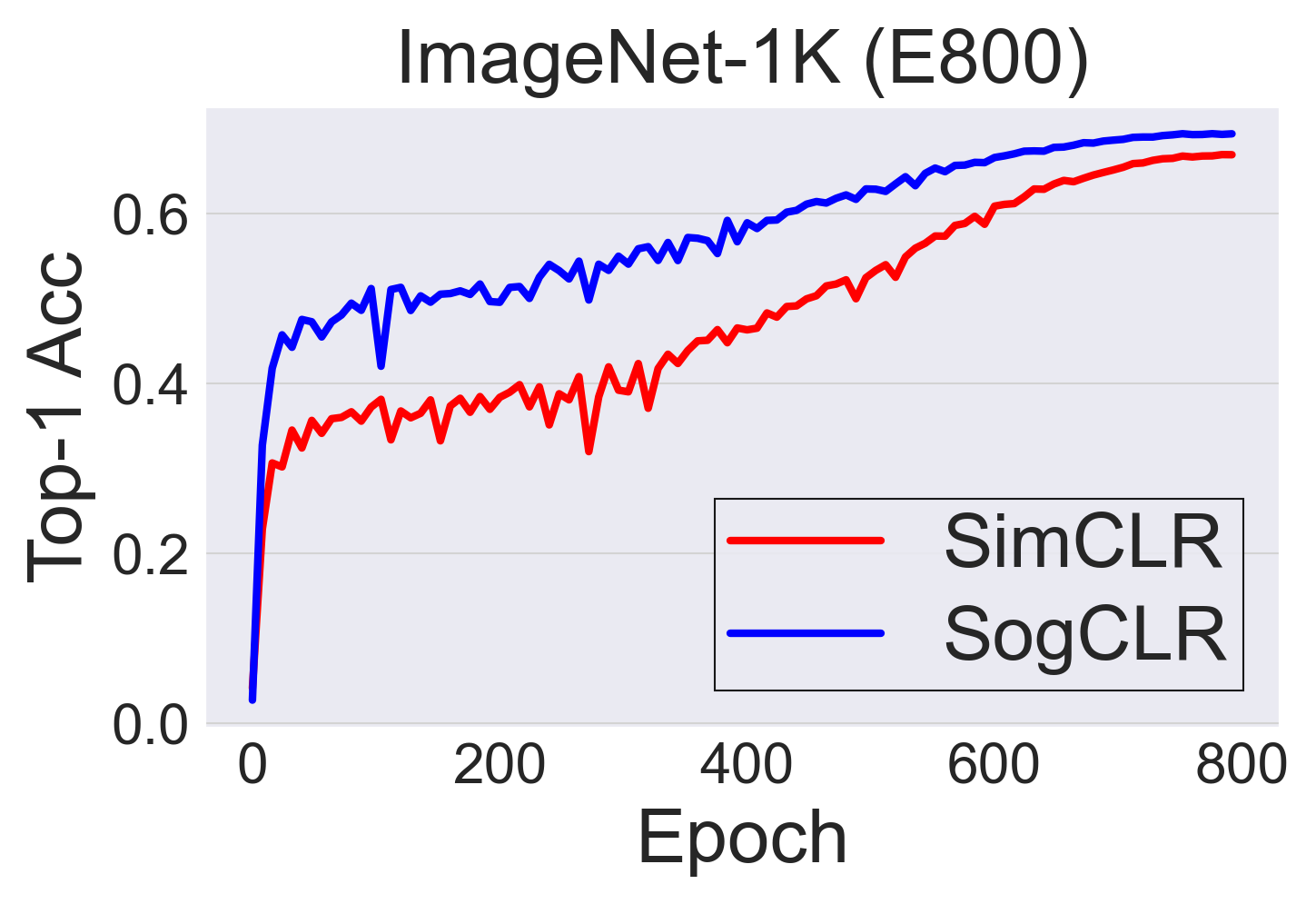}
\vspace{-0.1in}
\caption{Learning curve for top-1 accuracy by linear evaluation on ImageNet-S and ImageNet-1K trained on R50 with batch size of 256. } 
\label{fig:learning_curve_imagenet_100_1000} 
\end{figure*} 

\section{Experiments}
In this section, we compare SogCLR to SimCLR to demonstrate the effectiveness of our optimization method. For a fair comparison, we adopt the same settings as SimCLR to SogCLR unless noted (the main difference is the batch size). It is not our focus to leverage multiple techniques for achieving state-of-the-art performance~\cite{ReLiCv2}. We also compare with the CLIP framework for bimodal contrastive learning. We aim to demonstrate SogCLR can achieve competitive performance when using a smaller batch size. For SimCLR, we run experiments on two scales of ImageNet dataset. The small version is a subset with randomly selected 100 classes (about 128k images) from ImageNet denoted as ImageNet-S~\cite{wu2019large}, and the full version of ImageNet (about 1.2 million images) is denoted as ImageNet-1K~\cite{deng2009imagenet}. For CLIP, we manually construct a text-image pair dataset based on ImageNet-S using the label of each image to construct a text. For the implementations, we follow these open-source repositories~\cite{simclrv1, clip, open_clip} available on Github. Algorithm~\ref{alg:pytorch_sigclr} provides the PyTorch-style pseudo-code of SogCLR. All experiments related to SimCLR are trained on Google Cloud TPUs using 8 to 512 cores depending on model size and batch size. All experiments related to CLIP are trained a NVIDIA V100 GPU with 32GB memory size.  

\subsection{Image Pretraining} \label{sec:simclrv1_imagenet}
\textbf{Experiment setup}. Following previous works~\cite{simclrv1, simclrv2}, we pretrain ResNet-50 \cite{he2016deep} with a 2-layer $128\times128$ projection head on top of backbone encoder. We explore different batch sizes of 128, 256, 512 and different training epochs of 100, 200, 400, 800. We use square root learning rate scaling (0.075$\times$sqrt(BatchSize)) with a cosine decay schedule without restart. We also use learning rate warm-up for 10 epochs, i.e., learning rate is gradually increased to the maximum value. We follow the same image augmentation strategies as in SimCLR~\cite{simclrv1, simclrv2} including random crop, color distortion, and Gaussian blur. We use LARS optimizer~\cite{you2017large} (with a momentum of 0.9 and weight decay of 1e-6) and set temperature($\tau$) to 0.1 by default for all pretraining experiments. For SogCLR in Algorithm~\ref{alg:sigclr}, we tune $\gamma$ in [0.99, 0.9, 0.8, 0.7, 0.6] and initialize sequence $\u_0$ by all zeros. For evaluations, we report performance for linear classifier trained on top of the pretrained encoder on ImageNet validation sets known as linear evaluation~\cite{simclrv1, mocov1, swav, byol}. In particular, we train a linear classier using SGD with Nesterov momentum with a batch size of 4096 and learning rate of 0.1 for 90 epochs. For training, We random crop and resize input images to 224$\times$224. For testing, we apply center crop on input images. 

\textbf{Results}. We report top-1 accuracy by linear evaluation on ImageNet-S and ImageNet-1K under different batch sizes and training epochs in Table~\ref{tab:acc_imagenet_s} and Table~\ref{tab:acc_imagenet}. We can see that SogCLR performs consistently better than SimCLR under all settings on two datasets. SogCLR achieves 3.9\%, 2.3\%, 1.9\% average improvements on ImageNet-S and achieves 2.8\%, 2.3\%, 1.3\% average improvements on ImageNet-1K with batch size of 128, 256, 512, respectively. In particular, we achieve 69.4\% top-1 accuracy using batch size of 256, which is better than original SimCLR’s large-batch (e.g., 4096, 8192) results at 69.1\% under the same number of epochs. In addition, we compare the convergence speed of SogCLR with SimCLR using the same batch size of 256 with different number of epochs on ImageNet-S and ImageNet-1K as shown in Figure~\ref{fig:learning_curve_imagenet_100_1000}. The results indicate that our algorithm converges faster in terms of number of epochs using small batch sizes. 
\begin{table}[h]
\centering
\caption{Linear evaluation (top-1 accuracy) under different batch sizes and training epoch on ResNet-50 and ImageNet-S.}
\label{tab:acc_imagenet_s}
\scalebox{0.95}{
\begin{tabular}{cccccc}
\hline
Method & BatchSize\textbackslash{}Epoch & 100 & 200 & 400 & 800 \\ \hline
\multicolumn{1}{c}{SimCLR} & 128 & 68.5 & 72.7 & 75.7 & 75.7 \\ 
\multicolumn{1}{c}{SogCLR} & 128 & \textbf{72.2} & \textbf{76.7} & \textbf{79.3} & \textbf{80.1} \\ \hline
\multicolumn{1}{c}{SimCLR} & 256 & 69.7 & 73.6 & 76.1 & 77.4 \\ 
\multicolumn{1}{c}{SogCLR} & 256 & \textbf{71.8} & \textbf{76.3} & \textbf{78.7} & \textbf{79.4} \\ \hline
\multicolumn{1}{c}{SimCLR} & 512 & 70.9 & 74.1 & 75.9 & 76.3 \\
\multicolumn{1}{c}{SogCLR} & 512 & \textbf{71.8} & \textbf{75.8} & \textbf{78.2} & \textbf{79.4} \\ \hline
\end{tabular}}
\end{table}
\begin{table}[h]
\centering
\caption{Linear evaluation (top-1 accuracy) under different batch sizes and training epoch on ResNet-50 and ImageNet-1K.}
\label{tab:acc_imagenet}
\scalebox{0.95}{
\begin{tabular}{cccccc}
\hline
Method & BatchSize\textbackslash{}Epoch & 100 & 200 & 400 & 800 \\ \hline
\multicolumn{1}{c}{SimCLR} & 128 & 62.6 & 64.0 & 64.1 & 64.5 \\ 
\multicolumn{1}{c}{SogCLR} & 128 & \textbf{64.9} & \textbf{66.2} & \textbf{67.4} & \textbf{67.9} \\ \hline
\multicolumn{1}{c}{SimCLR} & 256 & 62.8 & 64.3 & 65.7 & 66.5 \\
\multicolumn{1}{c}{SogCLR} & 256 & \textbf{65.2} & \textbf{67.1} & \textbf{68.7} & \textbf{69.4} \\ \hline
\multicolumn{1}{c}{SimCLR} & 512 & 63.8 & 65.6 & 66.7 & 67.4 \\ 
\multicolumn{1}{c}{SogCLR} & 512 & \textbf{65.0} & \textbf{67.2} & \textbf{68.8} & \textbf{69.6} \\ \hline
\end{tabular}}
\end{table}

\subsection{Vision and Language Pretraining}
\textbf{Experiment Setup}. In this section, we aim to demonstrate our algorithm can also be applied to solve bi-modal self-supervised problems. We study a popular vision and language pretraining framework, i.e., CLIP~\cite{clip}. CLIP consists of two parts: vision encoder (e.g., CNN, transformer) and text encoder (e.g., transformer). The original CLIP is pretrained on a large dataset with 400 million image-text pairs to achieve competitive performance against supervised baseline. Here, we are not aiming to achieve the best performance but to study and understand the limits of this framework. Thus, we use the modified CLIP consisting of a modified ResNet-50 and a small vision transformer(ViT)~\cite{Dosovitskiy2021AnII}, denoted as CLIP-S. The detailed configuration can be found in Appendix. We use template \textit{"This is a photo of [CLASS]"} to generate the text caption for each image based on ImageNet-S. For training, we use batch size of 128 and 256 to train the models for 30 and 60 epochs. We use warm-up strategy for 1000 iterations to increase learning rate to the maximum value of 0.001 and then decrease it by a cosine decay scheduler. We use Adam-W optimizer~\cite{loshchilov2017decoupled} with the weight decay of 0.1. We set temperature to a fixed value for 0.07 for SogCLR and CLIP. Similar to SimCLR, we tune $\gamma=[0.6\sim 0.99]$ and set $\u_0$ to zeros for SogCLR. For evaluations, we perform zero-shot evaluation on ImageNet-S validation set using the ensemble results of 80 different prompt templates~\cite{clip}. The validation results are presented in Table~\ref{tab:clip_zero_shot}.

\textbf{Results}. We report zero-shot evaluation accuracy of CLIP-S in Table~\ref{tab:clip_zero_shot}. The results indicate that CLIP-S trained by SogCLR performs better than CLIP-S trained by standard InfoNCE loss. In addition, we observe that InfoNCE suffers from 4\% performance drop for training 60 epochs. In contrast, SogCLR has a much more stable performance for longer training and achieves over 1\% improvement on zero-shot evaluation accuracy. In addition, we also find that CLIP with SogCLR is much more robust to the change of batch size while CLIP with InfoNCE drops more than 1\% when switching batch size from 256 to 128. 
\begin{table}[t]
\centering
\caption{Top-1 linear evaluation under different batch sizes for bimodal learning on ImageNet-S.}
\label{tab:clip_zero_shot}
\scalebox{0.95}{
\begin{tabular}{cccc}
\hline
Method   & BatchSize\textbackslash{}Epoch & 30            & 60            \\ \hline
CLIP-S (InfoNCE) & 128                            & 67.7          & 63.4          \\ 
CLIP-S (SogCLR)  & 128                            & \textbf{69.5} & \textbf{71.3} \\ \hline
CLIP-S (InfoNCE) & 256                            & 69.0          & 64.9          \\ 
CLIP-S (SogCLR)  & 256                            & \textbf{69.4} & \textbf{70.1} \\ \hline
\end{tabular}}
\end{table}

\subsection{Ablation Studies}
\textbf{Verification of algorithmic design and theory.}
We validate (i) using the momentum update for $u_{t+1}$ (i.e., $\gamma < 1$) is better than without using momentum update ($\gamma = 1$). (ii) $\E_{\A, \A', \z}|E(\A(\x_i))^{\top}E(\z) - E(\A'(\x_i))^{\top}E(\z)|^2\leq \epsilon^2$ in Theorem 2 holds with a small $\epsilon^2$. In other words, we expect the similarity between the representations of different augmented samples are close. For (i), we train ResNet-50 with batch size of 256 for 100, 200, 400, 800 epochs. We tune the $\gamma$ in [0.6, 0.7, 0.8, 0.9, 0.99]. The results are summarized in the Table~\ref{tab:4} in Appendix. The results indicate that models with $\gamma=0.7\sim0.8$ achieve the best performance. For (ii), we use the models trained with batch size 256 at the checkpoints of 100th, 200th, 400th, 800th epoch to compute  $\E_{\A,\A',\z}[|\A(\x_i)^{\top}\z-\A'(\x_i)^{\top}\z|^2]$ on ImageNet-S, where the expectation is approximated by the Monte Carlo method. We show the histograms of this quantity for all images $\x_i$  in Figure~\ref{fig:hist_error}, which suggests that all data samples satisfy the above condition in Theorem 2 for some small $\epsilon$. 
\begin{figure}[t]
\centering
\includegraphics[scale=0.5]{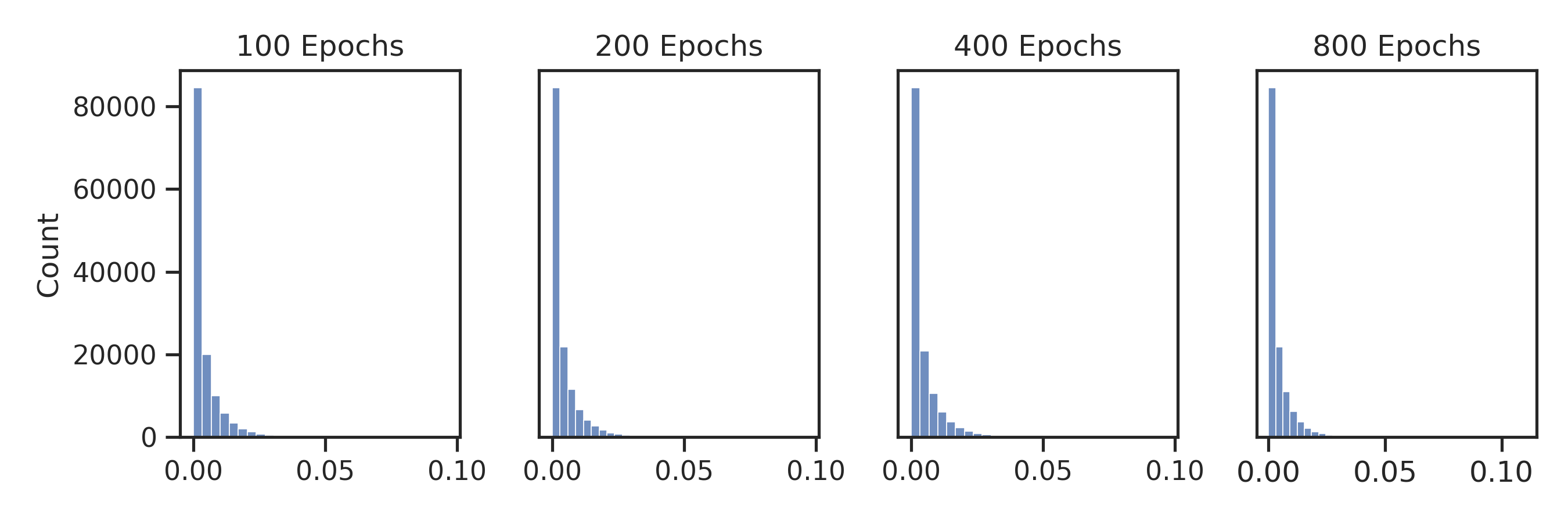}
\vspace{-0.2in}
\caption{Histogram for difference of learned features between different augmented samples. X-axis is $\E_{\A,\A',\z}|E(\A(\x_i))^{\top}E(\z) - E(\A'(\x_i))^{\top}E(\z)|^2$ and Y-axis denotes the count number.} 
\label{fig:hist_error} 
\end{figure} 

\textbf{Impact of batch size}. Since SimCLR suffers from the performance drop due to small batch sizes. Here, we show SogCLR is robust to smaller batch sizes. To verify this hypothesis, we train SogCLR using batch sizes varying from 128 to 8192 with ResNet-50 on ImageNet-1K. We set a fixed $\gamma=0.8$. We directly compare the results taken from Table B.1 in \cite{simclrv1} using the same settings. As shown in Figure \ref{fig:impact_of_batch_size}, the performance of SimCLR drops quickly as the decrease of batch size.  As a comparison, SogCLR remains stable with batch sizes from 8192 to 256 and there is a small drop for batch size of 128. Overall, SogCLR demonstrates the robustness to different batch sizes. This result is consistent with our theory.

\noindent 
\textbf{Different network encoders}. To verify the effectiveness of the proposed method, we further evaluate it on different network encoders. To this end, we train ResNet models by varying widths. We train ResNet-50 ($2\times, 4\times$) using batch size of 512 for 800 epochs. We set $\gamma=0.8$. For baselines, we use the batch size of 4096 to train models for a total of 1000 epochs. The results are summarized in the table below.
\begin{table}[h]
\centering
\caption{Performance with ResNet-50 ($2\times, 4\times$) on ImageNet-1K.}
\label{tab:my-table}
\scalebox{0.95}{
\begin{tabular}{cccccc}
\hline
{Method} & {Encoder} & {Params} & {Batch} & {Top1} & {Top5} \\ \hline
SimCLR & R50 ($2\times$) & 94M & 4096 & 74.2 & 92.0 \\
SogCLR & R50 ($2\times$)   & 94M & 512 & \textbf{74.6} & 92.1 \\ \hline
SimCLR & R50 ($4\times$) & 375M & 4096 & 76.5 & 93.2  \\ 
SogCLR & R50 ($4\times$)    & 375M & 512 & \textbf{76.7} & 93.1 \\\hline
\end{tabular}}
\end{table} 
When using ResNet-50 ($2\times, 4\times$), we are able to achieve 74.6\% and 76.7\% top-1 linear evaluation accuracy, which are better than SimCLR's results trained with a larger batch size of 4096 and a large epoch number. 

\subsection{Comparison with Small Batch Size Methods}
We conduct experiments to compare SogCLR with other two InfoNCE-based small-batch training methods, e.g., FlatNCE~\cite{flatnce} and SiMo~\cite{eqco}. We train ResNet-50 with a 2-layer nonlinear projection head on ImageNet-S using a batch size of 256 for 100, 200, 400, 800 epochs. For SiMo, we set $\alpha=65536$. The results are summarized in Table~\ref{tab:small_batch_comparision}. We observe that all methods outperform SimCLR for 100 and 200 epochs and the improvements for FlatNCE and SiMo seem to disappear when reaching 800 epochs. In contrast, SogCLR performs consistently better. 
\begin{table}[h]
\centering
\caption{{Comparison of small-batch training approaches.}}
\scalebox{0.95}{
\begin{tabular}{cccccc}
\hline
Method         & Batch Size\textbackslash{}Epochs & 100           & 200           & 400           & 800           \\ \hline
SimCLR       & 256                              & 69.7          & 73.6          & 76.1          & 77.4          \\ 
FlatNCE        & 256                              & 71.5          & 75.5          & 76.7          & 77.8          \\ 
SiMo & 256                              & 71.5          & 75.0          & 76.8          & 78.2          \\ 
SogCLR        & 256                              & \textbf{71.9} & \textbf{76.3} & \textbf{78.7} & \textbf{79.4} \\ \hline
\end{tabular}}\label{tab:small_batch_comparision}
\end{table}

\subsection{Combining with Other Useful Tricks}
 We explore two commonly used techniques to boost the performance in our framework, namely, multi-layer projection head~\cite{simclrv1} and multi-crop augmentation~\cite{swav}, and incorporate them into SogCLR. We use a 4-layer MLP projection head with 128 neurons for each layer on the top of ResNet-50 encoder, and use a multi-crop strategy by using 4 crops of size $160\times 160$ and 2 crops of size $96 \times 96$.  SogCLR achieves 72.5\% top-1 linear evaluation accuracy with these two tricks and a batch size of 256 and 800 epochs, which is reported in Table~\ref{tab:ssl_all}. Momentum encoder is introduced by MoCo~\cite{mocov1}. We expect the performance of SogCLR can be further improved by incorporating other techniques, e.g., InfoMin augmentation. We also experiment with different combinations of projection heads and multi-crop data augmentations summarized in Table~\ref{tab:tricks}. 

\begin{table*}[h]
\caption{Comparison of different InfoNCE-loss based contrastive learning methods and their top-1 linear evaluation accuracy on ImageNet-1K.}
\centering
\scalebox{0.88}{
\begin{tabular}{cccccc|c}
\hline
Method & Batch Size & \begin{tabular}[c]{@{}c@{}}Memory \\ Bank\end{tabular} & \begin{tabular}[c]{@{}c@{}}Momentum \\ Encoder\end{tabular} & \begin{tabular}[c]{@{}c@{}}Other \\ Tricks\end{tabular} & Convergence & \multicolumn{1}{c}{Top1 Acc.} \\ \hline
SimCLR~\cite{simclrv1}  & Large-batch & No & No & Strong Aug. &No  & 66.5 \\ 
NNCLR~\cite{dwibedi2021little} & Large-batch & No & No & Nearest Neighbors & No & 68.7 \\
SiMo~\cite{eqco} & Small-batch & No & Yes & Margin Trick & No &72.1\\
\hline
MoCov2~\cite{mocov2}  & Small-batch & Yes & Yes & Strong Aug. & No  & 71.1 \\ 
InfoMin~\cite{tian2020makes} & Small-batch & Yes & Yes & InfoMin Aug.  & No &73.0 \\
\hline
SogCLR (Ours) & Small-batch & No  & No & GC Optimization & Yes  & 72.5 \\ 
\hline
\end{tabular}}\label{tab:ssl_all}
\end{table*}

\section{Conclusion}
In this paper, we have examined the large batch size issue in the training of contrastive self-supervised learning from an optimization perspective. To address this issue, we have proposed a global contrastive objective and an efficient stochastic algorithm with provable convergence guarantee. Our analysis also exhibits why existing methods such as SimCLR require a large batch size for ensuring the optimization error to be small. For future work, we plan to incorporate more advanced techniques into the proposed method to further improve the performance. 

\section*{Acknowledgements}
We would like to thank Quanqi Hu for the help on the proofs.  We also thank anonymous reviewers for their constructive comments. This work is partially supported by NSF Grants 2110545, 1844403, and 1733742.


\bibliographystyle{plain}
\bibliography{reference}

\begin{thebibliography}{10}

\bibitem{swav}
Mathilde Caron, Ishan Misra, Julien Mairal, Priya Goyal, Piotr Bojanowski, and
  Armand Joulin.
\newblock Unsupervised learning of visual features by contrasting cluster
  assignments.
\newblock {\em arXiv preprint arXiv:2006.09882}, 2020.

\bibitem{chen2021simpler}
Junya Chen, Zhe Gan, Xuan Li, Qing Guo, Liqun Chen, Shuyang Gao, Tagyoung
  Chung, Yi~Xu, Belinda Zeng, Wenlian Lu, et~al.
\newblock Simpler, faster, stronger: Breaking the log-k curse on contrastive
  learners with flatnce.
\newblock {\em arXiv preprint arXiv:2107.01152}, 2021.

\bibitem{flatnce}
Junya Chen, Zhe Gan, Xuan Li, Qing Guo, Liqun Chen, Shuyang Gao, Tagyoung
  Chung, Yi~Xu, Belinda Zeng, Wenlian Lu, et~al.
\newblock Simpler, faster, stronger: Breaking the log-k curse on contrastive
  learners with flatnce.
\newblock {\em arXiv preprint arXiv:2107.01152}, 2021.

\bibitem{simclrv1}
Ting Chen, Simon Kornblith, Mohammad Norouzi, and Geoffrey Hinton.
\newblock A simple framework for contrastive learning of visual
  representations.
\newblock In {\em International conference on machine learning}, pages
  1597--1607. PMLR, 2020.

\bibitem{simclrv2}
Ting Chen, Simon Kornblith, Kevin Swersky, Mohammad Norouzi, and Geoffrey
  Hinton.
\newblock Big self-supervised models are strong semi-supervised learners.
\newblock {\em arXiv preprint arXiv:2006.10029}, 2020.

\bibitem{mocov2}
Xinlei Chen, Haoqi Fan, Ross Girshick, and Kaiming He.
\newblock Improved baselines with momentum contrastive learning.
\newblock {\em arXiv preprint arXiv:2003.04297}, 2020.

\bibitem{DBLP:conf/cvpr/ChenH21}
Xinlei Chen and Kaiming He.
\newblock Exploring simple siamese representation learning.
\newblock In {\em {IEEE} Conference on Computer Vision and Pattern Recognition,
  {CVPR} 2021, virtual, June 19-25, 2021}, pages 15750--15758. Computer Vision
  Foundation / {IEEE}, 2021.

\bibitem{mocov3}
Xinlei Chen, Saining Xie, and Kaiming He.
\newblock An empirical study of training self-supervised vision transformers.
\newblock {\em arXiv preprint arXiv:2104.02057}, 2021.

\bibitem{chopra2005learning}
Sumit Chopra, Raia Hadsell, and Yann LeCun.
\newblock Learning a similarity metric discriminatively, with application to
  face verification.
\newblock In {\em 2005 IEEE Computer Society Conference on Computer Vision and
  Pattern Recognition (CVPR'05)}, volume~1, pages 539--546. IEEE, 2005.

\bibitem{chuang2020debiased}
Ching-Yao Chuang, Joshua Robinson, Yen-Chen Lin, Antonio Torralba, and Stefanie
  Jegelka.
\newblock Debiased contrastive learning.
\newblock {\em Advances in neural information processing systems},
  33:8765--8775, 2020.

\bibitem{deng2009imagenet}
Jia Deng, Wei Dong, Richard Socher, Li-Jia Li, Kai Li, and Li~Fei-Fei.
\newblock Imagenet: A large-scale hierarchical image database.
\newblock In {\em 2009 IEEE conference on computer vision and pattern
  recognition}, pages 248--255. Ieee, 2009.

\bibitem{devlin2018bert}
Jacob Devlin, Ming-Wei Chang, Kenton Lee, and Kristina Toutanova.
\newblock Bert: Pre-training of deep bidirectional transformers for language
  understanding.
\newblock {\em arXiv preprint arXiv:1810.04805}, 2018.

\bibitem{dosovitskiy2020image}
Alexey Dosovitskiy, Lucas Beyer, Alexander Kolesnikov, Dirk Weissenborn,
  Xiaohua Zhai, Thomas Unterthiner, Mostafa Dehghani, Matthias Minderer, Georg
  Heigold, Sylvain Gelly, et~al.
\newblock An image is worth 16x16 words: Transformers for image recognition at
  scale.
\newblock {\em arXiv preprint arXiv:2010.11929}, 2020.

\bibitem{Dosovitskiy2021AnII}
Alexey Dosovitskiy, Lucas Beyer, Alexander Kolesnikov, Dirk Weissenborn,
  Xiaohua Zhai, Thomas Unterthiner, Mostafa Dehghani, Matthias Minderer, Georg
  Heigold, Sylvain Gelly, Jakob Uszkoreit, and Neil Houlsby.
\newblock An image is worth 16x16 words: Transformers for image recognition at
  scale.
\newblock {\em ArXiv}, abs/2010.11929, 2021.

\bibitem{dwibedi2021little}
Debidatta Dwibedi, Yusuf Aytar, Jonathan Tompson, Pierre Sermanet, and Andrew
  Zisserman.
\newblock With a little help from my friends: Nearest-neighbor contrastive
  learning of visual representations.
\newblock In {\em Proceedings of the IEEE/CVF International Conference on
  Computer Vision}, pages 9588--9597, 2021.

\bibitem{byol}
Jean-Bastien Grill, Florian Strub, Florent Altch{\'e}, Corentin Tallec,
  Pierre~H Richemond, Elena Buchatskaya, Carl Doersch, Bernardo~Avila Pires,
  Zhaohan~Daniel Guo, Mohammad~Gheshlaghi Azar, et~al.
\newblock Bootstrap your own latent: A new approach to self-supervised
  learning.
\newblock {\em arXiv preprint arXiv:2006.07733}, 2020.

\bibitem{guo2021stochastic}
Zhishuai Guo, Yi~Xu, Wotao Yin, Rong Jin, and Tianbao Yang.
\newblock On stochastic moving-average estimators for non-convex optimization.
\newblock {\em arXiv preprint arXiv:2104.14840}, 2021.

\bibitem{gutmann2010noise}
Michael Gutmann and Aapo Hyv{\"a}rinen.
\newblock Noise-contrastive estimation: A new estimation principle for
  unnormalized statistical models.
\newblock In {\em Proceedings of the thirteenth international conference on
  artificial intelligence and statistics}, pages 297--304. JMLR Workshop and
  Conference Proceedings, 2010.

\bibitem{10.1109/CVPR.2006.100}
Raia Hadsell, Sumit Chopra, and Yann LeCun.
\newblock Dimensionality reduction by learning an invariant mapping.
\newblock In {\em Proceedings of the 2006 IEEE Computer Society Conference on
  Computer Vision and Pattern Recognition - Volume 2}, CVPR '06, page
  1735–1742, USA, 2006. IEEE Computer Society.

\bibitem{mocov1}
Kaiming He, Haoqi Fan, Yuxin Wu, Saining Xie, and Ross Girshick.
\newblock Momentum contrast for unsupervised visual representation learning.
\newblock In {\em Proceedings of the IEEE/CVF Conference on Computer Vision and
  Pattern Recognition}, pages 9729--9738, 2020.

\bibitem{he2016deep}
Kaiming He, Xiangyu Zhang, Shaoqing Ren, and Jian Sun.
\newblock Deep residual learning for image recognition.
\newblock In {\em Proceedings of the IEEE conference on computer vision and
  pattern recognition}, pages 770--778, 2016.

\bibitem{open_clip}
Gabriel Ilharco, Mitchell Wortsman, Nicholas Carlini, Rohan Taori, Achal Dave,
  Vaishaal Shankar, Hongseok Namkoong, John Miller, Hannaneh Hajishirzi, Ali
  Farhadi, and Ludwig Schmidt.
\newblock Openclip, July 2021.
\newblock If you use this software, please cite it as below.

\bibitem{khosla2020supervised}
Prannay Khosla, Piotr Teterwak, Chen Wang, Aaron Sarna, Yonglong Tian, Phillip
  Isola, Aaron Maschinot, Ce~Liu, and Dilip Krishnan.
\newblock Supervised contrastive learning.
\newblock {\em arXiv preprint arXiv:2004.11362}, 2020.

\bibitem{lan2019albert}
Zhenzhong Lan, Mingda Chen, Sebastian Goodman, Kevin Gimpel, Piyush Sharma, and
  Radu Soricut.
\newblock Albert: A lite bert for self-supervised learning of language
  representations.
\newblock {\em arXiv preprint arXiv:1909.11942}, 2019.

\bibitem{li2021efficient}
Chunyuan Li, Jianwei Yang, Pengchuan Zhang, Mei Gao, Bin Xiao, Xiyang Dai,
  Lu~Yuan, and Jianfeng Gao.
\newblock Efficient self-supervised vision transformers for representation
  learning.
\newblock {\em arXiv preprint arXiv:2106.09785}, 2021.

\bibitem{li2020prototypical}
Junnan Li, Pan Zhou, Caiming Xiong, and Steven~CH Hoi.
\newblock Prototypical contrastive learning of unsupervised representations.
\newblock {\em arXiv preprint arXiv:2005.04966}, 2020.

\bibitem{liu2021swin}
Ze~Liu, Yutong Lin, Yue Cao, Han Hu, Yixuan Wei, Zheng Zhang, Stephen Lin, and
  Baining Guo.
\newblock Swin transformer: Hierarchical vision transformer using shifted
  windows.
\newblock {\em arXiv preprint arXiv:2103.14030}, 2021.

\bibitem{loshchilov2017decoupled}
Ilya Loshchilov and Frank Hutter.
\newblock Decoupled weight decay regularization.
\newblock {\em arXiv preprint arXiv:1711.05101}, 2017.

\bibitem{mikolov2013efficient}
Tomas Mikolov, Kai Chen, Greg Corrado, and Jeffrey Dean.
\newblock Efficient estimation of word representations in vector space.
\newblock {\em arXiv preprint arXiv:1301.3781}, 2013.

\bibitem{ReLiCv1}
Jovana Mitrovic, Brian McWilliams, Jacob Walker, Lars Buesing, and Charles
  Blundell.
\newblock Representation learning via invariant causal mechanisms.
\newblock {\em arXiv preprint arXiv:2010.07922}, 2020.

\bibitem{oord2018representation}
Aaron van~den Oord, Yazhe Li, and Oriol Vinyals.
\newblock Representation learning with contrastive predictive coding.
\newblock {\em arXiv preprint arXiv:1807.03748}, 2018.

\bibitem{qi2021stochastic}
Qi~Qi, Youzhi Luo, Zhao Xu, Shuiwang Ji, and Tianbao Yang.
\newblock Stochastic optimization of areas under precision-recall curves with
  provable convergence.
\newblock {\em Advances in Neural Information Processing Systems}, 34, 2021.

\bibitem{Qian2021SpatiotemporalCV}
Rui Qian, Tianjian Meng, Boqing Gong, Ming-Hsuan Yang, H.~Wang, Serge~J.
  Belongie, and Yin Cui.
\newblock Spatiotemporal contrastive video representation learning.
\newblock {\em 2021 IEEE/CVF Conference on Computer Vision and Pattern
  Recognition (CVPR)}, pages 6960--6970, 2021.

\bibitem{clip}
Alec Radford, Jong~Wook Kim, Chris Hallacy, Aditya Ramesh, Gabriel Goh,
  Sandhini Agarwal, Girish Sastry, Amanda Askell, Pamela Mishkin, Jack Clark,
  et~al.
\newblock Learning transferable visual models from natural language
  supervision.
\newblock {\em arXiv preprint arXiv:2103.00020}, 2021.

\bibitem{sohn2016improved}
Kihyuk Sohn.
\newblock Improved deep metric learning with multi-class n-pair loss objective.
\newblock In {\em Advances in neural information processing systems}, pages
  1857--1865, 2016.

\bibitem{tian2020makes}
Yonglong Tian, Chen Sun, Ben Poole, Dilip Krishnan, Cordelia Schmid, and
  Phillip Isola.
\newblock What makes for good views for contrastive learning?
\newblock {\em Advances in Neural Information Processing Systems},
  33:6827--6839, 2020.

\bibitem{ReLiCv2}
Nenad Tomasev, Ioana Bica, Brian McWilliams, Lars Buesing, Razvan Pascanu,
  Charles Blundell, and Jovana Mitrovic.
\newblock Pushing the limits of self-supervised resnets: Can we outperform
  supervised learning without labels on imagenet?
\newblock {\em arXiv preprint arXiv:2201.05119}, 2022.

\bibitem{wang2020understanding}
Tongzhou Wang and Phillip Isola.
\newblock Understanding contrastive representation learning through alignment
  and uniformity on the hypersphere.
\newblock In {\em International Conference on Machine Learning}, pages
  9929--9939. PMLR, 2020.

\bibitem{wu2019large}
Yue Wu, Yinpeng Chen, Lijuan Wang, Yuancheng Ye, Zicheng Liu, Yandong Guo, and
  Yun Fu.
\newblock Large scale incremental learning.
\newblock In {\em Proceedings of the IEEE/CVF Conference on Computer Vision and
  Pattern Recognition}, pages 374--382, 2019.

\bibitem{you2017large}
Yang You, Igor Gitman, and Boris Ginsburg.
\newblock Large batch training of convolutional networks.
\newblock {\em arXiv preprint arXiv:1708.03888}, 2017.

\bibitem{pmlr-v139-zbontar21a}
Jure Zbontar, Li~Jing, Ishan Misra, Yann LeCun, and Stephane Deny.
\newblock Barlow twins: Self-supervised learning via redundancy reduction.
\newblock In Marina Meila and Tong Zhang, editors, {\em Proceedings of the 38th
  International Conference on Machine Learning}, volume 139 of {\em Proceedings
  of Machine Learning Research}, pages 12310--12320. PMLR, 18--24 Jul 2021.

\bibitem{Zhai2021ScalingVT}
Xiaohua Zhai, Alexander Kolesnikov, Neil Houlsby, and Lucas Beyer.
\newblock Scaling vision transformers.
\newblock {\em ArXiv}, abs/2106.04560, 2021.

\bibitem{zhang2021contrastive}
Yuhao Zhang, Hang Jiang, Yasuhide Miura, Christopher~D Manning, and Curtis
  Langlotz.
\newblock Contrastive learning of medical visual representations from paired
  images and text, 2021.

\bibitem{zhu2020eqco}
Benjin Zhu, Junqiang Huang, Zeming Li, Xiangyu Zhang, and Jian Sun.
\newblock Eqco: Equivalent rules for self-supervised contrastive learning.
\newblock {\em arXiv preprint arXiv:2010.01929}, 2020.

\bibitem{eqco}
Benjin Zhu, Junqiang Huang, Zeming Li, Xiangyu Zhang, and Jian Sun.
\newblock Eqco: Equivalent rules for self-supervised contrastive learning.
\newblock {\em arXiv preprint arXiv:2010.01929}, 2020.

\bibitem{zhu2020deformable}
Xizhou Zhu, Weijie Su, Lewei Lu, Bin Li, Xiaogang Wang, and Jifeng Dai.
\newblock Deformable detr: Deformable transformers for end-to-end object
  detection.
\newblock {\em arXiv preprint arXiv:2010.04159}, 2020.

\end{thebibliography}

\onecolumn
\appendix

\section{Experiment Details }

\begin{figure*}[h]
\centering
\includegraphics[scale=0.26]{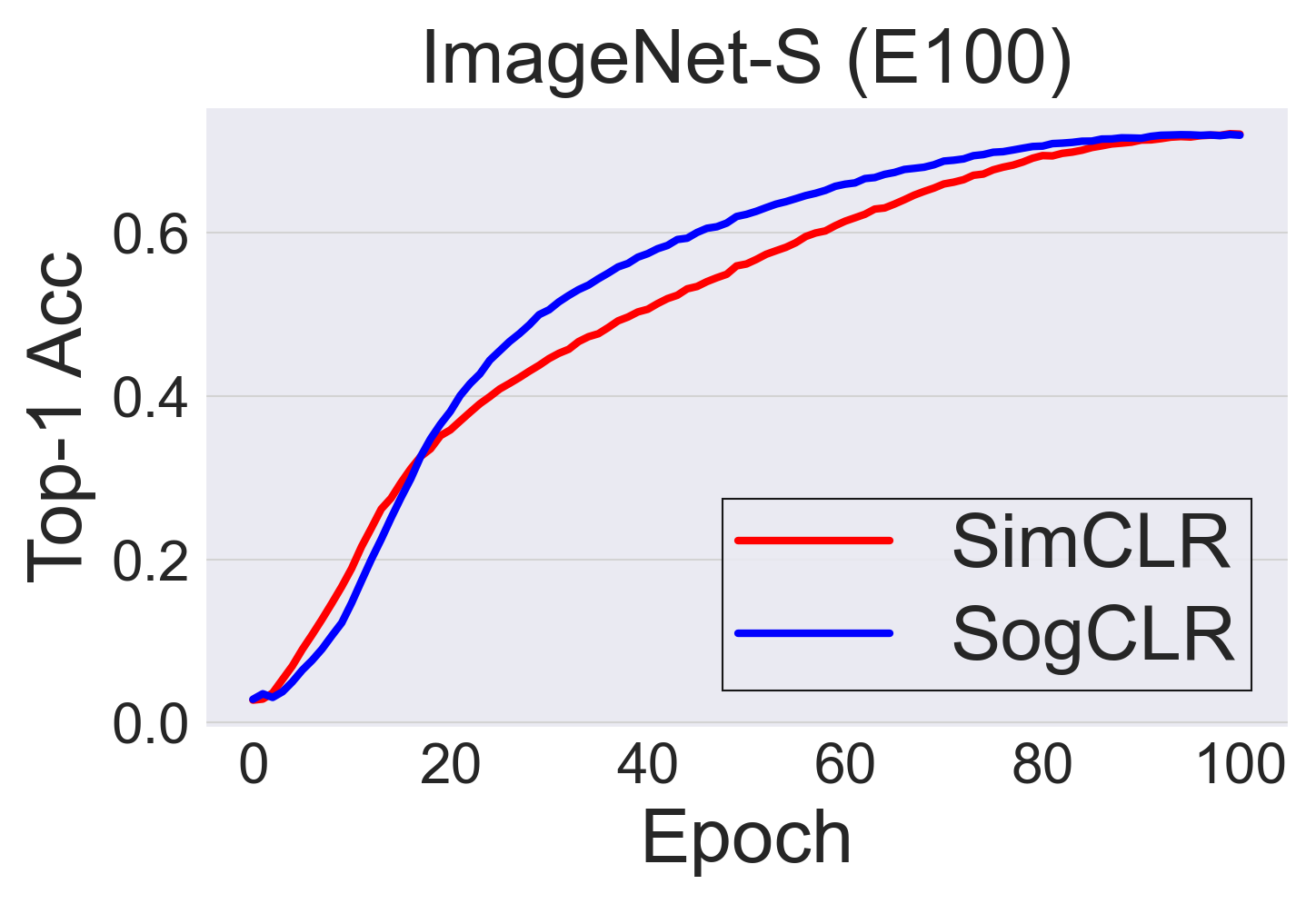}
\includegraphics[scale=0.26]{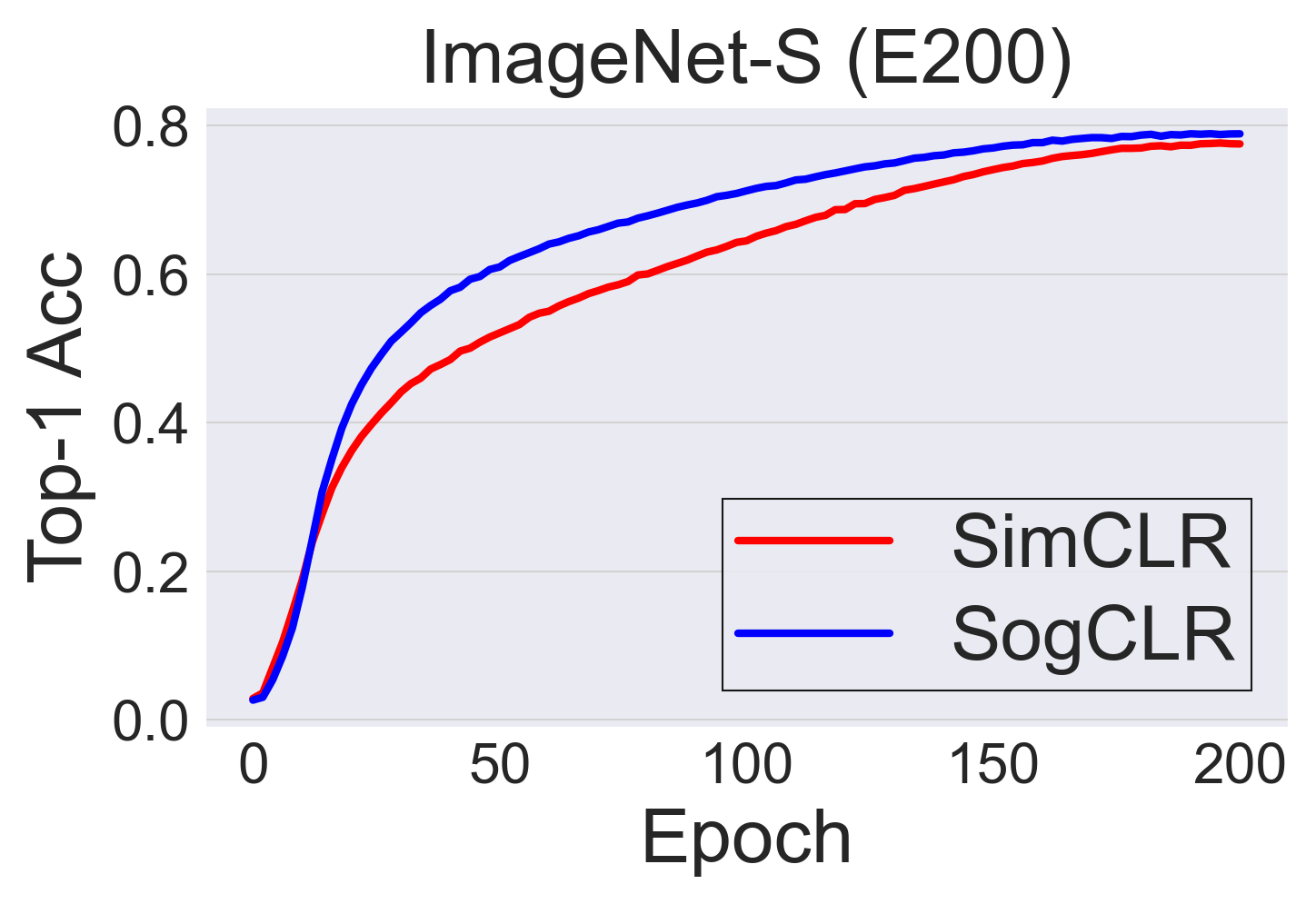}
\includegraphics[scale=0.26]{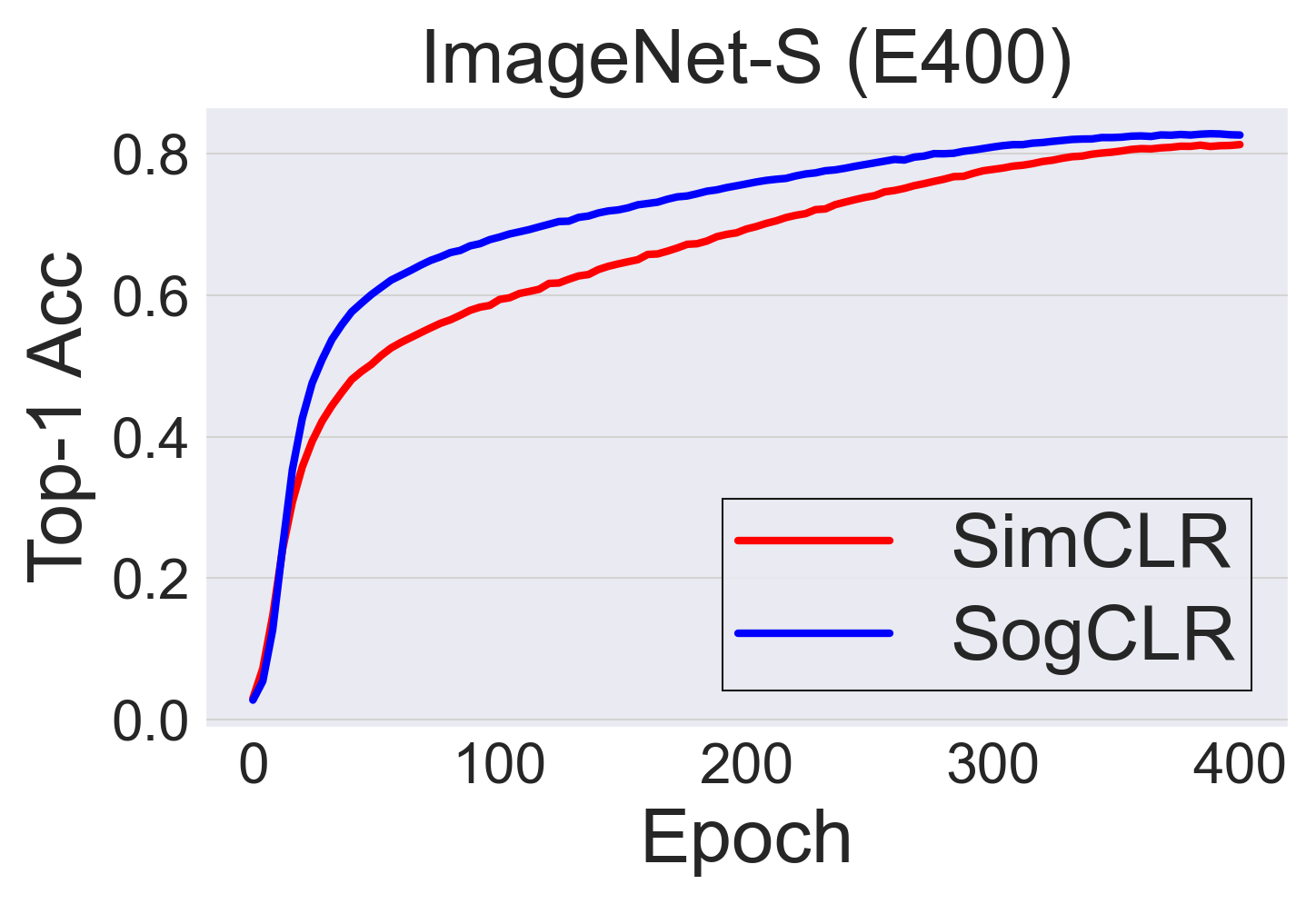}
\includegraphics[scale=0.26]{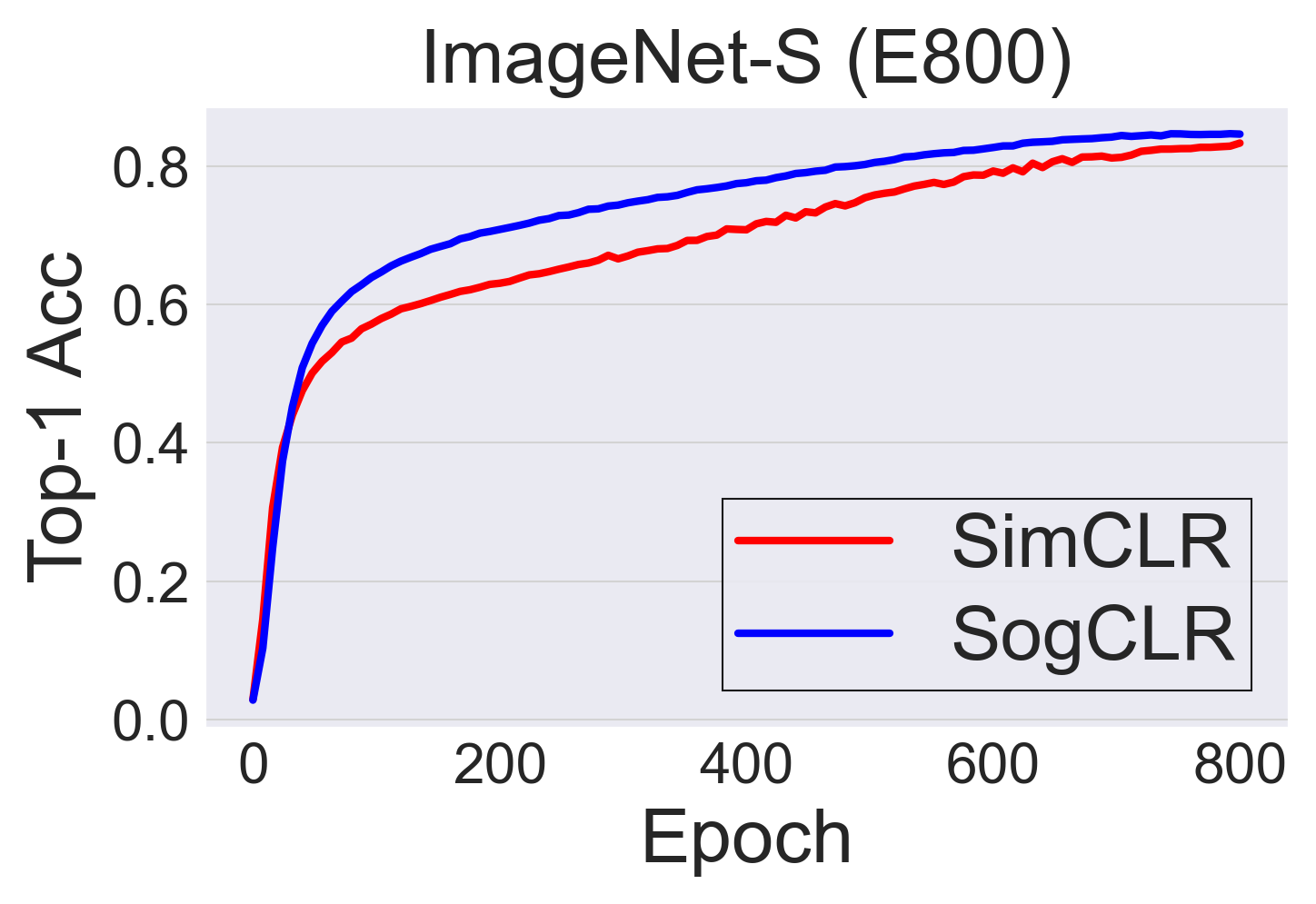}

\includegraphics[scale=0.26]{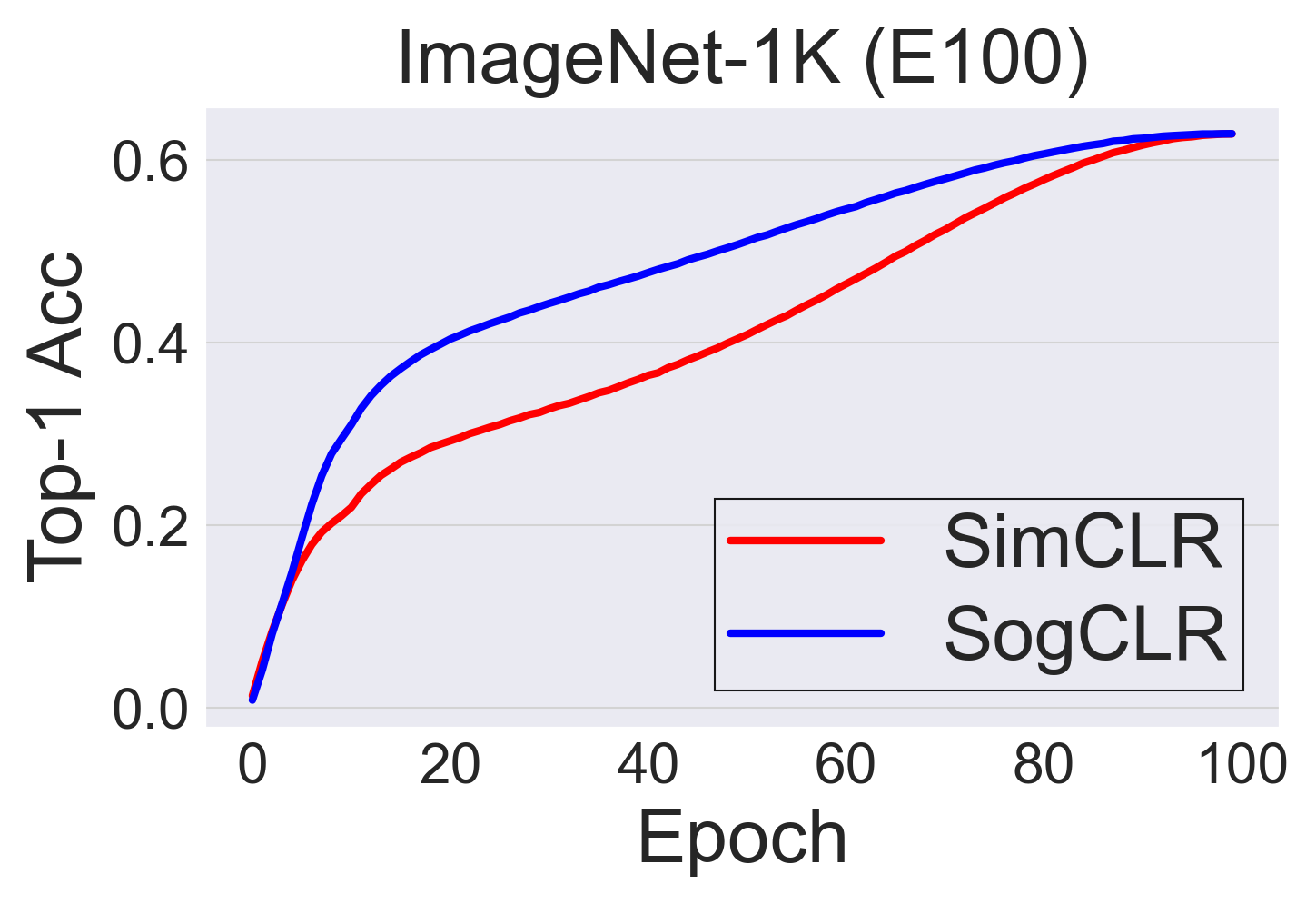}
\includegraphics[scale=0.26]{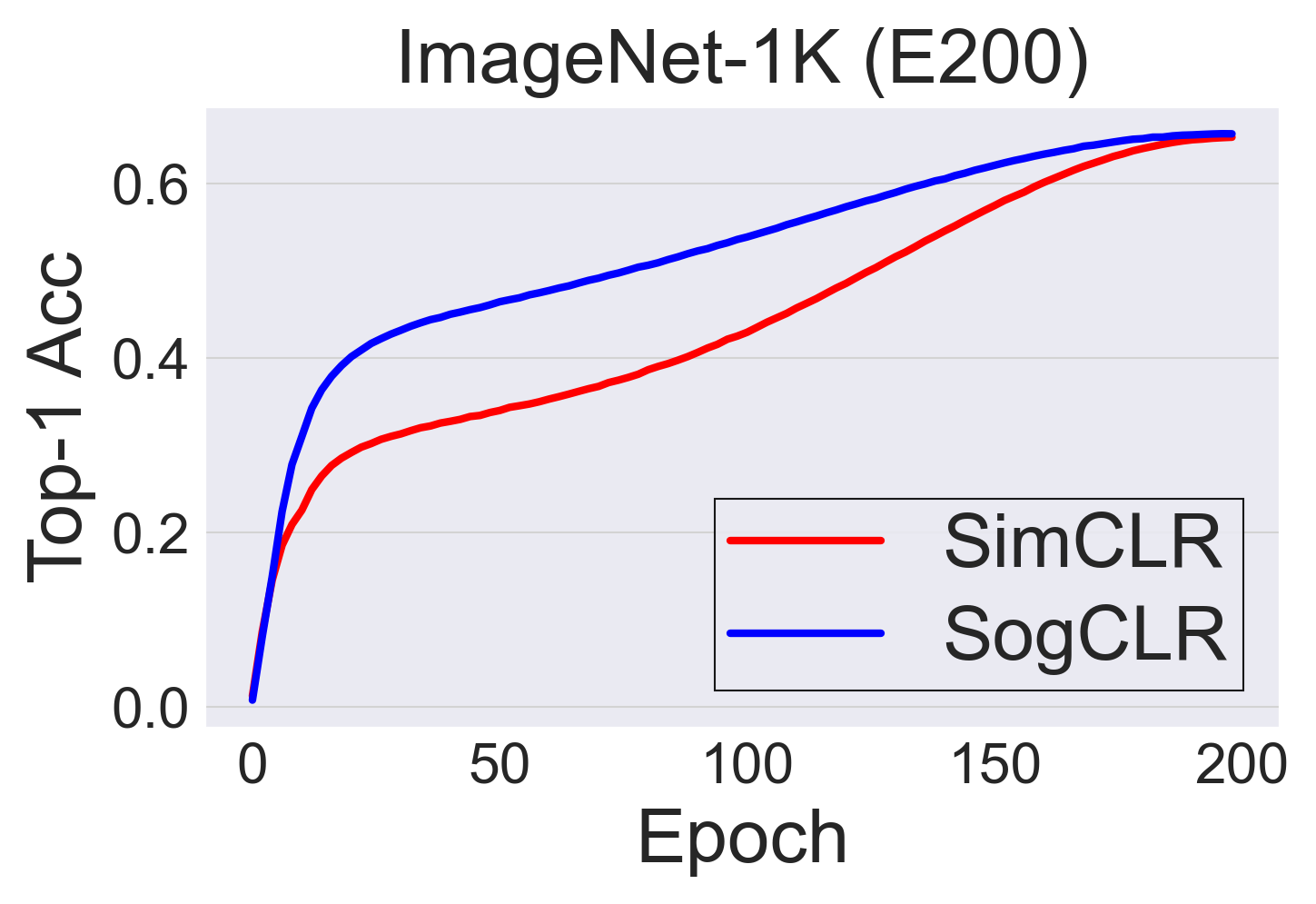}
\includegraphics[scale=0.26]{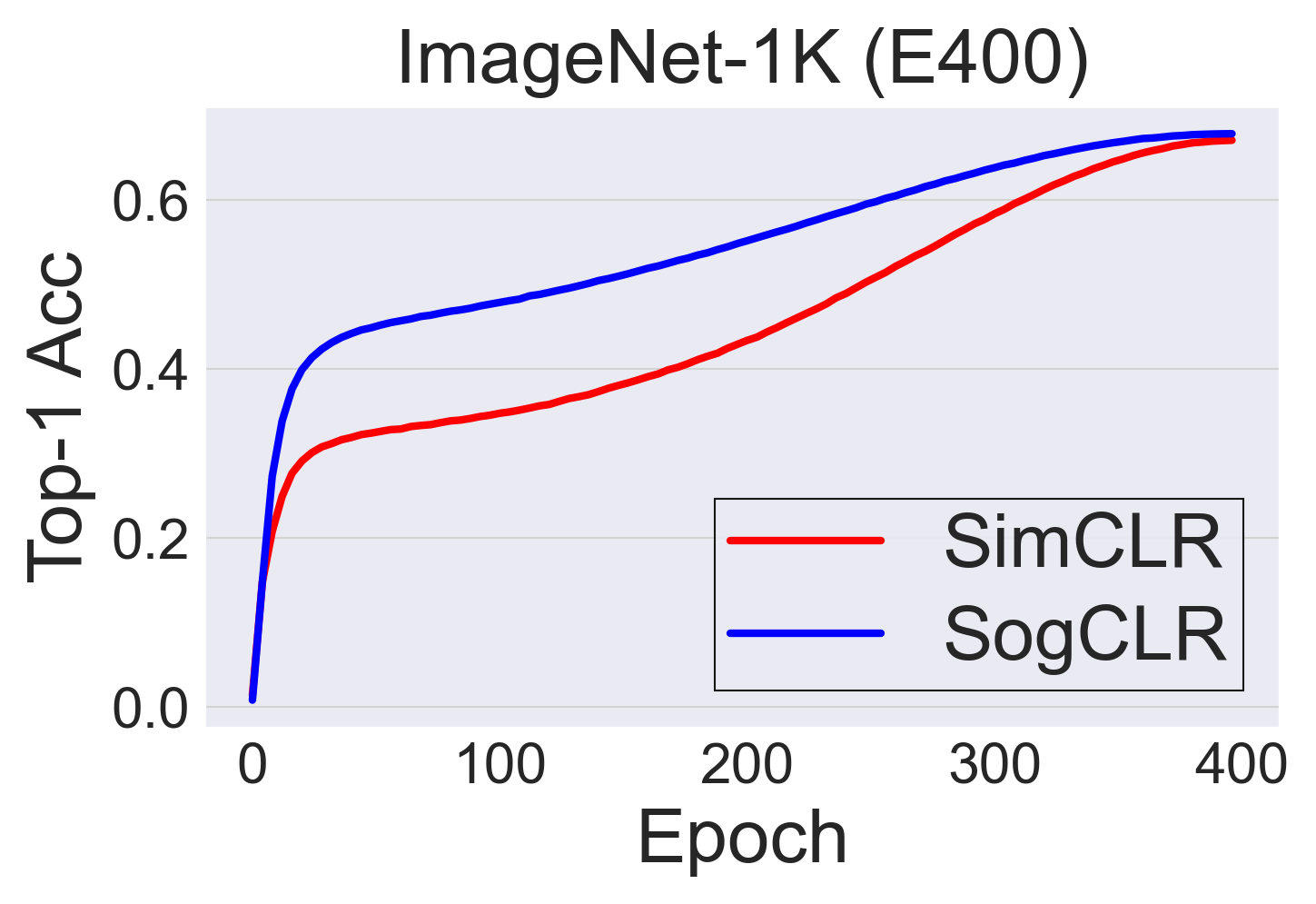}
\includegraphics[scale=0.26]{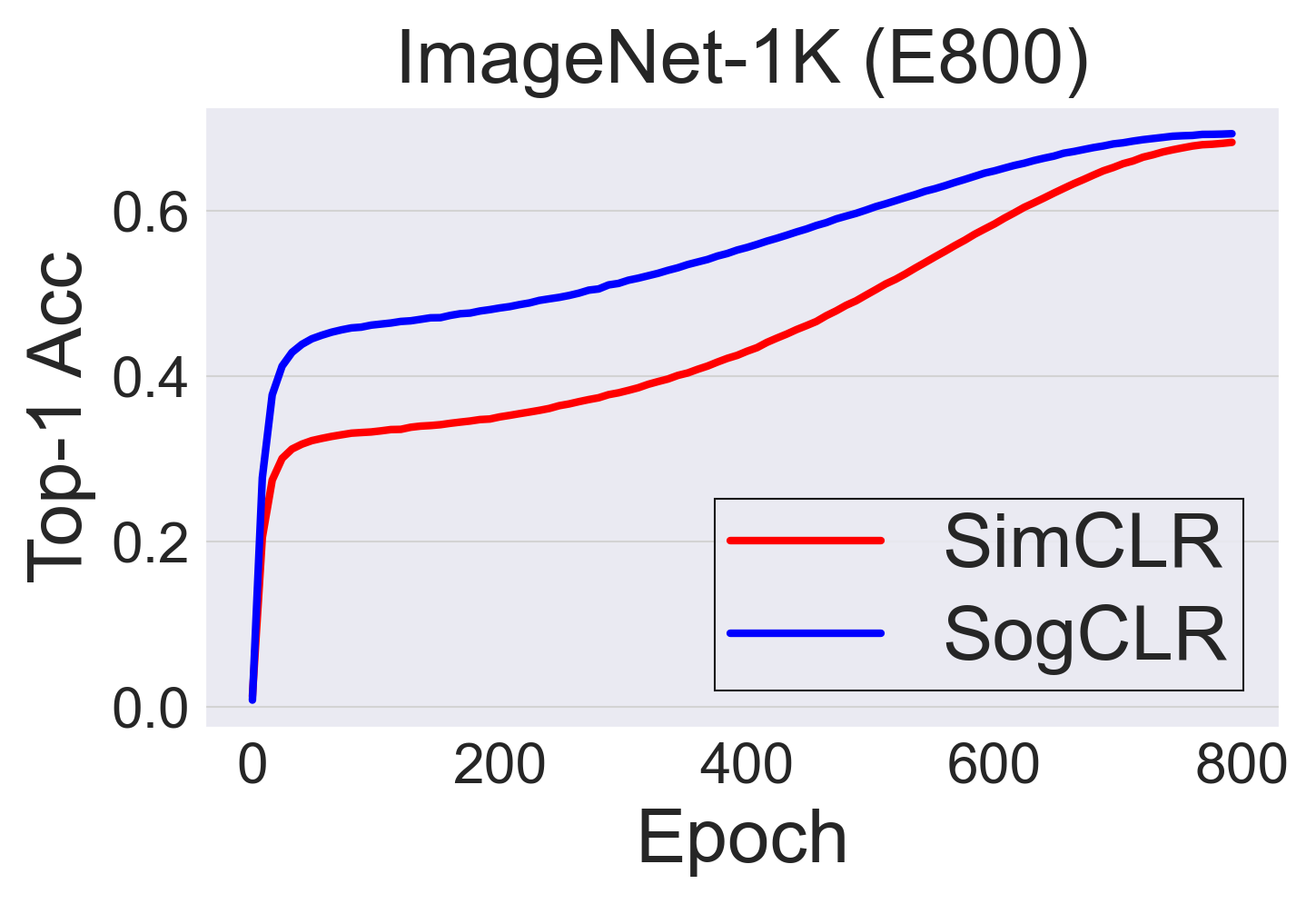}

\caption{Learning curve for top-1 accuracy by linear evaluation on ImageNet-S and ImageNet-1K training set trained on ResNet-50 using batch size of 256. } 
\label{fig:learning_curve_imagenet_100_1000_train} 
\end{figure*} 

\begin{table}[h]
\caption{CLIP-S hyper-parameters.}
\centering
\label{tab:clip_s_configs}
\scalebox{0.95}{
\begin{tabular}{|l|c|}
\hline
\multicolumn{1}{|c|}{Hyperparameter} & Value \\ \hline
embed\_dim & 512 \\ 
image\_resolution & 224$\times$224 \\ 
vision\_layers & {[}3,4,6,3{]} \\ 
vision\_width & 32 \\ 
vision\_patch\_size & null \\ 
context\_length & 77 \\ 
vocab\_size & 49408 \\ 
transformer\_width & 128 \\ 
transformer\_heads & 8 \\ 
transformer\_layers & 8 \\ \hline
\end{tabular}}
\end{table}

\begin{table}[h]
\centering
\caption{Top-1 linear evaluation accuracy trained on ResNet-50 under different number of epochs using batch size of 256 on ImageNet-1K for $\gamma = 1$ v.s. $\gamma < 1$ in Algorithm \ref{alg:sigclr}. }
\label{tab:4}
\scalebox{0.95}{
\begin{tabular}{ccccc}
\hline
$\gamma$\textbackslash{}Epoch & 100 & 200 & 400 & 800 \\ \hline
1.0 & 62.8 & 64.3 & 65.7 & 66.5\\ \hline
0.99 & 64.9 & 67.1 & 68.3 & 69.2 \\ 
0.9 & 65.0 & 66.9 & 68.1 & 69.2 \\ 
0.8 & \textbf{65.2} & 67.1 & 68.4 & 69.3 \\ 
0.7 & 65.0 & \textbf{67.1} & \textbf{68.7} & \textbf{69.4}\\ 
0.6 & 64.4 & 66.7 & 68.3 & 69.2 \\ \hline
\end{tabular}}
\end{table}

\begin{table}[H]
\centering
\caption{Linear evaluations with different nonlinear heads and multi-crop augmentation.}
\scalebox{0.95}{
\begin{tabular}{ccc}
\hline
Num of views & 3-layer proj. head & 4-layer proj. head \\ \hline
$2\times 224$ & 70.7 & 71.3 \\ 
$4\times 160 + 2 \times 96$& 71.7 & \textbf{72.5} \\ \hline
\end{tabular}}\label{tab:tricks}
\end{table}

\section{Notations in the Proofs}
In the following proofs, we abuse the notation: $g_i(\w; \A, \S_i)=g(\w; \x_i, \A, \S_i)=\frac{1}{|\S_i|}g(\w; \x_i, \A, \S_i)$ and $g_i(\w;\A,\B_i)=g(\w; \x_i, \A, \B_i)=\frac{1}{|\B_i|}g(\w; \x_i, \A, \B_i)$. In the following analysis, we assume $\x_i\in\B$ is independently sampled with replacement and $\A, \A'$ are also independently sampled for each sampled data independently though we abuse the same notations $\A, \A'$ for different data.   It is notable that $\E_{\B_i|\x_i}[g(\w; \x_i, \A, \B_i)] = g(\w; \x_i, \A, \S_i)$. 

We write the objective function as 
\begin{align*}
    F(\w) = F_1(\w) + F_2(\w)
\end{align*}
where we ignore the constant and 
\begin{align*}
    F_1(\w) &= -\E_{\x_i, \A,\A'} [E(\A(\x_i))^{\top}E(\A'(\x_i))]\\
    F_2(\w) &=\frac{\tau}{n}\sum_{\x_i\in\D}\E_{\A}\ln \left(\varepsilon_0 + g(\w; \x_i, \A, \S_i)\right) = \frac{\tau}{n}\sum_{\x_i\in\D}\E_{\A}f(g(\w; \x_i, \A, \S_i))
\end{align*}
where $f(g) = \ln(\varepsilon_0 + g)$.

\section{Proof of Theorem~\ref{thm:1}}
The SimCLR with the update~(\ref{eqn:SimCLR}) uses the following gradient estimator: 
\begin{align*}
    \v_t = \nabla F_1(\w_t; \B) + \frac{1}{B}\sum_{\x_i\in\B}\nabla g(\w; \x_i, \A, \B_i)\nabla f(g(\w; \x_i, \A, \B_i))
\end{align*}
 We make the following standard assumptions. 
\begin{ass}
We assume that there exist $\sigma, C_g, C_f, L_f, L_F$ such that
\begin{itemize}
    \item $\E[\|\nabla F_1(\w; \B) - \nabla F_1(\w)\|^2]\leq \frac{\sigma^2}{B}$
    \item $\E_{\B_i|\x_i}[\|g(\w; \x_i, \A, \B_i) - g(\w; \x_i, \A, \S_i)\|^2]\leq \frac{\sigma^2}{B}$ and   $\E_{\B_i|\x_i}[\|\nabla g(\w; \x_i, \A, \B_i) - \nabla g(\w; \x_i, \A, \S_i)\|^2]\leq \frac{\sigma^2}{B}$

    \item $\|\nabla g_i(\w; \A, \S_i)\|\leq C_g$
    \item $\|\nabla f(g)\|\leq C_f$, and $\nabla f(\cdot)$ is $L_f$ Lipschitz continuous
    \item $F$ is $L_F$-smooth. 
\end{itemize}
\end{ass}
It is notable that the above assumptions are mild or standard for convergence analysis. 

Below, we use $\E_t$ to denote the expectation over randomness at $t$-th iteration given history. First, we have
\begin{align*}
    &\E_t[F(\w_{t+1})]\leq \E_t[F(\w_t) + (\w_{t+1} - \w_t)^{\top}\nabla F(\w_t) + \frac{\eta^2 L_F}{2}\|\v_{t}\|^2]\\
    &= F(\w_t) - \eta\E_t [(\nabla F_1(\w_t; \B) + \frac{1}{B}\sum_{\x_i\in\B}\nabla g(\w; \x_i, \A, \B_i)\nabla f(g(\w; \x_i, \A, \B_i)))]^{\top}\nabla F(\w_t)\\
    &+ \frac{\eta^2 L_F}{2}\|\v_{t}\|^2]\\
    &= F(\w_t) - \eta\E_t [(\nabla F_1(\w_t; \B) + \frac{1}{B}\sum_{\x_i\in\B}\nabla g(\w; \x_i, \A, \B_i)\nabla f(g(\w; \x_i, \A, \S_i)))]^{\top}\nabla F(\w_t)]\\
   & + \eta\E_t [(\nabla F_1(\w_t; \B) + \frac{1}{B}\sum_{\x_i\in\B}\nabla g(\w; \x_i, \A, \B_i)\nabla f(g(\w; \x_i, \A, \S_i)))\\
   &- (\nabla F_1(\w_t; \B) + \frac{1}{B}\sum_{\x_i\in\B}\nabla g(\w; \x_i, \A, \B_i)\nabla f(g(\w; \x_i, \A, \B_i)))^{\top}\nabla F(\w_t)\\
    &+ \frac{\eta^2 L_F}{2}\|\v_{t}\|^2]\\
     &= F(\w_t) - \eta\|\nabla F(\w_t)\|^2 + \eta\E_t[\frac{1}{B}\sum_{\x_i\in\B}\|\nabla F(\w_t)\|C_gL_f\|g(\w; \x_i, \A, \B_i) - g(\w; \x_i, \A, \S_i)\|+ \frac{\eta^2 L_F}{2}\|\v_t\|^2]\\
    &= F(\w_t) - \eta\|\nabla F(\w_t)\|^2 + \frac{\eta}{2}\|\nabla F(\w_t)\|^2 +\frac{\eta C^2_gL^2_f}{2}\E_t[\frac{1}{B}\sum_{\x_i\in\B}\|g(\w; \x_i, \A, \B_i) - g(\w; \x_i, \A, \S_i)\|^2]+ \frac{\eta^2 L_F}{2}\E_t[\|\v_{t}\|^2]\\
     &= F(\w_t) - \frac{\eta}{2}\|\nabla F(\w_t)\|^2  +\frac{\eta C^2_gL^2_f\sigma^2}{2B}+ \frac{\eta^2 L_F}{2}\E_t[\|\v_{t}\|^2]
\end{align*}

Then we have
\begin{align*}
    &\E[\|\v_t - \nabla F(\w_t)\|^2] = \E[\|\nabla F_1(\w_t; \B)  - \nabla F_1(\w_t)\\
     &+ \frac{1}{B}\sum_{\x_i\in\B}\nabla g(\w; \x_i, \A, \B_i)\nabla f(g(\w; \x_i, \A, \B_i)) -  \frac{1}{n}\sum_{\x_i\in\D}\E_{\A}\nabla g(\w; \x_i, \A, \S_i)\nabla f(g(\w; \x_i, \A, \S_i))\|^2]\\
     &\leq 2\E[\|\nabla F_1(\w_t; \B)  - \nabla F_1(\w_t)\|^2] \\
     &+ 2\E[\|\frac{1}{B}\sum_{\x_i\in\B}\nabla g(\w; \x_i, \A, \B_i)\nabla f(g(\w; \x_i, \A, \B_i)) -  \frac{1}{n}\sum_{\x_i\in\D}\E_{\A}\nabla g(\w; \x_i, \A, \S_i)\nabla f(g(\w; \x_i, \A, \S_i))\|^2]\\
     &\leq \frac{2\sigma^2}{B} + 2\E[\|\frac{1}{B}\sum_{\x_i\in\B}\nabla g(\w; \x_i, \A, \B_i)\nabla f(g(\w; \x_i, \A, \B_i)) -  \frac{1}{n}\sum_{\x_i\in\D}\E_{\A}\nabla g(\w; \x_i, \A, \S_i)\nabla f(g(\w; \x_i, \A, \S_i))\|^2]
\end{align*}
To bound the second term, we have
\begin{align*}
    &\E[\|\frac{1}{B}\sum_{\x_i\in\B}\nabla g(\w; \x_i, \A, \B_i)\nabla f(g(\w; \x_i, \A, \B_i)) -  \frac{1}{n}\sum_{\x_i\in\D}\E_{\A}\nabla g(\w; \x_i, \A, \S_i)\nabla f(g(\w; \x_i, \A, \S_i))\|^2]\\
    &=\E[\|\frac{1}{B}\sum_{\x_i\in\B}\nabla g(\w; \x_i, \A, \B_i)\nabla f(g(\w; \x_i, \A, \B_i)) -  \frac{1}{B}\sum_{\x_i\in\B}\nabla g(\w; \x_i, \A, \B_i)\nabla f(g(\w; \x_i, \A, \S_i))\\
    &+ \frac{1}{B}\sum_{\x_i\in\B}\nabla g(\w; \x_i, \A, \B_i)\nabla f(g(\w; \x_i, \A, \S_i)) -    \frac{1}{n}\sum_{\x_i\in\D}\E_{\A}\nabla g(\w; \x_i, \A, \S_i)\nabla f(g(\w; \x_i, \A, \S_i))\|^2]\\
    & \leq \E[\frac{2}{B^2}B\sum_{\x_i\in\B}C_g^2 L_f^2\|g(\w; \x_i, \A, \B_i) - g(\w; \x_i, \A, \S_i)\|^2]\\
    &+ 2\E[\|\frac{1}{B}\sum_{\x_i\in\B}\nabla g(\w; \x_i, \A, \B_i)\nabla f(g(\w; \x_i, \A, \S_i)) -    \frac{1}{n}\sum_{\x_i\in\D}\E_{\A}\nabla g(\w; \x_i, \A, \S_i)\nabla f(g(\w; \x_i, \A, \S_i))\|^2]\\
    & = \frac{2C_g^2L_f^2\sigma^2}{B} \\
    &+ 2\E[\|\frac{1}{B}\sum_{\x_i\in\B}\nabla g(\w; \x_i, \A, \B_i)\nabla f(g(\w; \x_i, \A, \S_i)) - \frac{1}{B}\sum_{\x_i\in\B}[\nabla g(\w; \x_i, \A, \S_i)\nabla f(g(\w; \x_i, \A, \S_i))  \\
    &+\frac{1}{B}\sum_{\x_i\in\B}[\nabla g(\w; \x_i, \A, \S_i)\nabla f(g(\w; \x_i, \A, \S_i)) - \frac{1}{n}\sum_{\x_i\in\D}\E_{\A}\nabla g(\w; \x_i, \A, \S_i)\nabla f(g(\w; \x_i, \A, \S_i))\|^2]\\
        & = \frac{2C_g^2L_f^2\sigma^2}{B} + \frac{4C_f^2\sigma^2}{B} \\
    &+4\E[\|\frac{1}{B}\sum_{\x_i\in\B}[\nabla g(\w; \x_i, \A, \S_i)\nabla f(g(\w; \x_i, \A, \S_i)) - \frac{1}{n}\sum_{\x_i\in\D}\E_{\A}\nabla g(\w; \x_i, \A, \S_i)\nabla f(g(\w; \x_i, \A, \S_i))\|^2]\\
    & \leq  \frac{2C_g^2L_f^2\sigma^2}{B} + \frac{4C_f^2\sigma^2}{B} \\
    &+4\E[\|\frac{1}{B}\sum_{\x_i\in\B}[\nabla g(\w; \x_i, \A, \S_i)\nabla f(g(\w; \x_i, \A, \S_i))\|^2]\\
     & \leq  \frac{2C_g^2L_f^2\sigma^2}{B} + \frac{4C_f^2\sigma^2}{B} +4C_g^2C_f^2\\
\end{align*}
As a result, 
\begin{align*}
    \E[\|\v_t\|^2]\leq 2\|\nabla F(\w_t)\|^2 + \frac{C}{B} + 16C_g^2C_f^2,
\end{align*}
where $C$ is a proper constant. 
By combining the above results together, we have
\begin{align*}
  \E[F(\w_{t+1})]\leq   F(\w_t) - \frac{\eta}{2}\|\nabla F(\w_t)\|^2  +\frac{\eta C}{B}+ \eta^2 L_F\|\nabla F(\w_t)\|^2 + 16\eta^2L_FC_g^2C_f^2.
\end{align*} 
Then  with $\eta L_F\leq 1/4$, we have
\begin{align*}
    \E[\frac{1}{T}\sum_{t=1}^T\|\nabla F(\w_t)\|^2]\leq\frac{ 4(F(\w_1) - F_*) }{\eta T}+ 64\eta L_fC_f^2C_g^2 + \frac{4C}{B}, 
\end{align*}
which finises the proof. 

We can also sharpen the bound of $\E[\|\v_t - \nabla F(\w_t)\|^2]$ by noting that 

\begin{align*}
    &\E[\|\frac{1}{B}\sum_{\x_i\in\B}[\nabla g(\w; \x_i, \A, \S_i)\nabla f(g(\w; \x_i, \A, \S_i)) - \frac{1}{n}\sum_{\x_i\in\D}\E_{\A}\nabla g(\w; \x_i, \A, \S_i)\nabla f(g(\w; \x_i, \A, \S_i))\|^2]
    \\ &=\E[\|\frac{1}{B}\sum_{\x_i\in\B}[\nabla g(\w; \x_i, \A, \S_i)\nabla f(g(\w; \x_i, \A, \S_i)) -
    \frac{1}{B}\sum_{\x_i\in\B}\E_{\A}[\nabla g(\w; \x_i, \A, \S_i)\nabla f(g(\w; \x_i, \A, \S_i))+\\
    &\frac{1}{B}\sum_{\x_i\in\B}\E_{\A}[\nabla g(\w; \x_i, \A, \S_i)\nabla f(g(\w; \x_i, \A, \S_i))-\frac{1}{n}\sum_{\x_i\in\D}\E_{\A}\nabla g(\w; \x_i, \A, \S_i)\nabla f(g(\w; \x_i, \A, \S_i))\|^2]\\
    &\leq \frac{4C_g^2C_f^2}{B}.
\end{align*}

As a result, $ \E[\|\v_t - \nabla F(\w_t)\|^2]\leq  \frac{2C_g^2L_f^2\sigma^2}{B} + \frac{4C_f^2\sigma^2}{B} +\frac{4C_g^2C_f^2}{B}$, then  with $\eta L_F\leq 1/4$, we have
\begin{align*}
    \E[\frac{1}{T}\sum_{t=1}^T\|\nabla F(\w_t)\|^2]\leq\frac{ 4(F(\w_1) - F_*) }{\eta T}+ \frac{C}{B},
\end{align*}
which still has a dependence of $1/B$. However, we can set $\eta=O(1)$ and $T=O(1/\epsilon^2), B=O(1/\epsilon^2)$ in order to achieve an $\epsilon$-stationary solution.  

\section{Proof of Theorem~\ref{thm:2}}
First, we note that the gradient estimator $\m_t$ is 
\begin{align*}
\m_{t} &= \nabla F_1(\w_t; \B) + \frac{1}{B}\sum_{\x_i\in\B}\nabla f(u_{i,t})\underbrace{\frac{1}{2}(\nabla g(\w_t; \x_i, \A, \B_i) + \nabla g(\w_t; \x_i, \A', \B_i))}\limits_{\nabla g_i(\w_t; \A, \A', \B_i)}\\
\u_{i, t+1} & = (1-\gamma) \u_{i, t} + \gamma \underbrace{\frac{1}{2}(g(\w_t; \x_i, \A,  \B_i)+g(\w_t; \x_i, \A',  \B_i))}\limits_{g_i(\w_t; \A, \A', \B_i)}
\end{align*}
Define $g_i(\w) = \E_{\A}[g(\w; \x_i, \A, \S_i)]$. We can see that $\E_{\B_i, \A, \A'|\x_i}[g_i(\w_t; \A, \A', \B_i)] = g_i(\w_t)$, and $\E_{\B_i, \A, \A'|\x_i}[\nabla g_i(\w_t; \A, \A', \B_i)]=\nabla g_i(\w_t)$. 

We make the following assumptions. 
\begin{ass}
We assume that there exist $\sigma, C_g, C_f, L_f, L_F$ such that
\begin{itemize}
    \item $\E[\|\nabla F_1(\w; \B) - \nabla F_1(\w)\|^2]\leq \frac{\sigma^2}{B}$
    \item $\E_{\B_i|\x_i}[\|g(\w; \x_i, \A, \B_i) - g(\w; \x_i, \A, \S_i)\|^2]\leq \frac{\sigma^2}{B}$ and   $\E_{\B_i|\x_i}[\|\nabla g(\w; \x_i, \A, \B_i) - \nabla g(\w; \x_i, \A, \S_i)\|^2]\leq \frac{\sigma^2}{B}$
    \item $\|\nabla f(g)\|\leq C_f, \|\nabla g_i(\w)\|\leq C_g$, $\|\nabla F_1(\w)\|\leq C_{F_1}$, $\|\nabla g_i(\w; \A, \S_i)\|\leq C_g$
    \item $\nabla f(\cdot)$ is $L_f$ Lipschitz continuous
    \item $F$ is $L_F$-smooth. 
    \item $\|E(\z)\|\leq 1$, $\forall \z$
    \item $\E_{\A, \A'}\E_{\z\sim\S_i}|E(\A(\x_i))^{\top}E(\z)] - E(\A'(\x_i))^{\top}E(\z)|^2\leq \epsilon^2$ for any $\x_i\in\D$
\end{itemize}
\end{ass}
We note that under the above assumption we have

\begin{align*}
 &\E_{\A, \A'}\|g_i(\w_{t}; \A', \S_i) - g_i(\w_{t}; \A, \S_i)\|^2\\    &=\E_{\A, \A'}\|\E_{\z\sim\S_i}(\exp(E(\A(\x_i))^{\top}E(\z)/\tau) - \E_{\z\sim\S_i}(\exp(E(\A'(\x_i))^{\top}E(\z)/\tau)\|^2\\
 &\leq \E_{\A, \A'}\E_{\z\sim\S_i}\|(\exp(E(\A(\x_i))^{\top}E(\z)/\tau) - (\exp(E(\A'(\x_i))^{\top}E(\z)/\tau)\|^2\\
 &\leq C\E_{\A, \A'}\E_{\z\sim\S_i}\|E(\A(\x_i))^{\top}E(\z)/\tau - E(\A'(\x_i))^{\top}E(\z)/\tau\|^2\\
& \leq O(\epsilon^2)
\end{align*}
where $C$ is a proper constant that bounds the Lipschitz of $\exp(E(\dot)E(\cdot)/\tau)$. 

 We need the following lemma, whose proof can be found in~\cite{guo2021stochastic} and thus is omitted here.  
\begin{lemma}\label{lem:nonconvex_starter}
Consider a sequence $\w_{t+1} = \w_t - \eta \v_t$ and the $L_F$-smooth function $F$ and the step size $\eta_t L_F\leq 1/2$. 
\begin{align}\label{eq:nonconvex_starter}
	F(\w_{t+1}) & \leq F(\w_t) + \frac{\eta}{2}\Norm{\Delta_t}^2 - \frac{\eta}{2}\Norm{\nabla F(\w_t)}^2 - \frac{\eta}{4}\Norm{\v_t}^2,
\end{align}	
where $\Delta_t =\v_t - \nabla F(\w_t)$.
\end{lemma}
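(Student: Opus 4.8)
This is the standard one-step descent inequality for an approximate gradient step, and the plan is to combine the quadratic upper bound from $L_F$-smoothness with an exact polarization identity, then absorb the leftover $\Norm{\v_t}^2$ term using the step-size restriction. First I would invoke $L_F$-smoothness of $F$ in the quadratic-upper-bound form $F(\w_{t+1})\le F(\w_t)+\inner{\nabla F(\w_t)}{\w_{t+1}-\w_t}+\frac{L_F}{2}\Norm{\w_{t+1}-\w_t}^2$ and substitute the update rule $\w_{t+1}-\w_t=-\eta\v_t$, obtaining $F(\w_{t+1})\le F(\w_t)-\eta\inner{\nabla F(\w_t)}{\v_t}+\frac{\eta^2 L_F}{2}\Norm{\v_t}^2$.

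The key step is to rewrite the cross term \emph{exactly} rather than bounding it with Young's inequality. Expanding $\Norm{\Delta_t}^2=\Norm{\v_t-\nabla F(\w_t)}^2=\Norm{\v_t}^2-2\inner{\v_t}{\nabla F(\w_t)}+\Norm{\nabla F(\w_t)}^2$ and rearranging gives $-\inner{\nabla F(\w_t)}{\v_t}=\tfrac12\big(\Norm{\Delta_t}^2-\Norm{\v_t}^2-\Norm{\nabla F(\w_t)}^2\big)$. Plugging this into the previous display yields $F(\w_{t+1})\le F(\w_t)+\frac{\eta}{2}\Norm{\Delta_t}^2-\frac{\eta}{2}\Norm{\nabla F(\w_t)}^2-\frac{\eta}{2}\Norm{\v_t}^2+\frac{\eta^2 L_F}{2}\Norm{\v_t}^2$.

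Finally I would collect the two $\Norm{\v_t}^2$ contributions into $-\frac{\eta}{2}(1-\eta L_F)\Norm{\v_t}^2$ and use the hypothesis $\eta L_F\le 1/2$, which gives $1-\eta L_F\ge 1/2$ and hence $-\frac{\eta}{2}(1-\eta L_F)\Norm{\v_t}^2\le -\frac{\eta}{4}\Norm{\v_t}^2$; substituting this bound produces exactly the claimed inequality. There is no substantive obstacle here — the argument is a short deterministic manipulation — and the only point deserving care is to use the exact polarization identity (not an inequality) for $\inner{\nabla F(\w_t)}{\v_t}$, so that the coefficient of $\Norm{\nabla F(\w_t)}^2$ comes out as $-\eta/2$ precisely, leaving all the slack to be taken up by the $\Norm{\v_t}^2$ term via the step-size condition.
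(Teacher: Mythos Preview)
Your proof is correct and is exactly the standard argument for this descent inequality. The paper itself does not give a proof of this lemma---it simply cites \cite{guo2021stochastic} and omits the details---so there is nothing to compare against beyond noting that your derivation (smoothness upper bound, the polarization identity $-\inner{\nabla F(\w_t)}{\v_t}=\tfrac12(\Norm{\Delta_t}^2-\Norm{\v_t}^2-\Norm{\nabla F(\w_t)}^2)$, and the step-size condition $\eta L_F\le 1/2$ to absorb the quadratic remainder) is precisely the proof one would find in that reference.
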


\begin{lemma}\label{lem:grad_recursion}
	Assume $\E_{\A, \A'}\|g_i(\w_{t}; \A', \S_i) - g_i(\w_{t}; \A, \S_i)\|^2\leq \epsilon^2$, we have
\begin{align*}
	& \E[\Norm{\Delta_{t}}^2] \leq(1-\beta) \E\left[\Norm{\Delta_{t-1}}^2\right] + \frac{2L_F^2\eta^2\E\left[\Norm{\v_{t-1}}^2\right]}{\beta} + \frac{14\beta L_f^2C_g^2}{n}\E\left[\Norm{\Xi_{t}}^2\right]  + \frac{14\beta L_f^2C_g^2}{n}\E\left[\Norm{\u_{t} -\u_{t+1}}^2\right]\\
	&+ \frac{\beta^2C}{B} + 5\beta C_g^2L_f^2\epsilon^2.
\end{align*}
where $\Delta_t = \v_t - \nabla F(\w_t)$, $\Xi_t =  \u_{t+1} - \bg(\w_t)$, $\bg(\w)=(g_1(\w), \ldots, g_n(\w))$, and $C$ is a proper constant.
\end{lemma}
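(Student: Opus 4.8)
The plan is to obtain the recursion from the momentum update $\v_t = (1-\beta)\v_{t-1} + \beta\m_t$ by the usual error-feedback expansion, carefully exploiting the measurability structure. Let $\mathcal{F}_{t-1}$ denote the history through iteration $t-1$; then $\w_t$, $\u_t$, the scalars $u_{i,t}$ appearing in $\m_t$, $\v_{t-1}$ and hence $\Delta_{t-1} = \v_{t-1} - \nabla F(\w_{t-1})$ are all $\mathcal{F}_{t-1}$-measurable, whereas the mini-batch $\B$ and the augmentations drawn at iteration $t$ are fresh. First I would write
\[
\Delta_t = (1-\beta)\Delta_{t-1} + (1-\beta)\big(\nabla F(\w_{t-1}) - \nabla F(\w_t)\big) + \beta\big(\m_t - \nabla F(\w_t)\big)
\]
and set $\bar\m_t = \E[\m_t \mid \mathcal{F}_{t-1}]$, splitting $\m_t - \nabla F(\w_t) = (\m_t - \bar\m_t) + (\bar\m_t - \nabla F(\w_t))$. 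Since $\m_t - \bar\m_t$ is conditionally mean-zero and everything else on the right is $\mathcal{F}_{t-1}$-measurable, squaring and taking expectations kills the cross terms involving $\m_t - \bar\m_t$, leaving $\E\|\Delta_t\|^2 = \E\|(1-\beta)\Delta_{t-1} + (1-\beta)(\nabla F(\w_{t-1}) - \nabla F(\w_t)) + \beta(\bar\m_t - \nabla F(\w_t))\|^2 + \beta^2\,\E\|\m_t - \bar\m_t\|^2$, where the last (mini-batch variance) term is bounded by $\beta^2 C/B$ using the $\sigma^2/B$ bounds in the assumption together with $\|\nabla f\|\le C_f$ and $\|\nabla g_i\|\le C_g$.

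Next I would apply the Young-type inequality $\|(1-\beta)x + y\|^2 \le (1-\beta)\|x\|^2 + \beta^{-1}\|y\|^2$ with $x = \Delta_{t-1}$, producing a $(1-\beta)\E\|\Delta_{t-1}\|^2$ term plus $\beta^{-1}\|(1-\beta)(\nabla F(\w_{t-1}) - \nabla F(\w_t)) + \beta(\bar\m_t - \nabla F(\w_t))\|^2 \le \tfrac{2}{\beta}\|\nabla F(\w_{t-1}) - \nabla F(\w_t)\|^2 + 2\beta\|\bar\m_t - \nabla F(\w_t)\|^2$. By $L_F$-smoothness and $\w_t - \w_{t-1} = -\eta\v_{t-1}$, the first piece becomes $\tfrac{2L_F^2\eta^2}{\beta}\E\|\v_{t-1}\|^2$, which is the $\v_{t-1}$ term in the claimed bound.

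The core of the argument is bounding the bias $\|\bar\m_t - \nabla F(\w_t)\|^2$. Using that the fresh batch is independent of $\mathcal{F}_{t-1}$ and that $\E[\nabla g_i(\w_t;\A,\A',\B_i)] = \nabla g_i(\w_t)$ with $g_i(\w) = \E_{\A}[g(\w;\x_i,\A,\S_i)]$, I would compute $\bar\m_t = \nabla F_1(\w_t) + \tfrac1n\sum_i \nabla f(u_{i,t})\,\nabla g_i(\w_t)$, while $\nabla F(\w_t) = \nabla F_1(\w_t) + \tfrac{\tau}{n}\sum_i \E_{\A}[\nabla f(g(\w_t;\x_i,\A,\S_i))\,\nabla g(\w_t;\x_i,\A,\S_i)]$, and split the difference into $(A) = \tfrac1n\sum_i (\nabla f(u_{i,t}) - \nabla f(g_i(\w_t)))\nabla g_i(\w_t)$ and $(B) = \tfrac1n\sum_i \E_{\A}[(\nabla f(g_i(\w_t)) - \nabla f(g(\w_t;\x_i,\A,\S_i)))\nabla g(\w_t;\x_i,\A,\S_i)]$. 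Term $(B)$ is handled by $L_f$-Lipschitzness of $\nabla f$, Cauchy--Schwarz, $\|\nabla g_i\|\le C_g$ and the hypothesis $\E_{\A,\A'}\|g_i(\w_t;\A',\S_i) - g_i(\w_t;\A,\S_i)\|^2 \le \epsilon^2$ — invoking $\E_{\A}|X_\A - \E_{\A'}X_{\A'}|^2 \le \E_{\A,\A'}|X_\A - X_{\A'}|^2$ — giving an $O(C_g^2 L_f^2\epsilon^2)$ bound, which is the source of the $\epsilon^2$ term. For $(A)$, the key move is to write $u_{i,t} - g_i(\w_t) = (u_{i,t} - u_{i,t+1}) + (u_{i,t+1} - g_i(\w_t))$, i.e. in terms of the $\u$-increment and $\Xi_{t,i}$, so that by $L_f$-Lipschitzness of $\nabla f$, $\|\nabla g_i\|\le C_g$ and Jensen, $\|A\|^2 \le \tfrac{2C_g^2 L_f^2}{n}\big(\|\u_t - \u_{t+1}\|^2 + \|\Xi_t\|^2\big)$. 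Collecting the three contributions with the constants from the Young steps yields exactly the stated recursion. I expect the bias analysis to be the main obstacle: extracting the clean closed form of $\bar\m_t$ demands careful conditional-independence bookkeeping (the tracked scalar $u_{i,t}$ must be frozen relative to the iteration-$t$ batch), and routing the staleness error of $u_{i,t}$ through $\Xi_t$ and the increment $\u_t - \u_{t+1}$, rather than through $\|\v_{t-1}\|$, is precisely what keeps the recursion non-circular when it is later combined with Lemma~\ref{lem:nonconvex_starter} and the companion recursion for $\Xi_t$.
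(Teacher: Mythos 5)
Your proposal is correct and follows essentially the same route as the paper's proof: the same momentum-error recursion, the same three-way attribution of the bias (staleness of $\u$ split into the increment $\u_t-\u_{t+1}$ plus the tracking error $\Xi_t$; the augmentation mismatch controlled via $\E_{\A}|X_\A-\E_{\A'}X_{\A'}|^2\le\E_{\A,\A'}|X_\A-X_{\A'}|^2$ yielding the $\epsilon^2$ term; and the conditionally mean-zero mini-batch fluctuation yielding $\beta^2C/B$), and the same smoothness step producing $\eta^2L_F^2\|\v_{t-1}\|^2/\beta$. The only difference is organizational — you condition on $\mathcal{F}_{t-1}$ first to peel off the variance and then apply Young's inequality to the remaining bias, whereas the paper keeps all five terms $A_1,\dots,A_5$ inside one squared norm and uses orthogonality of $A_5$ against $A_1,A_2$ — which changes nothing substantive (the unspecified absolute constants differ slightly, as they already do between the paper's proof and its own lemma statement).
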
	
\begin{proof}
	We define that $\Delta_t= \v_t  - \nabla F(\w_t)$. Below, for the analysis of $\Delta_{t}$, we note that $\u_{t}$ is independent of the randomness in $\B, \A, \A'$. 
	Based on the update rule $\v_{t} = (1-\beta)\v_{t-1} + \beta \m_{t}$, we have
	\begin{align*}
		& \Norm{\Delta_{t}}^2 = \Norm{\v_{t}  - \nabla F(\w_{t})}^2\\
		& = \Norm{(1-\beta)\v_{t-1} + \beta (\nabla F_1(\w_t; \B)+\frac{1}{B}\sum_{i\in\B}\nabla f([\u_{t-1}]_i)\nabla g_i (\w_{t};\A, \A', \B_i)) - \nabla F(\w_{t})}^2\\
		&= \left\|\underbrace{(1-\beta) (\v_{t-1} - \nabla F(\w_{t-1}))}_{A_1}+ \underbrace{(1-\beta)(\nabla F(\w_{t-1}) - \nabla F(\w_{t}))}_{A_{2}}\right.\\
		&+  \underbrace{\beta \left( \frac{1}{B}\sum_{i\in\B}\nabla f(g_i(\w_{t}))\nabla g_i (\w_{t};\A, \A', \B_i) -  \frac{1}{B}\sum_{i\in\B}\frac{1}{2}\sum_{\a=\A, \A'}\nabla f(g_i(\w_{t}; \a, \S_i))\nabla g_i (\w_{t};\a, \B_i)\right)}_{A_{3}} \\
		& \left. + \underbrace{\beta \left(\frac{1}{B}\sum_{i\in\B}\nabla f([\u_{t-1}]_i)\nabla g_i (\w_t;\A,\A',\B_i) - \frac{1}{B}\sum_{i\in\B}\nabla f(g_i(\w_{t}))\nabla g_i (\w_{t};\A, \A', \B_i)\right)}_{A_4}\right.\\
		& \left. +\underbrace{\beta \left(\nabla F_1(\w_t; \B) + \frac{1}{2B}\sum_{i\in\B}\left(\nabla f(g_i(\w_{t}; \A, \S_i))\nabla g_i (\w_{t};\A,\B_i)  + \nabla f(g_i(\w_{t}; \A', \S_i))\nabla g_i (\w_{t};\A',\B_i)\right)- \nabla F(\w_{t}) \right)}_{A_5}\right\|^2.
	\end{align*}
Note that $\E\left[\inner{A_1}{A_5}\right] = \E\left[\inner{A_2}{A_5}\right] = 0$. Then, 
\begin{align*}
&	\E_t\left[\Norm{A_1 + A_2 + A_3 + A_4+A_5}^2\right]  = \Norm{A_1}^2 + \Norm{A_2}^2 + \E_t\left[\Norm{A_3}^2\right] + \E_t\left[\Norm{A_4}^2\right] + \E_t\left[\Norm{A_5}^2\right]+ 2\inner{A_1}{A_2} \\
	&+ 2\E_t\left[\inner{A_1}{A_3}\right] + 2\E_t\left[\inner{A_1}{A_4}\right] + 2\E_t\left[\inner{A_2}{A_3}\right]+ 2\E_t\left[\inner{A_2}{A_4}\right]+2\E_t\left[\inner{A_3}{A_4}\right]+2\E_t\left[\inner{A_3}{A_5}\right] + 2\E_t\left[\inner{A_4}{A_5}\right].
\end{align*}
Based on Young's inequality for products, we have $2\inner{\a}{\b}\leq \frac{\Norm{\a}^2c}{2} + \frac{2\Norm{\b}^2}{c}$ for $c>0$. 
\begin{align*}
		& \E_t\left[\Norm{A_1 + A_2 + A_3 + A_4 + A_5}^2\right] \\
		&\leq (1+\beta)\Norm{A_1}^2 + (3+3/\beta)\Norm{A_2}^2 + (4+3/\beta)\E_t\left[\Norm{A_3}^2\right] +(4+\frac{3}{\beta})\E_t\left[\Norm{A_4}^2\right]+ 3\E_t\left[\Norm{A_5}^2\right].
\end{align*}
Thus, we have
\begin{align}\label{eq:grad_single_iter}
	\E_t[\Norm{\Delta_t}^2] &\leq (1-\beta)\Norm{\Delta_{t-1}}^2 + (3+3/\beta)\Norm{A_2}^2 + (4+3/\beta)\E_t[\Norm{A_3}^2] + (4+3/\beta)\E_t[\Norm{A_4}^2] + 3\E_t\left[\Norm{A_5}^2\right].
\end{align}
Moreover, we have
\begin{align}\label{eq:bound_A2}
	\Norm{A_2}^2 &= (1-\beta)^2\Norm{\nabla F(\w_{t-1}) - \nabla F(\w_{t})}^2 \leq  (1-\beta)^2\eta^2 L_F^2\Norm{\v_{t-1}}^2 ,\\\nonumber
	\E_t[\|A_3\|^2]&\leq \beta^2 C_g^2L_f^2\E_{\A}\|g_i(\w_{t}) - g_i(\w_{t}; \A, \S_i)\|^2\leq \beta^2 C_g^2L_f^2\E_{\A}\|\E_{\A'}g_i(\w_{t}; \A', \S_i) - g_i(\w_{t}; \A, \S_i)\|^2\\
	&\leq \beta^2 C_g^2L_f^2\E_{\A, \A'}\|g_i(\w_{t}; \A', \S_i) - g_i(\w_{t}; \A, \S_i)\|^2\\
	\E_t[\Norm{A_4}^2] &\leq \E_t[\frac{\beta^2}{B}\sum_{i\in\B}\Norm{\nabla g_i(\w_{t};\A, \A', \B_i)}^2\Norm{\nabla f([\u_{t-1}]_i) - \nabla f(g_i(\w_{t}))}^2]\\\nonumber
	& \leq \E_t[\frac{\beta^2L_f^2}{B}\sum_{i\in\B}  \Norm{\nabla g_i(\w_{t};\A, \A', \B_i)}^2\Norm{[\u_{t-1}]_i-g_i(\w_{t})}^2]\\
	&\leq \beta^2 L_f^2C_g^2 \E_t\left[\frac{1}{B}\sum_{i\in\B}\Norm{[\u_{t-1}]_i-g_i(\w_t)}^2\right]
\end{align}
Since the update rule of $\u_{t}$ is based on $\B$, we have
\begin{align*}
\E_t\left[\frac{1}{B}\sum_{i\in\B}\Norm{[\u_{t-1}]_i-g_i(\w_{t})}^2\right] = \frac{1}{n}\E_t \left[\Norm{\u_{t-1} - \bg(\w_{t})}^2\right],
\end{align*}
where  and $\bg(\w_t)\coloneqq \left[g_1(\w_t),\cdots, g_n(\w_t)\right]^\top$. Then,  $\Norm{\u_{t-1} - \bg(\w_t)}^2=\sum_{i\in\D}\Norm{[\u_{t-1}]_i - g_i(\w_t)}^2$.
Thus, we also have
\begin{align}\nonumber
	& \E_t\left[\Norm{A_5}^2\right] \leq \frac{2\beta^2\sigma^2}{B} \\
	&+  2 \beta^2\E_t\left[\Norm{\left( \frac{1}{2B}\sum_{i\in\B}\left(\nabla f(g_i(\w_{t}; \A, \S_i))\nabla g_i (\w_{t};\A,\B_i)  + \nabla f(g_i(\w_{t}; \A', \S_i))\nabla g_i (\w_{t};\A',\B_i)\right)- \nabla F_2(\w_{t}) \right)}^2\right]\\\nonumber
    &\leq \frac{2\beta^2\sigma^2}{B} +  4\beta^2 \E_t\left[\frac{1}{B}\sum_{i\in\B}\E_{\B_i}\Norm{\left(\nabla f(g_i(\w_{t}; \A, \S_i))\nabla g_i (\w_{t+1};\A, \B_i) - \E_{\B_i}\nabla f(g_i(\w_{t}; \A, \S_i))\nabla g_i (\w_{t}; \A, \B_i)\right)}^2\right]\\\nonumber
	& +  8\beta^2 \E_{\B}\left[\frac{1}{B}\sum_{i\in\B}\E_\A\Norm{\nabla f(g_i(\w_{t}; \A, \S_i))\nabla g_i (\w_{t};\A, \S_i) - \E_{\A}\nabla f(g_i(\w_{t}; \A, \S_i))\nabla g_i (\w_{t};\A, \S_i)}^2\right]\\\nonumber
	& +  8\beta^2 \E_{\B}\left[\Norm{\frac{1}{B}\sum_{i\in\B}\E_{\A}\nabla f(g_i(\w_{t}; \A, \S_i))\nabla g_i (\w_{t};\A, \S_i) - \frac{1}{n}\sum_{i\in\D}\E_{\A}\nabla f(g_i(\w_{t}; \A, \S_i))\nabla g_i (\w_{t};\A, \S_i)}^2\right]\\\nonumber
	&\leq \frac{\beta^2C}{B}\label{eq:bound_A4}  ,
\end{align}
where $C$ is some proper constant.

Define that $\Xi_t= \E\left[\frac{1}{n}\Norm{\u_{t} - \bg(\w_t)}^2\right]$. By combining the above inequalities we have
\begin{align*}
	& \E[\Norm{\Delta_{t}}^2] \leq(1-\beta) \E\left[\Norm{\Delta_{t-1}}^2\right] + \frac{3L_F^2\eta^2\E\left[\Norm{\v_{t-1}}^2\right]}{\beta} + \frac{14\beta L_f^2C_g^2}{n}\E\left[\Norm{\Xi_{t}}^2\right]  + 14\beta L_f^2C_g^2\E\left[\Norm{\u_{t} -\u_{t-1}}^2\right]\\
	&+ \frac{\beta^2C}{B} + 7\beta C_g^2L_f^2\epsilon^2.
\end{align*}
\end{proof}	

\begin{lemma}\label{lem:fval_recursion}
	If $\gamma\leq 1/5$, the following equation holds.
	\begin{align}
		\E\left[\Xi_{t+1}\right] &\leq \left(1-\frac{\gamma B}{4n}\right)\E\left[\Xi_{t}\right] + \frac{5n\eta^2C_g^2\E\left[\Norm{\v_{t}}^2\right]}{\gamma B} + \frac{2\gamma^2\sigma^2B}{nB} -\frac{1}{4n}\E\left[\Norm{\u_{t+1} - \u_t}^2\right].
	\end{align}	
\end{lemma}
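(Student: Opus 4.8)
## Proof Plan for Lemma \ref{lem:fval_recursion}

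The plan is to establish a recursion for the tracking error $\Xi_{t+1} = \frac{1}{n}\E\|\u_{t+1} - \bg(\w_{t+1})\|^2$ by expanding the moving-average update of $\u$ and comparing it against $\bg(\w_{t+1})$. First I would decompose $\u_{t+1} - \bg(\w_{t+1})$ as a telescoping sum over the sampled coordinates: for $i \in \B$, we have $[\u_{t+1}]_i = (1-\gamma)[\u_t]_i + \gamma g_i(\w_t; \A, \A', \B_i)$, while for $i \notin \B$ the coordinate is unchanged. So I would write $[\u_{t+1}]_i - g_i(\w_{t+1}) = \big([\u_{t+1}]_i - g_i(\w_t)\big) + \big(g_i(\w_t) - g_i(\w_{t+1})\big)$, bound the second (drift) term by $L_g$-smoothness of $g_i$ (Lipschitz continuity of $\nabla g_i$, so $\|g_i(\w_t)-g_i(\w_{t+1})\| \le C_g \eta \|\v_t\|$ using $\|\nabla g_i\|\le C_g$), and then handle the first term coordinate-wise splitting into the $i\in\B$ and $i\notin\B$ cases.

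The key step is to take expectation over the random mini-batch $\B$. For a coordinate sampled into $\B$ (which happens with probability roughly $B/n$ under sampling with replacement — more precisely the expected number of distinct sampled indices contributes the $\frac{\gamma B}{4n}$ contraction factor), the moving-average update contracts the error toward $g_i(\w_t)$ since $\E_{\B_i,\A,\A'|\x_i}[g_i(\w_t;\A,\A',\B_i)] = g_i(\w_t)$, giving a $(1-\gamma)$-type shrinkage plus a variance term $\gamma^2 \E\|g_i(\w_t;\A,\A',\B_i) - g_i(\w_t)\|^2 \le \gamma^2\sigma^2/B$ (using the mini-batch variance assumption). For a coordinate not sampled, the error only changes through the $g_i(\w_t)-g_i(\w_{t+1})$ drift term. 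Combining via Young's inequality with parameter chosen proportional to $\gamma$ (to split the cross terms between the contraction and the drift), and using $\gamma \le 1/5$ to absorb lower-order $\gamma^2$ terms into the main $(1-\gamma B/(4n))$ coefficient, yields the stated recursion. The term $-\frac{1}{4n}\E\|\u_{t+1}-\u_t\|^2$ is retained on the right-hand side because $\|[\u_{t+1}]_i - [\u_t]_i\| = \gamma\|g_i(\w_t;\A,\A',\B_i) - [\u_t]_i\|$ appears with a favorable sign when completing the square in the contraction analysis — it is exactly the negative quadratic term one gets from $(1-\gamma)\|a\|^2 + \gamma\langle\cdots\rangle$ type manipulations, and it will later be used to cancel the matching $+\|\u_t - \u_{t+1}\|^2$ term in Lemma \ref{lem:grad_recursion}.

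The main obstacle I anticipate is carefully handling the randomness of which coordinates are in $\B$: the contraction factor $1 - \gamma B/(4n)$ is not simply $(1-\gamma)^{\text{(prob in batch)}}$, and one must be precise about sampling with replacement versus distinct indices, and about the fact that $\u_{t+1}$ on sampled coordinates is correlated with $\B$ and $\w_{t+1}$. The cleanest route is to condition on $\w_t$ and $\u_t$ first, expand $\E_\B$ of the per-coordinate errors (treating each $i \in \D$ separately and writing the indicator $\I[i\in\B]$), then bound the $g_i(\w_t) - g_i(\w_{t+1})$ drift uniformly since it does not depend on whether $i \in \B$. A secondary technical point is that $\w_{t+1} = \w_t - \eta\v_t$ depends on $\v_t$ which depends on $\B$, so the drift term $\|g_i(\w_t) - g_i(\w_{t+1})\|^2 \le C_g^2\eta^2\|\v_t\|^2$ should be pulled out after the mini-batch expectation, producing the $\frac{5n\eta^2 C_g^2}{\gamma B}\E\|\v_t\|^2$ term (the factor $n/(\gamma B)$ arising from the Young's-inequality split against the contraction). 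Everything else is routine bookkeeping with the assumptions in the second Assumption block.
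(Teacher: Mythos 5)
Your proposal is correct and is essentially the argument the paper relies on: the paper states Lemma~\ref{lem:fval_recursion} without proof (deferring to the compositional-optimization technique of \cite{qi2021stochastic,guo2021stochastic}), and your plan — per-coordinate split over $i\in\B$ versus $i\notin\B$, the convex-combination identity that produces the retained $-\tfrac{1}{4n}\E\|\u_{t+1}-\u_t\|^2$ term under $\gamma\le 1/5$, and Young's inequality against the $C_g^2\eta^2\|\v_t\|^2$ drift yielding the $n/(\gamma B)$ factor — is exactly that standard recursion. One detail to make explicit when writing it out: since $g_i(\w_t)=\E_{\A}[g(\w_t;\x_i,\A,\S_i)]$, the variance $\E\|g_i(\w_t;\A,\A',\B_i)-g_i(\w_t)\|^2$ contains an augmentation-variance contribution on top of the mini-batch variance $\sigma^2/B$, which must either be absorbed into $\sigma^2$ (using boundedness of $\exp(E(\cdot)^{\top}E(\cdot)/\tau)$ for normalized embeddings) or tracked as an additional $O(\gamma^2\epsilon^2 B/n)$ term.
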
	

By Combining Lemma 1, 2, and Lemma 3, we can prove the final theorem.  
\begin{align}\label{eq:fisco_first_eq}
	& \E\left[F(\w_{t+1}) - F^*\right] \leq \E\left[F(\w_t)- F^*\right] + \frac{\eta}{2}\E\left[\Delta_t\right] - \frac{\eta}{2}\E\left[\Norm{\nabla F(\w_t)}^2\right] - \frac{\eta}{4}\E\left[\Norm{\v_t}^2\right]\\\label{eq:fisco_second_eq}
&\E\left[\Delta_{t+1}\right] \leq (1-\beta) \E\left[\Delta_t\right] + \frac{3L_F^2\eta^2}{\beta}\E\left[\Norm{\v_t}^2\right] + 14\beta L_f^2C_g^2\E\left[\Xi_{t+1}\right] + \frac{\beta^2C}{B}+ \frac{14\beta L_f^2 C_g^2}{n}\E\left[\Norm{\u_{t+1} - \u_t}^2\right]\nonumber\\
&+ 7\beta C_g^2L_f^2\epsilon^2\\\label{eq:fisco_third_eq}
&	\E\left[\Xi_{t+1}\right] \leq \left(1-\frac{\gamma B}{4n}\right)\E\left[\Xi_{t}\right] + \frac{5n\eta^2 C_g^2\E\left[\Norm{\v_t}^2\right]}{\gamma B} + \frac{2\gamma^2\sigma^2B}{nB} - \frac{1}{4n}\E\left[\Norm{\u_{t+1} - \u_t}^2\right].
\end{align} 	
Summing \eqref{eq:fisco_first_eq}, $\frac{\eta}{\beta}\times$\eqref{eq:fisco_second_eq}, and $\frac{56L_f^2C_g^2n\eta}{\gamma B}\times$\eqref{eq:fisco_third_eq} leads to
\begin{align*}
	& \E\left[(F(\w_{t+1})- F^*) + \frac{\eta}{\beta}\Delta_{t+1} + \frac{56L_f^2C_g^2n\eta}{\gamma B}\left(1-\frac{\gamma B}{4n}\right)\Xi_{t+1}\right]\\
	& \leq  \E\left[(F(\w_t)- F^*) + \frac{\eta}{\beta}\left(1-\frac{\beta }{2}\right)\Delta_t + \frac{56L_f^2C_g^2n\eta}{\gamma B}\left(1-\frac{\gamma B}{4n}\right)\Xi_t\right] - L_f^2C_g^2\eta\left(\frac{14n}{\gamma B}-14\right)\E\left[\frac{1}{n}\Norm{\u_{t+1} - \u_t}^2\right]\\
	& \quad\quad\quad -\frac{\eta}{2}\E\left[\Norm{\nabla F(\w_t)}^2\right] - \eta\left(\frac{1}{4} - \frac{3L_F^2\eta^2}{\beta^2} - \frac{280L_f^2 n^2C_g^4\eta^2 }{\gamma^2 B^2}\right)  \E\left[\Norm{\v_t}^2\right] + \frac{\beta\eta C}{B} + \frac{112\eta\gamma L_f^2 C_g^2\sigma^2}{B}+7\eta C_g^2L_f^2\epsilon^2.
\end{align*}
If $\gamma \leq \frac{n}{B}$, we have $\frac{14n}{\gamma B}-14\geq 0$. Set $\beta=O(\min(B\epsilon^2), \frac{2}{7})\}$, $\gamma = \min\left\{O(B\epsilon^2), \frac{5n}{14B}\right\}$, and $\eta = \min\left\{\frac{\beta}{6 L_F}, \frac{\gamma B}{50L_f n C_g^2}\right\}$. Define the Lyapunov function as $\Phi_t\coloneqq (F(\w_t) - F^*) + \frac{\eta}{\beta}\Delta_t + \frac{56L_f^2 C_g^2}{B}\frac{\eta}{\gamma}\left(1-\frac{\gamma B}{4n}\right)\Xi_t$. If we initialize $\v_0=0$, we have $\E[\Delta_1]\leq 2C_{F_1}+2C_f^2 C_g^2$.  Then,
\begin{align}\label{eq:fisco_grad_norm_bound}
	\frac{1}{T}\sum_{t=1}^T \E\left[\Norm{\nabla F(\w_t)}^2\right] \leq \frac{2\Lambda_\Phi^1}{\eta T} + \frac{2\beta C}{B} + \frac{224 \gamma L_f^2 C_g^2\sigma^2}{B} + 14 C_g^2L_f^2\epsilon^2,
\end{align}
where we define $\E\left[\Phi_1\right] \leq \Delta_F + \frac{1}{6L_F}(2C_{F_1}^2+2C_f^2 C_g^2) + \frac{C\Xi_1}{n}=: \Lambda_\Phi^1$. 
After $T=O(\max(\frac{n}{B^2\epsilon^4}, \frac{1}{B\epsilon^4}))$ iterations, we have $\frac{1}{T}\sum_{t=1}^T \E\left[\Norm{\nabla F(\w_t)}^2\right] \leq O(\epsilon^2)$.

\section{Proof of Corollary~\ref{thm:3}}
The proof is similar to that of Theorem~\ref{thm:2}, except that the bound $\Delta_t$, which is shown below. 

	\begin{align*}
		& \Norm{\Delta_{t}}^2 = \Norm{\v_{t}  - \nabla F(\w_{t})}^2\\
		& = \Norm{(1-\beta)\v_{t-1} + \beta (\nabla F_1(\w_t; \B)+\frac{1}{B}\sum_{i\in\B}\nabla f(g_i(\w_t; \A, \B_i))\nabla g_i (\w_{t};\A, \B_i)) - \nabla F(\w_{t})}^2\\
		&= \left\|\underbrace{(1-\beta) (\v_{t-1} - \nabla F(\w_{t-1}))}_{A_1}+ \underbrace{(1-\beta)(\nabla F(\w_{t-1}) - \nabla F(\w_{t}))}_{A_{2}}\right.\\
		& \left. + \underbrace{\beta \left(\frac{1}{B}\sum_{i\in\B}\nabla f(g_i(\w_t; \A, \B_i))\nabla g_i (\w_{t+1};\A,\B_i) - \frac{1}{B}\sum_{i\in\B}\nabla f(g_i(\w_{t}; \A, \S_i))\nabla g_i (\w_{t};\A, \B_i)\right)}_{A_3}\right.\\
		& \left. +\underbrace{\beta \left(\nabla F_1(\w_t; \B) + \frac{1}{B}\sum_{i\in\B}\nabla f(g_i(\w_{t}; \A, \S_i))\nabla g_i (\w_{t};\A,\B_i)  - \nabla F(\w_{t}) \right)}_{A_4}\right\|^2.
	\end{align*}
	
	Note that $\E\left[\inner{A_1}{A_4}\right] = \E\left[\inner{A_2}{A_4}\right] = 0$. Then, 
\begin{align*}
&	\E_t\left[\Norm{A_1 + A_2 + A_3 + A_4}^2\right]  = \Norm{A_1}^2 + \Norm{A_2}^2 + \E_t\left[\Norm{A_3}^2\right] + \E_t\left[\Norm{A_4}^2\right] + 2\inner{A_1}{A_2} \\
	&+ 2\E_t\left[\inner{A_1}{A_3}\right] + 2\E_t\left[\inner{A_1}{A_4}\right] + 2\E_t\left[\inner{A_2}{A_3}\right]+ 2\E_t\left[\inner{A_2}{A_4}\right]+2\E_t\left[\inner{A_3}{A_4}\right]
\end{align*}
Based on Young's inequality for products, we have $2\inner{\a}{\b}\leq \frac{\Norm{\a}^2c}{2} + \frac{2\Norm{\b}^2}{c}$ for $c>0$. 
\begin{align*}
		& \E_t\left[\Norm{A_1 + A_2 + A_3 + A_4 + A_5}^2\right] \\
		&\leq (1+\beta)\Norm{A_1}^2 +C/\beta\Norm{A_2}^2 + C/\beta\E_t\left[\Norm{A_3}^2\right] +C\E_t\left[\Norm{A_4}^2\right],
\end{align*}
where $C$is a proper constant. 
We can show that $\E[\|A_3\|^2]\leq \beta^2\frac{C}{B}$ and  $\E[\|A_4\|^2]\leq \beta^2\frac{C}{B}$ for some constant $C$. Then we have
\begin{align*}
	& \E[\Norm{\Delta_{t}}^2] \leq(1-\beta) \E\left[\Norm{\Delta_{t-1}}^2\right] + \frac{C\eta^2\E\left[\Norm{\v_{t-1}}^2\right]}{\beta} + \frac{\beta C}{B}  + \frac{\beta^2 C}{B} 
\end{align*}
Combining this inequality with lemma~\ref{lem:nonconvex_starter} and with $\eta\leq O(\beta)$, we can prove an optimization error 
\begin{align*}
    \E[\frac{1}{T}\sum_t\|\nabla F(\w_t)\|^2]\leq O(\frac{1}{\eta T} + \frac{1}{\beta T} + \frac{\beta}{B} + \frac{1}{\beta} )
\end{align*}

\section{Proof of Theorem~\ref{thm:4}}
The proof is similar to that of Theorem~\ref{thm:2}, except that the bound $\Delta_t$, which is shown below. 

\begin{align*}
		& \Norm{\Delta_{t}}^2 = \Norm{\v_{t}  - \nabla F(\w_{t})}^2\\
		& = \Norm{(1-\beta)\v_{t-1} + \beta (\nabla F_1(\w_t; \B)+\frac{1}{B}\sum_{i\in\B}\nabla f([\u_{t}]_i)\nabla g_i (\w_{t};\A, \A', \B_i)) - \nabla F(\w_{t})}^2\\
		&= \left\|\underbrace{(1-\beta) (\v_{t-1} - \nabla F(\w_{t-1}))}_{A_1}+ \underbrace{(1-\beta)(\nabla F(\w_{t-1}) - \nabla F(\w_{t}))}_{A_{2}}\right.\\
		& \left. + \underbrace{\beta \left(\frac{1}{B}\sum_{i\in\B}\nabla f([\u_{t}]_i)\nabla g_i (\w_{t+1};\A,\A',\B_i) - \frac{1}{B}\sum_{i\in\B}\nabla f(g_i(\w_{t}))\nabla g_i (\w_{t};\A, \A', \B_i)\right)}_{A_4}\right.\\
		& \left. +\underbrace{\beta \left(\nabla F_1(\w_t; \B) + \frac{1}{2B}\sum_{i\in\B}\left(\nabla f(g_i(\w_{t}))\nabla g_i (\w_{t};\A,\B_i)  + \nabla f(g_i(\w_{t}))\nabla g_i (\w_{t};\A',\B_i)\right)- \nabla F(\w_{t}) \right)}_{A_5}\right\|^2.
	\end{align*}
	from which we can see that $A_3$ is gone in the proof of Theorem~\ref{thm:2}, which is the source to cause the error depends on $\epsilon$. Then we can follow the same analysis to finish the proof, which is omitted here due to that it is almost a duplicate of Theorem~\ref{thm:2}. 

\end{document}